\documentclass[twoside,11pt]{article}

%

\usepackage{jmlr2e_backup}


\newcommand{\dataset}{{\cal D}}

\usepackage[utf8]{inputenc} 
\usepackage[T1]{fontenc}    
\usepackage{hyperref}       
\usepackage{url}            
\usepackage{booktabs}       
\usepackage{amsfonts}       
\usepackage{nicefrac}       
\usepackage{microtype}      
\usepackage{amsmath}
\usepackage{bbm}
\usepackage{pifont}
\newcommand{\cmark}{\textcolor{green}{\ding{51}}}%
\newcommand{\xmark}{\textcolor{red}{\ding{55}}}%
\newcommand{\omark}{\textcolor{orange}{\ding{108}}}%
\usepackage{cleveref}
\usepackage{pgffor}
\usepackage{graphicx}
\usepackage{float}
\usepackage{subcaption}

\usepackage[utf8]{inputenc}
\usepackage[table]{xcolor}
\usepackage{array}
\usepackage{multirow}
\usepackage[usestackEOL]{stackengine}
\usepackage{titlesec}

\newcolumntype{x}[1]{>{\centering\arraybackslash}m{#1}}
\newcolumntype{y}[1]{>{\raggedright\arraybackslash}m{#1}}

\setlength{\arrayrulewidth}{2.5pt}
\setlength{\tabcolsep}{18pt}

\setlength{\tabcolsep}{0.1em}




\makeatletter
\renewcommand\paragraph{\@startsection{paragraph}{4}{\z@}%
                                     {-3.25ex\@plus -1ex \@minus -.2ex}%
                                     {1.5ex \@plus .2ex}%
                                     {\normalfont\normalsize\bfseries}}
\setcounter{secnumdepth}{4}

\titleformat{\subsubsection}
       {\normalfont\fontsize{11}{17}\bfseries}{\thesubsubsection}{1em}{}

\titleformat{\section}
       {\normalfont\fontsize{12}{17}\bfseries}{\thesection}{1em}{}

\makeatother

\usepackage{amsthm}
\theoremstyle{plain}
\newtheorem{theorem}{Theorem}[section]

\newtheorem{lemma}[theorem]{Lemma}

\newtheorem{definition}[theorem]{Definition}

\title{Generalization bounds for deep learning}

%

\author{\name Guillermo Valle-P\'erez \\
  \email guillermo.valle@dtc.ox.ac.uk \\
  \addr Department of Theoretical Physics\\
  University of Oxford\\
  \AND
  \name Ard A. Louis \\
  \email ard.louis@physics.ox.ac.uk \\
  \addr Department of Theoretical Physics\\
  University of Oxford\\
}

\editor{}

\begin{document}

\maketitle

\begin{abstract}
  Generalization in deep learning has been the topic of much recent theoretical and empirical research. Here we introduce desiderata for techniques that predict generalization errors for deep learning models in supervised learning. Such predictions should 
   1) scale correctly with data complexity; 2) scale correctly with training set size; 3) capture differences between architectures; 4) capture differences between optimization algorithms; 5) be quantitatively not too far from the true error (in particular, be non-vacuous); 6) be efficiently computable; and 7) be rigorous.  We focus on generalization error upper bounds, and  introduce a categorisation of bounds depending on assumptions on the algorithm and data. We  review a wide range of existing approaches, from classical VC dimension to recent PAC-Bayesian bounds, commenting on how well they perform against the desiderata.  
   We next use a function-based picture to derive a marginal-likelihood PAC-Bayesian bound.  This bound is, by one definition, 
   optimal up to a multiplicative constant in the asymptotic limit of large training sets, as long as the learning curve follows a power law, which is typically found in practice for deep learning problems.  Extensive empirical analysis   demonstrates that our marginal-likelihood PAC-Bayes bound fulfills desiderata 1-3 and 5. The results for 6 and 7 are promising, but not yet fully conclusive, while only desideratum 4 is currently beyond the scope of our bound.  Finally, we comment on why this function-based bound performs significantly better than current parameter-based PAC-Bayes bounds.
\end{abstract}

\section{Introduction}

While deep learning can provide remarkable empirical performance in many machine learning tasks,  a theoretical account of  this success remains elusive.  One of the main metrics of performance in machine learning is generalization, or the ability to predict correctly on new unseen data. Statistical learning theory offers a theoretical framework to understand generalization, with the main tool being the derivation of upper bounds on the generalization error.
However, it is now widely acknowledged  that ``classical'' approaches in learning theory, based on worst-case analyses, are not sufficient to explain generalization of deep learning models, see e.g.~\citet{bartlett1998sample,zhang2016understanding}. 
This observation has spurred a large amount of work on applying and developing other learning theory approaches to deep learning, some of which are showing promising results. However, current approaches are still lacking in many ways: generalization error bounds are often vacuous or offer little explanatory power by showing trends opposite to the true error as one varies important quantities. For example, \citet{nagarajan2019uniform} show that some common spectral norm-based bound predictions increase with increasing training set size, while the measured error decreases.   Moreover, many bounds are published without being sufficiently tested, so that it can be hard to assess their strengths and weaknesses. 

There is therefore a need for a more systematic approach to studying generalization bounds.  Two important recent studies have taken up this challenge.  Firstly, \citet{jiang2019fantastic} preformed extensive empirical tests of the predictive power of over $40$ different generalization measures for a range of commonly used hyperparameters.  
They mainly studied flatness and norm-based measures, and compared them by reporting an average of the performance.
In their comparisons, they focused mostly on training hyperparameters.  Architectures were only compared by changing width and depth of an architecture resembling the Network-in-Network \citep{gao2011robustness}, and they did not systematically study  variation on the data (dataset complexity or training set size). 

Secondly, in a follow-up study \citet{dziugaite2020search}, argue that average performance is not a sufficient metric of generalization performance.   They empirically demonstrate that  generalization measures can exhibit correct correlations for certain parameter changes and seeds, but badly fail for others.  In addition to some of the hyperparameter changes studied by  \citet{jiang2019fantastic}, they also consider the effect of changing dataset complexity and training set size.

In this paper we also perform a systematic analysis of generalization bounds.  We first describe, 
in \cref{desiderata}, a set of seven qualitatively different desiderata 
which, we argue, should  be used to guide the kind of experiments\footnote{Here, `experiment' can be interpreted in the formal sense which \citet{dziugaite2020search} introduce} needed to fully explore the quality of the predictions of a theory of the generalization.
These include four desiderata for making correct predictions when varying architecture, dataset complexity, dataset size, or optimizer choice, as well as three further desiderata, which describe the importance of quantitative agreement,  computational efficiency  and theoretical rigour.    Our set of desiderata is broader than those used in previous tests of generalization theories, and will, we hope, lead to a better understanding of the strengths and weaknesses of competing approaches.

Rather than empirically testing the many bounds that can be found in the literature,  we take a more general theoretical approach here.  We first, in \cref{generalization_theory},
present an overall framework for classifying the many different approaches to deriving frequentist bounds of the generalization error. This framework is then used to systematically organize the discussion of the different approaches in the literature.
For each approach we review existing results in the literature and compare performance for the seven desiderata we propose. Such an analysis allows us to draw some more general conclusions about the kinds of strategies for deriving generalization bounds that may be most successful. In this context we also note, that for some proposed bounds, there is not enough empirical evidence in the literature to determine whether or not they satisfy our desiderata. This lacuna highlights the importance of the proposals put forward by \citet{jiang2019fantastic} and \citet{dziugaite2020search} of large scale empirical studies, which we also advocate for in this paper.

Inspired by our analysis of existing bounds we present in \cref{pac_bayes_bound} a high-probability version of the realizable PAC-Bayes bound introduced by \citet{mcallester1998some} and applied to deep learning in \citet{valle2018deep}.
Because our bound, which is derived from a function-based picture,  is directly proportional to the Bayesian evidence or marginal likelihood of the model, we call it the \emph{marginal likelihood PAC-Bayes bound}. 
While our bound has frequentist origins,   the connection between marginal likelihood and generalization is an important theme in Bayesian analyses, and can be traced back to the work of MacKay and Neal \citep{mackay1992practical,neal1994priors,mackay2003information}, as well as the early work on PAC-Bayes \citep{shawe1997pac,mcallester1998some}.  See also an important recent discussion of this connection in \citet{wilson2020bayesian}.

Recent large-scale empirical work has shown that learning curves for deep-learning have an extensive power-law regime~\citep{hestness2017deep,spigler2019asymptotic,kaplan2020scaling,henighan2020scaling}. 
 The exponents in the power-law depend  mainly on the data rather than on architectures, with smaller exponents for more complex datasets.   Under the assumption of a power-law asymptotic behaviour with training set size, we are able to prove that our marginal-likelihood PAC-Bayes bound is asymptotically optimal up to a constant in the limit of large training sets.  
 
In order to test our marginal-likelihood PAC-Bayes bound against the full set of desiderata, we perform over 1,000  experiments  comparing 19 different architectures, ranging from simple fully connected networks (FCNs) and convolutional neural networks (CNNs) to more sophisticated architectures such as Resnet50, Densenet121 and Mobilenetv2. We are thus including variations in architecture hyperparameters such as pooling type, the  number of layers, skip connections, etc\ldots.  These networks are  tested on
5 different computer vision datasets of varying complexity.    For each of these 95 architecture/data combinations, we train on a range of training set sizes to study learning curves.   We additionally study the effect of label corruption on two of the data sets, MNIST and CIFAR-10.   Our bound provides non-vacuous  generalization bounds, which are also tight enough to provide a reasonably good quantitative guide to generalization.  It also does remarkably well at predicting qualitative trends in the generalization error as a function of data complexity, architecture (including, for example, differences between average and max pooling), and training set size. In particular,  we find that we can estimate the value of the learning curve power law exponent for different datasets.  

In our concluding section, we argue that the use of a function based perspective is a key to understanding the relatively good performance of our marginal likelihood bound across so many desiderata. This approach contrasts with many other PAC-Bayesian bounds in the deep learning theory literature,  which use distributions in parameter space, mostly as a way to measure flatness.  We also discuss potential weaknesses of our approach, such as its inability to capture the effect of DNN width or of optimiser choice (but see~\citet{mingard2020sgd}).  
Finally, we  argue that capturing trends in generalization when  architecture and data are varied is not only theoretically interesting, but also important for many applications including architecture search and data augmentation.

\section{Desiderata for predictive theories of generalization error}
\label{desiderata}

The fundamental question of what constitutes a ``good'' theory of generalization in deep learning is a profound and contested one. \citet{jiang2019fantastic} and \citet{dziugaite2020search} both consider this question and suggest that a good theory should capture the causal mechanisms behind generalization. \citet{jiang2019fantastic} use a measure of conditional independence to estimate the causal relation between different complexity measure and generalization. \citet{dziugaite2020search} consider a stronger notion that tries to capture whether a theory predicts well the generalization error over \emph{all} possible experimental interventions. To formalize this notion they look at distributional robustness which quantifies the worst-case predictive performance of a theory.

Here we take a slightly different approach to this question by defining a seven key desiderata that a ``good'' theory of generalization should satisfy.
This set is broader than that considered by \citet{jiang2019fantastic} and \citet{dziugaite2020search} in that it considers desiderata beyond simply the predictive performance.

Furthermore, rather than focusing only on the average case or worst-case performance of a predictive theory, we argue that a precise formulation of the performance of a generalization measure necessarily depends on the application. Therefore in our discussions in \cref{comparing_bounds}, and in the extensive experiments in \cref{experimental_results}, we aim to paint a fine-grained picture of how well the theory predicts generalization for different experimental settings, and how it fares at other important desiderata.

 We focus our attention on generalization error upper bounds, but the same ideas could apply to other types of theories that aim to predict generalization error (for example, those based on Bayesian assumptions on the data distribution).  
 
 The seven desiderata which we propose a good predictive theory of generalization for deep learning should aim to satisfy are listed below:

\begin{description}

    \item [\textbf{D.1}]\textbf{data complexity:} The predicted error should correctly scale with data complexity. In other words, the predicted error should correlate well with the true error when the dataset is changed. For example, a fixed DNN, for a fixed training set size, will typically have higher error for CIFAR10 than for MNIST, and an even higher error for a label-corrupted dataset. The bound should capture such differences.
    
    \item [\textbf{D.2}]\textbf{training set size:} The predicted error should correctly scale with training set size. That is, the predicted error should follow the same trend as the true error as the number of training examples increases. For example, it has been found  empirically that the generalization error often follows a power law decay with training set size with an exponent which depends on the dataset, but not strongly on the architecture
    \citep{hestness2017deep,novak2019neural,rosenfeld2019constructive,kaplan2020scaling}. 
    
    \item [\textbf{D.3}]\textbf{architectures:} The predicted error should capture differences in generalization between architectures. Different architectures can display significant variation in generalization performance. For example, CNNs with pooling tend to generalize better than CNNs without pooling on image classification tasks. The predicted error should aim to predict these differences. Furthermore, one of the more puzzling properties of  DNNs is that the performance depends only weakly on the number of parameters used for an architecture,  provided the system is large enough~\citep{belkin2019reconciling,nakkiran2019deep}.   Since this question of why DNNs generalize so well in the overparameterised regime is one of the central questions in the theory of DNNs, it is particularly important that a predicted error can reproduce this  relative insensitivity to the number of parameters. 
    
    \item [\textbf{D.4}]\textbf{optimization algorithms:} The predicted error should capture differences in generalization between different optimization algorithms. Different optimization algorithms used in training DNNs, as well as different choices of training hyperparameters such as SGD batch size and learning rates, or regularization techniques, can lead to differences in generalization, which the theory should aim to predict.

    \item [\textbf{D.5}]\textbf{non-vacuous:} The predicted error should be quantitatively close to the true error. For generalization error upper bounds, it is commonly required that the error is less than $100\%$, as any upper bound higher than that is satisfied by definition and thus can be considered ``vacuous''. Beyond this requirement, the bound should aim to be as close as possible to the true error (a property often referred to as being a \emph{tight} bound).
    
    \item [\textbf{D.6}]\textbf{efficiently computable:} The predicted error should be efficiently computable. This requirement is particularly important if the prediction or bound is to be useful in practical applications (for example, in architecture search). It is of little use having a bound that cannot in practice be calculated.
    
    \item [\textbf{D.7}]\textbf{rigorous:} The prediction should be rigorous. For the case of upper bounds, this means that the bound comes with a theorem that guarantees its validity under a well-specified set of assumptions, which should be met on the domain where it is applied. In practice, theoretical results are often applied to domains where the assumptions aren't known to hold (or are known \emph{not} to hold), but rigorous guarantees offer the highest level of confidence on one's predictions, and should be aimed for whenever possible.
\end{description}

A theory of generalization error should aim to satisfy as many of these desiderata as possible. However, the trade-offs that one is willing to make depend on the domain of application of the theory. For example, a rigorous proof may be more valuable for theoretical insight than for a practical application, where computational efficiency may be more important. As another example, for an application to neural architecture search one may care more about the scaling with architecture, while for an application to decision making for data collection one may care more about correct scaling with training set size.

\section{Classifying generalization bounds for deep learning}\label{generalization_theory}

In this section, we provide an overview of existing approaches to predict generalization error.   After some preliminary sections  describing notation and definitions we present a general taxonomy which classifies generalization error upper bounds based on the key assumptions they make.

\subsection{Notation for the supervised learning problem}\label{notation}

Supervised learning deals with the problem of predicting outputs from inputs, given a set of example input-output pairs. The inputs live in an input domain $\mathcal{X}$, and the outputs belong to an output space $\mathcal{Y}$. We will mostly focus on the binary classification setting where $\mathcal{Y} = \{0,1\}$\footnote{See \cref{binary_vs_multiclass} for a brief comment on extensions to the multi-class setting for the generalization bounds we study here}. We assume that there is a \emph{data distribution} $\mathcal{D}$ on the set of input-output pairs $\mathcal{X}\times\mathcal{Y}$. The \emph{training set} $S$ is a sample of $m$ input-output pairs sampled i.i.d. from $\mathcal{D}$, $S=\{(x_i,y_i)\}_{i=1}^m \sim \mathcal{D}^m$, where $x_i \in \mathcal{X}$ and $y_i \in \mathcal{Y}$.
We will refer to the data distribution as \emph{noiseless} if it can be factored as $\mathcal{D}((x,y)) = \mathcal{D}_X(x)\mathcal{D}_{Y|X}(y|x)$ where $\mathcal{D}_{Y|X}$ is deterministic, that is it has support on a single value in $y=f(x)$. In this case $f$ is called the \emph{target function}.

We define a \emph{loss function} $L: \mathcal{Y}\times\mathcal{Y} \to \mathbb{R}$, which measures how ``well'' a prediction $\hat{y}\in \mathcal{Y}$ matches an observed output $y\in \mathcal{Y}$, by assigning to it a score $L(\hat{y}, y)$ which is high when they don't match.  In supervised learning, we define a \emph{hypothesis}\footnote{This is sometimes also called a predictor, or classifier, in the context of supervised learning} $h$ as a function from inputs to outputs $h: \mathcal{X} \to \mathcal{Y}$ and the \emph{risk} $R$ of a hypothesis as the expected value of the loss of the predicted outputs on new samples $R(h)=\mathbf{E}_{(x,y)\sim\mathcal{D}}[L(h(x),y)]$. For the case of classification, where $\mathcal{Y}$ is a discrete set, we focus on the \emph{0-1 loss} function (also called \emph{classification loss}) $L(\hat{y}, y) = \mathbbm{1}[\hat{y}\neq y]$, where $\mathbbm{1}$ is the indicator function. We define the \emph{generalization error} $\epsilon$ as the expected risk using this loss, which equals the probability of misclassification.
\begin{equation}
    \epsilon(h) = \mathbf{P}_{(x,y)\sim \mathcal{D}}[h(x) \neq y]
\end{equation}
This is the central quantity which we study here. 
We also define the \emph{training error} $\hat{\epsilon}(h,S)=\frac{1}{m}\sum_{(x,y)\in S}\mathbbm{1}[h(x)\neq y]$.  For a more general loss function, we also define the \emph{empirical risk}  $\widehat{R}(h,S)=\frac{1}{m}\sum_{(x,y)\in S}L(h(x),y)$, as the empirical average of the loss function over the training set.  To simplify notation, we often simply write $\widehat{R}(h)$ and $\hat{\epsilon}(h)$ for $\widehat{R}(h,S)$ and $\hat{\epsilon}(h,S)$, respectively.

Finally, we define a \emph{learning algorithm} to be a mapping $\mathcal{A}: (\mathcal{X}\times\mathcal{Y})^* \to \mathcal{Y}^\mathcal{X}$ from training sets of any size to hypothesis functions. For simplicity, we mainly describe deterministic hypotheses and learning algorithms, but most results should be easily generalizable to stochastic versions. A stochastic learning algorithm will map training sets to a probability distribution over hypothesiss, while a stochastic hypothesis maps inputs in $\mathcal{X}$ to proability distributions over outputs in $\mathcal{Y}$. For PAC-Bayes we will consider stochastic learning algorithms.

The supervised learning problem consists of finding a learning algorithm $\mathcal{A}$ that produces low generalization error $\epsilon(h)$ (in expectation, or with high probability), under certain well defined assumptions (including none)  on the data distribution $\mathcal{D}$.

\subsection{General PAC learning framework}

The modern theory of statistical learning theory originated with Leslie Valiant's probably approximately correct (PAC) framework \citep{valiant1984theory}, and Vapnik and Chervonenkis' uniform convergence analysis \citep{vapnik1968uniform,vapnik1974theory,vapnik1995nature}.   
Here we describe a general frequentist formulation of generalization error bounds of the type studied in supervised learning theory. 
Although the original PAC framework  included the condition of computational efficiency of the learning algorithm in its definition, the term is used more widely now \citep{guedj2019primer}. In the most general case, the PAC learning framework we present here proves confidence bounds that establish a relationship between observed quantities (derived from the training set) and the unobserved generalization error $\epsilon(h)$, which is the real quantity of interest. Specifically we consider bounds on  $D$, a function measuring the difference between generalization and training error (which we refer to as the \emph{generalization gap}),  which state that, under some (or no) assumptions on the data distribution $\mathcal{D}$ and algorithm $\mathcal{A}$,  the following bound holds:
\begin{equation}
    \label{general-pac}
   \forall \mathcal{D},~\mathbf{P}_{S\sim \mathcal{D}^m}[D(\epsilon(\mathcal{A}(S)), \hat{\epsilon}(\mathcal{A}(S))) \leq \frac{f_\mathcal{A}(S)}{m^\alpha}] \geq 1-\delta
\end{equation}
with probability at least  $1-\delta \in [0,1]$. Here the probability $\mathbf{P}_{S\sim \mathcal{D}^m}$ is for the event inside the square brackets when $S$ is sampled from $\mathcal{D}^m$. $f_\mathcal{A}$ is a function which, following common practice in the literature, we will call \emph{capacity}. The capacity  measures some notion of the ``complexity'' of the algorithm, the data, or both. $\delta$ is called the confidence parameter and measures the probability with which the bound may fail because of getting an ``unusual'' training set. Finally,  the exponent $0 < \alpha \leq 1$.  The most common measure of generalization gap $D$ is the absolute difference between the generalization and training error, but in some general PAC-Bayesian analyses $D$ may be any convex function \citep{rivasplata2020pac}. The capacity $f_\mathcal{A}$ can also take many forms. Some common examples include the VC dimension (section \ref{vc_dimension}) and the KL divergence of a posterior and prior for PAC-Bayes bounds (section \ref{srm_bounds}). Sometimes the dependence on training set is fully absorbed into $f_\mathcal{A}$ so that $1/m^\alpha$ is omitted.

We also distinguish two types of bounds
\begin{itemize}
    \item \emph{generalization gap bounds} are bounds on $D(\epsilon(\mathcal{A}(S)), \hat{\epsilon}(\mathcal{A}(S))) = |\epsilon(h)-\hat{\epsilon}(h)|$ or some other measure of discrepancy $D$ between the generalization and training error
    \item \emph{generalization error bounds} are bounds on a function of $\epsilon(h)$ alone.
\end{itemize}
Note that a generalization gap bound immediately implies a generalization error bound, but not necessarily vice versa. If $\hat{\epsilon}(h)=0$ (realizability assumption), the distinction between these bounds doesn't exist.

Finally, in order to simplify notation, we typically omit dependence of the capacity on  the parameter $\delta$ throughout the paper.

\subsection{A classification of generalization bounds}

In this section we classify the main types of bounds which are possible under the general PAC framework described above, according to the different assumptions they make on the data distribution or algorithm, and on the different quantities the capacity $f_\mathcal{A}(S)$ may depend on.

\subsubsection{According to assumptions on the data $\mathcal{D}$}

The PAC framework is characterized by bounds which make as few assumptions about the data distribution $\mathcal{D}$ as possible. There are two main approaches: agnostic and realizable bounds.
\begin{itemize}
  
\item \textbf{Agnostic bounds}. For \emph{agnostic} or \emph{distribution-free} bounds, no assumption is made on the data distribution. In this case, the No Free Lunch theorem \citep{wolpert1994relationship} implies that we can't guarantee a small generalization error, but may still be able to guarantee a small difference between the training and generalization error (generalization gap), or a small generalization error for some training sets (for data-dependent bounds).

\item \textbf{Realizable bounds}. For \emph{realizable} bounds, the data distribution and the algorithm are assumed to be such that $\hat{\epsilon}(\mathcal{A}(S))=0$ for any $S$ which has non-zero probability. This is saying that the algorithm is always able to fit the data perfectly. Note that this is a combined assumption about the algorithm (for instance the expressivity of its hypothesis class) and the data distribution. If the algorithm is \emph{fully expressive} (able to express any function), and minimizes the training error (it belongs to the class of \emph{empirical risk minimization} (ERM) algorithms), then this condition doesn't put any constraint on $\mathcal{D}$ beyond being noiseless.

For the realizable case, bounds usually take the form
\begin{equation}
    \label{general-realizable-pac}
   \forall \mathcal{D},~\mathbf{P}_{S\sim \mathcal{D}^m}\left[\epsilon(\mathcal{A}(S)) \leq \frac{f_\mathcal{A}(S)}{m}\right] \geq 1-\delta
\end{equation}
where we have omitted (without loss of generality), the exponent $\alpha$, because realizable bounds have $\alpha=1$ in most cases (The reason for this exponent comes fundamentally from the non-Gaussian behaviour of the tail of the binomial distribution when the mean is close to zero~\citep{langford2005tutorial}.).  
  
\end{itemize}
Statistical learning theory has focused on these two classes of assumptions, because they are considered to be minimal, and cover most cases of interest. However, other assumptions on $\mathcal{D}$ are sometimes used in more advanced analyses, typically involving data-dependent and non-uniform bounds (see below). 

In this context it is interesting to compare a frequentist to a Bayesian approach to assumptions on the  data.  In the former case, one only assumes that $\mathcal{D}$ belongs to a restricted  set of possible data distributions. This approach  naturally leads to to studying the worst case generalization over that set.   In a Bayesian approach, one assumes a prior over data distributions, and then studies the typical or average case generalization.  Most of bounds we will describe here use the frequentist approach, but there have been some interesting results using a Bayesian prior over data distributions, see  \cref{average_generalization_error}. 

\subsubsection{According to assumptions on the algorithm $\mathcal{A}$}

As is the case for $\mathcal{D}$, supervised learning theory often makes very minimal assumptions on the learning algorithm $\mathcal{A}$, though there are also a rich set of algorithm-dependent analyses. The minimal assumptions made on $\mathcal{A}$ is that it outputs hypotheses within a set of hypothesiss called the \emph{hypothesis class} $\mathcal{H}$. As this is the minimal assumption often made, we will refer to it as ``algorithm-independent'' to distinguish it from the approaches which make stronger assumptions. We thus classify bounds on the following two classes.
\begin{itemize}
    \item 
\textbf{Algorithm-independent bounds}. These bounds only assume that $\mathcal{A}(S)\in\mathcal{H}$, for all $S$\footnote{or for all $S$ within a set of $S$ which have support $1-\delta$, as in \citet{nagarajan2019uniform}}. Furthermore, the capacity $f_\mathcal{A}(S)$ can only depend on $h=\mathcal{A}(S)$ and $S$, and not in a general way on $\mathcal{A}$\footnote{We include this constraint because otherwise there wouldn't be any useful distinction with algorithm-dependent bounds. We could just take a set of algorithm-dependent bounds that cover all algorithms with outputs in $\mathcal{H}$, and make an ``algorithm-independent bound''. We choose the definition to avoid this possibility.}. This definition means that bounds in this class must bound the generalization gap or the generalization error\footnote{Under these definitions, realizable bounds should technically be algorithm-dependent, because they assume something beyond the hypothesis class of the algorithm -- they assume that the algorithm is ERM. However, we will still refer to bounds that only add this extra assumption as algorithm dependent, as the ERM assumption may be considered as being relatively weak. However, this distinction becomes important when discussing some subtleties, like those in \cref{non_uniform_algor_dep}.} for the worst case hypothesis $h$ from a hypothesis class $\mathcal{H}$. We can therefore write bounds in this class in the following way
 
\begin{equation}
\label{nonuniform-convergence}
   \forall \mathcal{D},~\mathbf{P}_{S\sim \mathcal{D}^m}\left[\forall h\in\mathcal{H},~D(\epsilon(h),\hat{\epsilon}(h)) \leq \frac{f(S,h)}{m^\alpha}\right] \geq 1-\delta
\end{equation}

Algorithm-independent bounds are commonly classified in two classes, according to the dependence of the capacity $f(S,h)$ on $h$:
\begin{itemize}
    \item \textbf{Uniform convergence bounds} (or \emph{uniform bounds} for short) are algorithm-independent bounds where the capacity is independent of $h$. This includes VC dimension bounds (\cref{vc_dimension}) and Rademacher complexity bounds (\cref{rademacher_complexity}).
    Here the nomenclature "uniform" means the value of the bound is independent of the hypothesis.
    
    \item \textbf{Non-uniform convergence bounds} (or \emph{non-uniform bounds} for short) are algorithm-independent bounds where the capacity depends on $h$. Common examples include bounds for structural risk minimization (section \ref{srm_bounds}).
\end{itemize}

\item 
\textbf{Algorithm-dependent bounds}. This type of bounds generalizes the above class of bounds by considering stronger, or more general, assumptions on $\mathcal{A}$, as well as more general dependence of the capacity on $\mathcal{A}$. We can express this general class of bounds as
\begin{equation}
    \label{nonuniform-convergence2}
   \forall \mathcal{A}\in\mathfrak{A}, ~ \forall \mathcal{D},~\mathbf{P}_{S\sim \mathcal{D}^m}\left[D(\epsilon(\mathcal{A}(S))), \hat{\epsilon}(\mathcal{A}(S))) \leq \frac{f_\mathcal{A}(S)}{m^\alpha}\right] \geq 1-\delta
\end{equation}
where $\mathfrak{A}$ is a set of algorithms that represents our assumptions on $\mathcal{A}$. An example of this class of bounds are stability-based bounds (\cref{stability_bounds}), which rely on the assumption that the algorithm's output $\mathcal{A}(S)$ doesn't depend strongly on $S$. The case where $\mathfrak{A}=\{\mathcal{A}\}$ corresponds to the analysis of a particular algorithm. We will refer to this special case as \emph{algorithm-specific bounds}. However, in almost all cases, analyses of particular algorithms rely only on a subset of the properties of the algorithm, so that the results often apply more generally and are not restricted to a specific algorithm.

\end{itemize} 

\subsubsection{According to dependence of capacity on training set $S$}

Another  classification on bounds is according to the dependence of the capacity $f$ on the training dataset $S$. For any of the classes of bounds discussed above, we further distinguish the following two classes of bounds.
\begin{itemize}
    \item 
\textbf{Data-independent bounds} are bounds in which the capacity does not depend on $S$. Common examples include VC dimension bounds, and the simple (algorithm-independent) bounds based on SRM (e.g.\ \cref{srm1}, and \cref{srm2}).

\item 
\textbf{Data-dependent bounds} are bounds in which the capacity depends on $S$. Common examples are Rademacher complexity bounds. The PAC-Bayes bounds for Bayesian algorithms (such as the one we present in \cref{pac_bayes_theorem}) is another example.
\end{itemize}

Note that in this paper we do not consider explicitly data-distribution-dependent bounds, because we only consider bounds that depend on quantities derivable from $S$. However, we will consider the behaviour of bounds under different assumptions on the data distribution $\mathcal{D}$. In particular, we could look at the expected value of the bound, or its whole distribution, for different $\mathcal{D}$. This is the sense in which we will discuss ``data distribution dependence'' in this paper. Note that only data-dependent bounds, which depend on $S$, can depend on the data distribution in this sense.

\subsubsection{Further comments on non-uniform bounds and algorithm-dependent bounds}\label{non_uniform_algor_dep}

A common way to derive algorithm-dependent bounds is to start with non-uniform convergence bounds, and then make further assumptions on the algorithm which restrict the set of $h$ that may be returned for a given $S$. We can then use the maximum value of the bound among those $h$ as an algorithm-dependent bound valid for algorithms which satisfy the assumptions. Note that this makes the bound automatically data-dependent. This can be expressed by bounds of the form
\begin{equation}
\label{nonuniform-convergence-algorithm-dependent}
   \forall \mathcal{D},~\mathbf{P}_{S\sim \mathcal{D}^m}\left[\forall h\in\mathcal{H}(S),~D(\epsilon(h),\hat{\epsilon}(h)) \leq \frac{f(S,h)}{m^\alpha}\right] \geq 1-\delta
\end{equation}
where $\mathcal{H}(S)$ is a set which includes $\mathcal{A}(S)$ for the class of algorithms $\mathfrak{A}$ which we are considering. This is what is done, for example, for margin bounds (section \ref{margin_bounds}) for max-margin classifiers, like SVMs, where we assume the algorithm will output an $h$ which maximizes margin, and then plug that condition into a non-uniform margin bound (by setting $\mathcal{H}(S)$ to be the set of max margin classifiers for $S$).

Non-uniform bounds are often designed with particular assumptions on the algorithm and dataset in mind. This is because the value of non-uniform bounds depends on both $\mathcal{A}$ and $\mathcal{D}$. This in turn means that the notion of optimality for non-uniform (and algorithm and data-dependent bounds in general) should also depend on $\mathcal{A}$ and $\mathcal{D}$ as argued in \cref{optimality_nonuniform}.

In the applications to deep learning theory we study, non-uniform convergence bounds are only used as a way to obtain algorithm-dependent bounds, though we still present the fundamental non-uniform bounds in section \ref{srm_bounds} as useful background for the algorithm-dependent bounds based on them.

In a recent influential paper, \citet{nagarajan2019uniform} showed that for SGD-trained networks, the tightest\footnote{``Tightest'' in their analysis refers to the fact that their bound is specific for the particular algorithm they study (SGD-trained DNNs), and the particular dataset (which was synthetically constructed)} double-sided bounds based on uniform convergence give loose bounds for certain families of data distributions. In appendix G.4 they extend this result to include all algorithm-dependent bounds for which the capacity is only allowed to depend on the output of the algorithm $\mathcal{A}(S)$ and has no other data-dependence beyond this, while in Appendix J, they show that these limitations also apply to standard deterministic PAC-Bayes bounds based on the general KL-divergence-based PAC-Bayes bound (\cref{pac_bayes_general}). The intuition behind these results is that this kind of bounds can encode little information about the algorithm beyond the hypothesis class, as they can not explicitly capture the dependence of $\mathcal{A}(S)$ on $S$.

Most algorithm-dependent bounds derived from non-uniform bounds that we will study are based on data-independent non-uniform bounds. This fact automatically makes the algorithm-dependent bounds have a capacity that only depends on $\mathcal{A}(S)$, and so they would suffer from the limitations pointed out in \citet{nagarajan2019uniform}. Inspired by this result, we chose to classify algorithm-dependent bounds into those that are based on non-uniform convergence and those that are not.

It is also worth noting that there are several different ways to ``get around'' the limitations in \citet{nagarajan2019uniform}. 1) The analysis of \citet{nagarajan2019uniform} finds a family of distributions where the class of bounds we discuss above fail. However, it doesn't discard the possibility that these bounds may give tight predictions for other data distributions. 2) We can allow an algorithm-dependent bound to depend on the data in other ways than via $h$  (e.g.\ by having explicit dependence on $S$ as in \citet{shawe1998structural,shawe1997pac}) 3) \citet{negrea2019defense} showed that one can still seize uniform convergence for the distributions that \citet{nagarajan2019uniform} study by bounding the risk difference between $\mathcal{A}(S)$ and an $h$ in a class for which uniform convergence gives tight bounds. 4) We can consider non-double-sided bounds.  Bounds derived from an analysis assuming realizability can satisfy 2) and 4). They can satisfy 2) because they only guarantee convergence on a hypothesis class which depends on the data (the set of $h\in\mathcal{H}$ with $0$ error). Bounds considered by \citet{nagarajan2019uniform}, even if they can depend on $h$, are worst-case over training sets. However, an analysis which assumes realizability can bound the generalization gap only for those training sets $S$ where $\hat{\epsilon}(h)=0$. Realizable bounds can satisfy 4) because they can use the one-sided version of the Chernoff bound for $0$ mean \citep{langford2005tutorial}, as can be seen for example in the Corollary 2.3 in \citet{shalev2014understanding}. Note that realisable bounds derived from agnostic bounds (by setting $\hat{\epsilon}(h)=0$) will still suffer from the limitations that \citet{nagarajan2019uniform} point out, because agnostic bounds themselves do not satisfy the conditions above. Therefore, only bounds which take full advantage of the realizable assumption may avoid the limitations in \citet{nagarajan2019uniform}.

We note that the marginal likelihood PAC-Bayes bound we present in \cref{pac_bayes_bound} is based on a realizable analysis using one-sided bounds and thus avoids the limitations of double-sided bounds discussed above.  In fact, our bound holds with high probability over the Bayesian posterior, rather than universally over a whole family of hypothesis-dependent posteriors, as usually considered by deterministic PAC-Bayes bounds \citep{nagarajan2019uniform}.

\subsubsection{Overview of bounds}

\begin{table}[H]
\begin{center}
\begin{tabular}{ |p{0.06\linewidth}|y{0.22\linewidth}!{\vrule width 1pt}y{0.22\linewidth}!{\vrule width 1pt}y{0.22\linewidth}!{\vrule width 1pt}y{0.22\linewidth}| }
\cline{2-5}
\multicolumn{1}{c|}{}&\multicolumn{2}{c!{\vrule width 1pt}}{\Centerstack{Algorithm-independent\\ (section \ref{algorithm-independent-bounds})}} & \multicolumn{2}{c|}{\Centerstack{Algorithm-dependent\\ (section \ref{algorithm-dependent-bounds})}}\\
\cline{2-5}
\multicolumn{1}{c|}{}& \multicolumn{1}{x{0.22\linewidth}!{\vrule width 1pt}}{Based on uniform convergence}& \multicolumn{2}{c!{\vrule width 1pt}}{\Centerstack{Based on non-uniform\\ convergence}} & \multicolumn{1}{x{0.22\linewidth}|}{\Centerstack{Other}}\\
\hline
{\small \raisebox{-0.2\normalbaselineskip}[0pt][0pt]{\rotatebox[origin=c]{90}{\shortstack{Data-\\ independent}}}}& VC dimension bound\textsuperscript{*} (section \ref{vc_dimension}) & SRM-based bounds\textsuperscript{$\dagger$} (section \ref{srm_bounds})& \multicolumn{1}{c!{\vrule width 1pt}}{-} & uniform stability bounds\textsuperscript{$\ddagger$} and compression bounds\textsuperscript{$\mathsection$} (section \ref{stability_bounds})\\
\noalign{{\hrule height 1pt}}
\multicolumn{1}{|c|}{{\small \raisebox{0\normalbaselineskip}[0pt][0pt]{\rotatebox[origin=c]{90}{\shortstack{Data-dependent}}}}}& Rademacher complexity bound\textsuperscript{$\mathparagraph$} (section \ref{rademacher_complexity})& data-dependent SRM-based bounds\textsuperscript{**} (section \ref{srm_bounds}) &{margin bounds\textsuperscript{$\dagger\dagger$} (\ref{margin_bounds}),\newline sensitivity-based bounds\textsuperscript{$\ddagger\ddagger$} (section \ref{sensitivity_bounds}),\newline NTK-based bounds\textsuperscript{$\mathsection\mathsection$} (section \ref{bounds_ntk}), \newline other PAC-Bayes bounds\textsuperscript{$\mathparagraph\mathparagraph$} (section \ref{other_pac_bayes})} & {non-uniform stability bounds\textsuperscript{***} (section \ref{stability_bounds}),\newline marginal-likelihood PAC-Bayes bound\textsuperscript{$\dagger\dagger\dagger$} (\cref{pac_bayes_bound})}\\
\hline
\end{tabular}
\end{center}
\caption{Classification of the main types of generalization bounds treated in this paper. Roughly speaking, the number of assumptions grows going from left to right, and from top to bottom. Note that, as we discussed in \cref{non_uniform_algor_dep}, algorithm dependent bounds based on non-uniform convergence are automatically data-dependent, which is why there is an empty cell.\\
\small\textsuperscript{*}\citet{vapnik1974theory,blumer1989learnability,bartlett2017nearly}\\
\small\textsuperscript{$\dagger$}\citet{vapnik1995nature,mcallester1998some}\\
\small\textsuperscript{$\ddagger$}\citet{bousquet2002stability,pmlr-v48-hardt16,mou2018generalization}\\
\small\textsuperscript{$\mathsection$}\citet{littlestone1986relating,brutzkus2018sgd}\\
\small\textsuperscript{$\mathparagraph$}\citet{bartlett2002rademacher}\\
\small\textsuperscript{**}\citet{shawe1998structural,shawe1997pac}\\
\small\textsuperscript{$\dagger\dagger$}\citet{bartlett1997valid, bartlett1998sample,bartlett2017spectrally,neyshabur2018a,golowich2017size,neyshabur2018towards,barron2019complexity}\\
\small\textsuperscript{$\ddagger\ddagger$}\citet{neyshabur2017exploring,dziugaite2017computing,arora2018stronger,banerjee2020randomized}\\
\small\textsuperscript{$\mathsection\mathsection$}\citet{arora2019fine,cao2019generalization}\\
\small\textsuperscript{$\mathparagraph\mathparagraph$}\citet{zhou2018non,dziugaite2018data}\\
\small\textsuperscript{***}\citet{kuzborskij2017data}\\
\small\textsuperscript{$\dagger\dagger\dagger$}\citet{valle2018deep}
}
\label{tab:multicol}
\end{table}

In the next sections of this paper, we will describe the major families of generalization error bounds that have been applied to DNNs.  While we don't claim that the list is exhaustive, we tried to cover all the major approaches to generalization bounds. 

In \cref{tab:multicol} we present a high-level overview of where different general classes of bounds found in the literature fit within the classification introduced above. It also lists which bounds we treat explicitly in the rest of the paper, and where they sit in our taxonomy. Thus the table helps illustrate what kinds of general assumptions go into the different bounds.  

Given this hierarchy of the main types of bounds, we next turn to a comparison of their performance.   As expected, the overall empirical performance of the bounds improves as more assumptions are added.


\section{Comparing existing bounds against desiderata}\label{comparing_bounds} 
In this section we use the taxonomy from \cref{generalization_theory} (illustrated in table~\cref{tab:multicol})  to organise a discussion on how different bounds fare against the desiderata proposed in \cref{desiderata}.
 We use a \xmark~ when there is strong evidence that bounds in this family fail to satisfy most important aspects of the desiderata, \cmark~ when there is strong evidence that bounds in the family satisfy most important aspects of the desiderata, and \omark~ otherwise. We are aware these are not formally defined notions, and the marks should just be taken as an aid for the reader.
 
\subsection{Algorithm-independent bounds}\label{algorithm-independent-bounds}

\subsubsection{Data-independent uniform convergence bounds: VC dimension}\label{vc_dimension}

One of the iconic results in the theory of generalization is the notion of uniform convergence introduced by Vapnik and Chervonenkis \citep{vapnik1974theory} which, expressed in the language of PAC learning \citep{blumer1989learnability}, considers \emph{data-independent} uniform convergence bounds, where the capacity $f_\mathcal{H}(S)$ doesn't depend on $S$, but only on the hypothesis class $\mathcal{H}$.  The main result of this theory is that the optimal bound of this form (up to a multiplicative fixed constant) for the generalization gap, in the case of binary classification is
\begin{equation}
\label{vc-agnostic-bound}
   \forall \mathcal{D},~\mathbf{P}_{S\sim \mathcal{D}^m}\left[\sup_{h\in\mathcal{H}}|\epsilon(h) - \hat{\epsilon}(h)| \leq C\sqrt{\frac{\text{VC}(\mathcal{H}) + \ln{\frac{1}{\delta}}}{m}}\right] \geq 1-\delta
\end{equation}
for some constant $C$, and where $\text{VC}(\mathcal{H})$ is a combinatorial quantity called the Vapnik-Chervonenkis dimension \citep{shalev2014understanding}, which depends on the hypothesis class $\mathcal{H}$ alone. In the realizable case, they also proved that the optimal realizable data-independent uniform bound is
\begin{equation}
\label{vc-bound}
   \forall \mathcal{D},~\mathbf{P}_{S\sim \mathcal{D}^m}\left[\sup_{h\in\mathcal{H}_0(S)}\epsilon(h) \leq C\frac{\text{VC}(\mathcal{H}) + \ln{\frac{1}{\delta}}}{m}\right] \geq 1-\delta
\end{equation}
for some constant $C$, and where $\mathcal{H}_0(S)$ is the set of all $h \in \mathcal{H}$ with zero training error on $S$. The particular realizability assumption here is that $\mathcal{D}$ should be such that for all $S$, $\mathcal{H}_0(S)$ is non-empty.

How does this bound do at the desiderata?
\begin{itemize}
    \item \textbf{D.1} \xmark~The bound is data-independent by construction, and therefore its value is the same for any data distribution or training set.
    \item \textbf{D.2} \xmark~The bound decreases with training set size, but a rate $O(1/m)$ which is independent of the dataset, unlike what is observed in practice ($O(m^{-\alpha})$ for a range of $m$, for $\alpha$ often significantly smaller than $1$). Recently, \citet{bousquet2020theory} pointed out that a fundamental reason why data-independent uniform convergence bounds don't capture the behaviour of learning curves is because the worst-case distribution $\mathcal{D}$ can depend on $m$, so that the VC bound bounds an `envelope' of the actual learning curves for individual $\mathcal{D}$, and this envelope may have a markedly different form with respect to $m$ than the individual learning curves.
    \item \textbf{D.3} \xmark~The VC dimension can capture differences in architectures. However, it doesn't appear to capture the correct trends. For example, VC dimension grows with the number of parameters \citep{baum1989size,bartlett2017nearly}, while for neural networks the generalization error tends to decrease (or at least not increase) with increased overparametrization \citep{neyshabur2018the}.
    \item \textbf{D.4} \xmark~The bound is only dependent on the algorithm via the hypothesis class. Therefore it won't capture any algorithm-dependent behaviour except for regularization techniques that restrict the hypothesis class.
    \item \textbf{D.5} \xmark~The VC dimension of neural networks used in modern deep learning is typically much larger than the number of training examples \citep{zhang2016understanding}, thus leading to vacuous VC-dimension bounds.
    \item \textbf{D.6} \cmark~Although computing the exact VC dimension of neural networks is intractable, there are good approximations and bounds which are easily computable \citep{bartlett2017nearly}.
    \item \textbf{D.7} \cmark~The VC dimension offers a rigorous bound with minimal assumptions. Therefore, its guarantees are rigorously applicable to many cases.
\end{itemize}

A common way to interpret the VC dimension bound is in terms of bias-variance tradeoff~\citep{neal2018modern,neal2019bias}, which is a simple  heuristic that is widely used in machine learning.  Bias-variance tradeoff captures the intuition that  there is a tradeoff between a model being too simple, when it cannot properly represent the data (large bias), and a model being too complex (large capacity) when it will tend to overfit, leading to large variance on unseen data. For the bound \cref{vc-agnostic-bound} we can identify $\hat{\epsilon}(h)$ as measuring the bias, and the term involving the VC dimension as indicative of the variance. Intuitively, increasing the VC dimension can make $\hat{\epsilon}(h)$ smaller by increasing the capacity, at the expense of higher variance (the second term), so that one may expect to see a U-shaped curve of generalization error versus model complexity.

Many empirical works have shown that, once in the overparameterized regime,  DNNs in fact typically show a monontonic decrease in generalization error as overparametrization increases, unlike what the VC dimension bound  would suggest \citep{lawrence1998size,neyshabur2018the,belkin2019reconciling}. As the VC dimension bound is optimal among data-independent algorithm-independent bounds, these results tell us that this class of bounds is fundamentally unable to explain the generalization of overparametrized neural networks.

Intuitively, it is not surprising that the VC dimension bound cannot capture this behaviour. As overparametrization increases, the model is able to express more functions, and so the worst-case generalization in the hypothesis class can only get worse. What the VC dimension measure (or for that matter naive applications of the bias-variance tradeoff heuristic in terms of overparameterization) is not capturing  is: 1) the strong inductive bias within the hypothesis class which DNNs have; 2) that the effective expressivity of the model can depend on the data. That is, we need to look for bounds that are algorithm-dependent and/or data-dependent. In the following section, we will look at algorithm-independent data-dependent bounds, and we will see that data-dependence alone is not enough, so that a bound which takes the inductive bias into account is necessary to explain generalization of overparamtrized DNNs.

\subsubsection{Data-dependent uniform convergence bounds: Rademacher complexity}\label{rademacher_complexity}

As a first step to include data dependence, we consider a classic data-dependent uniform convergence bound of the form   \cref{nonuniform-convergence} for algorithm independent uniform bounds, where $D$ is the absolute value function, and the capacity $f(S,h)$ is independent of $h$. It is given by:
\begin{equation}
\label{rademacher-bound}
    \mathbf{P}_{S\sim \mathcal{D}^m}\left[\sup_{h\in\mathcal{H}}|\epsilon(h) - \hat{\epsilon}(h)| \leq 2\mathcal{R}(L\circ \mathcal{H} \circ S) + 4c\sqrt{\frac{2\ln{(4/\delta)}}{m}}\right] \geq 1-\delta
\end{equation}
where $c$ is a constant, and $\mathcal{R}(L\circ \mathcal{H} \circ S)$ is the Rademacher complexity of the set of vectors $L\circ \mathcal{H} \circ S = \{(L(x_i,h(x_i)))_{i=1}^m:h \in \mathcal{H}\} \subseteq \mathbb{R}^m$ where $x_i$ are the input points in $S$ \citep{bartlett2002rademacher,shalev2014understanding}. There is also a lower bound which matches it up to a constant and up to an additive term of $O(1/\sqrt{m})$ \citep{bartlett2002rademacher,koltchinskii2011oracle}. Therefore whether this bound is the optimal data-dependent uniform generalization gap bound up to a constant, depends on the rate of the Rademacher complexity with $m$. Bounds on the Rademacher complexity tend to be of $O(1/\sqrt{m})$ but they often dominate the second term, so that the lower and the upper bound match in their first order behaviour,  suggesting that the bound in \cref{rademacher-bound} may often be close to optimal within this class of bounds.

Rademacher complexity bounds for neural networks typically rely on a bound on the norm of the weights \citep{bartlett2002rademacher}, which typically grows with overparametrization. The lesson we learn is that although Rademacher bounds are data-dependent, they are still worst-case over algorithms with hypothesis class $\mathcal{H}$. Given that DNNs can express functions that generalize badly \citep{zhang2016understanding}, it is thus not surprising that Rademacher complexity bounds are vacuous, and suffer from similar problems as VC dimension bounds.

We note that Rademacher complexity has also been used as a tool in analyses which take into account more properties of the algorithm, for example for bounds based on non-uniform convergence.  In this paper we will treat these other bounds separately, mainly in section \ref{margin_bounds}, when looking at margin bounds.  We will nevertheless briefly  comment on these margin bounds in the desiderata below, as they have been studied more thoroughly, and may give insights into Rademacher complexity more generally.

The performance of the  Rademacher complexity-based bounds on our desiderata can be found below: 
\begin{itemize}
    \item \textbf{D.1} \xmark~The bound is data-dependent, and could capture some dependence in the dataset. However, it only depends on the distribution over inputs, and therefore it can't depend on the complexity of the target function, when the input distribution is fixed. This is unlike real neural networks which generalize worse when the labels are corrupted \citep{zhang2016understanding}.
    \item \textbf{D.2} \xmark~Bounds on the Rademacher complexity are typically $O(1/\sqrt{m})$, thus not capturing the behaviour of learning curves. Furthermore, margin-based bounds (which are based on norm-based bounds to the Rademacher complexity), often \emph{increase} with $m$. See section \ref{margin_bounds}.
    \item \textbf{D.3} \xmark~As Rademacher complexity captures a notion of expressivity, similarly to VC dimension, it often grows with overparametrization and number of layers (see for example, the norm-based bounds on section \ref{margin_bounds}). It seems unlikely it could capture other architectural differences.
    \item \textbf{D.4} \xmark~Like the VC dimension bound, it is only dependent on the algorithm via the hypothesis class. The most studied hyperparameter to compare hypothesis classes in this context is the norms of the weights of the DNN. As we comment in section \ref{margin_bounds}, these bounds appear to anti-correlate with the true error when changing several common optimization hyperparameters.
    \item \textbf{D.5} \xmark~Like the VC dimension bound, these bounds are typically vacuous in the overparametrized regime. In fact, it has recently been shown that there are data distributions where the \emph{tightest} double-sided distribution-dependent uniform convergence bounds for several SGD-trained models are provably vacuous \citep{nagarajan2019uniform}. This implies that the distribution-dependent data-independent version of the Rademacher complexity bound \citep{shalev2014understanding}, and thus that the \cref{rademacher-bound} is also vacuous for that data distribution. Although for other data distributions the bounds may not be vacuous, this work suggest that uniform convergence bounds have some fundamental limitations (see \cref{non_uniform_algor_dep} for further discussion on this issue).
    \item \textbf{D.6} \cmark~Same as for VC dimension bounds.
    \item \textbf{D.7} \cmark~Same as for VC dimension bounds.
\end{itemize}

Although some fundamental limitations of VC dimension are overcome by Rademacher bounds, the bounds that currently exist for neural networks based on Rademacher complexity have very similar problems to those based on VC dimension. Although their capacity measure is data-dependent, for the problems in which DNNs are applied, these bounds still give vacuous predictions and grow with the number of parameters. This observation, combined with the fact that the bounds may be optimal among uniform generalization gap bounds, suggests that to overcome the limitations highlighted here we will need to consider  data-dependent non-uniform and algorithmic-dependent bounds.

Note that, although one can prove that the VC dimension (and perhaps for Rademacher complexity) bound is optimal within its class of bounds, for data-dependent bounds this won't be in general possible. As we discuss in \cref{optimality_nonuniform} whether one bound is tighter than another depends on what prior assumptions are made on the data distribution, so that a unique notion of optimality may not exist.








\subsection{Algorithm-dependent bounds}\label{algorithm-dependent-bounds}

In this section we consider  the major classes of algorithm-dependent bounds and how they fare at satisfying the desiderata. We focus on realizable bounds (which assume zero training error), because modern deep learning often works in this regime \citep{zhang2016understanding}. For many of the recent bounds for deep learning, the lack of experiments means that we can't conclusively answer whether they satisfy several of the desiderata. We hope that future work can fill these gaps. 

\subsubsection{Algorithm-dependent bounds based on non-uniform convergence}

We start by looking at algorithm-dependent bounds derived from non-uniform convergence bounds (see also \cref{non_uniform_algor_dep}). We begin by presenting in the next section the basic types of non-uniform bounds, before seeing the two main applications for deep learning in the next two sections: margin bounds, and sensitivity-based bounds. We will also briefly comment on some other approaches to apply PAC-Bayes to deep learning that have been proposed.

\paragraph{Basic non-uniform convergence bounds and structural risk minimization}\label{srm_bounds}


The simplest and most fundamental idea to make non-uniform bounds is related to a learning technique called structural risk minimization (SRM) developed by Vapnik and Chevonenkis \citep{vapnik1995nature}. The derivation of this bound is very similar  to the classic textbook PAC bound (see e.g.\ corollary 2.3 in \citet{shalev2014understanding}), but rather than using a uniform union bound, it uses a nonuniform union bound over the hypothesis class to  prove that for any countable hypothesis class $\mathcal{H}$ and any distribution over hypotheses $P$ \citep{mcallester1998some,shalev2014understanding}:
\begin{equation}
\label{srm1}
   \forall \mathcal{D},~\mathbf{P}_{S\sim \mathcal{D}^m}\left[\forall h\in\mathcal{H}~\textrm{such that }\hat{\epsilon}(h)=0,~\epsilon(h)  \leq \frac{\ln{\frac{1}{P(h)}}+\ln{\frac{1}{\delta}}}{m}\right] \geq 1-\delta.
\end{equation}
First, as expected, this bound  reduces to the standard uniform finite-class PAC bound \citep{valiant1984theory}, when  $P$ is a uniform distribution $P(h)=1/|\mathcal{H}|$. 
What \cref{srm1} tells us is that if we have a learning algorithm and a problem for which we have prior knowledge that some functions $h$ are more likely to be learned than others, then we can obtain tighter bounds by choosing a $P$ that assigns a higher value to these $h$. 
One way to intuitively understand how this knowledge affects the bound is to consider the limit where $P$ is highly concentrated on a subset of $\mathcal{H}$, and approximately uniform within that subset. Then  \cref{srm1} approaches the finite-class uniform PAC bound for a reduced hypothesis class and the capacity can be interpreted as measuring an ``effective size'' of the hypothesis class.

Another way to think about this is that choosing a $P$ amounts to ``betting'' that some $h$ are more likely to appear. If we are right, then our bound will be better in practice than the standard finite-class PAC bound, while if we are wrong and we get $h$ with low $P$, then this bound will perform worse. 
In other words, unlike the uniform finite-class PAC bound, \cref{srm1} depends on the $h$ we get, so that in order to evaluate its performance,  we need to take into account the data distribution and the algorithm (which together determine the probability distribution over $h$ which the learning algorithm outputs), and work with an expected value of the generalization error.

To define an expected value of the bound, we assume a distribution over data distributions $\mathcal{D}$, which we call the prior $\Pi$. For example, this may be fully supported on a single $\mathcal{D}$ if we know the distribution fully (perhaps we are looking only at the images in CIFAR10). In more real-world settings, we will have uncertainty over what the true distribution is, but we may believe that certain distributions (e.g.\ simpler ones) are more likely. We consider a stochastic learning algorithm which, for training set $S$, outputs hypotheses with a probability $Q(h|S)$ (called the \emph{posterior} (which need not be the Bayesian posterior). Under these two assumptions, it is not hard to see that, for a given distribution $P$ over hypotheses, the expected value of the capacity $\ln{\frac{1}{P(h)}}$ in the bound \cref{srm1} is given by
\begin{equation}
    \label{srm_exp}
    \mathbf{E}_{\mathcal{D}\sim \Pi,S\sim \mathcal{D}^m, h\sim Q(h|S)}\left[-\log{P(h)}\right]    =\mathbf{E}_{h \sim \tilde{Q}}\left[-\log{P(h)}\right] \equiv H(\tilde{Q},P)
\end{equation}
where $H(\cdot,\cdot)$ is the cross-entropy, and $\tilde{Q}(h)=\mathbf{E}_{\mathcal{D}\sim \Pi,S\sim \mathcal{D}^m}[Q(h|S)]$ is the posterior averaged over training sets and data distributions. The second equality follows from the definition of cross-entropy. We can immediately see that the optimal bound of this form, for a given prior $\Pi$ and algorithm with posterior $Q(h|S)$ is obtained by the choice $P=\tilde{Q}$, in which case we obtain $H(\tilde{Q})$ where $H$ is the entropy. This calculation formalizes the intuition that we should choose $P$ to be as close as possible to the probabilities of obtaining different $h$ for the algorithm and task at hand. It also strengthens the intuition that this bound is capturing a notion of effective size, as $H(\tilde{Q})$ is often interpreted as the logarithm of the effective number of elements where $\tilde{Q}$ is concentrated.

Furthermore, we can consider the case where $Q(h|S)$ is given by the Bayesian posterior for prior $\Pi$. In this case, $\tilde{Q}=\tilde{\Pi}$, where $\tilde{\Pi}$ is the prior distribution over hypotheses $h$, obtained by marginalizing over input distributions $\mathcal{D}_X$\footnote{Note that, unless we restricted to the noiseless case, these would be stochastic hypotheses corresponding to $\mathcal{D}_{Y|X}$ as defined in \cref{notation}.}. The average capacity in this case is $H(\tilde{\Pi})$. This is conceptually similar to the No Free Lunch theorem, in that it tells us that for the optimal algorithm, the bound only guarantees good generalization if we make enough assumptions about the data distribution (which corresponds to a low entropy $H(\tilde{\Pi})$). We note, however, that the Bayesian posterior may not give the optimal value of the bound, as can be seen from the fact that $H(\tilde{Q},P)$ is always lowered by making $Q(h|S)$ deterministic (as it would lower $H(\tilde{Q})$), while the Bayesian posterior is not deterministic in general.

The problem with the basic non-uniform SRM bound \cref{srm1} is that it does not capture the idea that some functions may be more similar to others, which quantities such as VC dimension and Rademacher complexity do capture\footnote{For example, the VC dimension of a set of functions which are very similar to each other will typically be lower than a set of very dissimilar functions}. The generalized version of SRM bound which we present below has the advantage of being non-uniform but being able to capture some notion of similarity among function within the subclasses $\mathcal{H}_i$.

A more commonly-used extension of the basic SRM bound considers dividing the (now potentially uncountable) hypothesis class $\mathcal{H}$ into a countable set of (usually nested) subclasses $\mathcal{H}_i$, $i\in \mathbb{N}$, such that $\bigcup_i \mathcal{H}_i = \mathcal{H}$. The result is that for any distribution $P$ over $\mathbb{N}$ \citep{shalev2014understanding}, we have:
\begin{equation}
\label{srm2}
   \forall \mathcal{D},~\mathbf{P}_{S\sim \mathcal{D}^m}\left[\forall i \in \mathbb{N}~\forall h\in\mathcal{H}_i~\textrm{such that }\hat{\epsilon}(h)=0,~\epsilon(h)  \leq \frac{\ln{\frac{1}{P(i)}}+f_i(S)+\ln{\frac{1}{\delta}}}{m}\right] \geq 1-\delta
\end{equation}
where $f_i(S)$ is any (potentially data-dependent) capacity for class $\mathcal{H}_i$ which guarantees uniform convergence within $\mathcal{H}_i$ (for example, a bound on its VC dimension or Rademacher complexity). Results of this form are proven in \citet{shawe1998structural} and \citet{shalev2014understanding}.

We can also compute the expected value of the bound \cref{srm2}, analogously to \cref{srm1}. For the numerator of the bound (ignoring the confidence term), we obtain $H(\tilde{Q}',P)+\mathbf{E}_{i\sim \tilde{Q}'}\left[f_{i}(S)\right]$, where $\tilde{Q}'(i)\equiv\textbf{P}_{h\sim\tilde{Q}}\left[\text{class}(h)=i\right]$ and $\text{class}(h)$ represents the index of the subclass $\mathcal{H}_{\text{class}(h)}$ to which $h$ belongs. Analogously to before, the optimal value of $P$ is given by $\tilde{Q}'$, and the Bayesian posterior will in general not result in the optimal average value of the bound.

One shortcoming of \cref{srm2} is that the decomposition of $\mathcal{H}$ into $\mathcal{H}_i$ has to be defined a priori, that is, it cannot depend on $S$. \citet{shawe1998structural} proposed an extension to the SRM framework which addressed this shortcoming, and defined a potentially infinite hierarchy of subclasses $\mathcal{H}_1(S) \subseteq \mathcal{H}_2(S) \subseteq ...$ which could depend on the data $S$. This framework includes as a special case the margin bounds we will see in section \ref{margin_bounds}.

\citet{shawe1997pac} applied the data-dependent SRM framework to obtain bounds for a parametrized model, where the capacity was related to the volume in parameter space  of a sphere contained within the set of parameters producing zero training error. This work inspired the development of the first PAC-Bayes bounds in \citet{mcallester1998some}\footnote{Although \citet{shawe1997pac} is often cited as a precursor to PAC-Bayes \citep{shawe2019primer}, it offers a distinct analysis (for example, it gives deterministic bounds rather than bounds on expected error), which as far as the authors know hasn't been shown to necessarily give stronger or weaker bounds than PAC-Bayes, and hasn't been applied to neural networks.}. These bounds apply for stochastic learning algorithms, and bound the expected value of the generalization error under the posterior $Q(h|S)$, uniformly over posteriors. The standard form of the general PAC Bayes bound was proven by \citet{maurer2004note} and states, for any distribution $P$ over $\mathcal{H}$,
\begin{equation}
\label{pac_bayes_general}
   \forall \mathcal{D},~\mathbf{P}_{S\sim \mathcal{D}^m}\left[\forall Q~ KL(\mathbf{E}_{h\sim Q}[\epsilon(h)],\mathbf{E}_{h\sim Q}[\hat{\epsilon}(h)])  \leq \frac{KL(Q||P)+\ln{\frac{1}{\delta}}+\ln{(2m)}}{m-1}\right] \geq 1-\delta
\end{equation}
where $KL(Q||P)$ is the KL-divergence between $Q$ and $P$. On the left hand side we use the standard abuse of notation to define $KL(a,b)\equiv a\ln{(a/b)}+(1-a)\ln{((1-a)/(1-b))}$, for $a,b\in[0,1]$.

This bound can be seen to generalize the SRM with data-dependent hierarchies of \citet{shawe1998structural}, where instead of ``hard'' subdivisions of $\mathcal{H}$ into $\mathcal{H}_i$, we consider all possible distributions $Q$ on $\mathcal{H}$. $KL(Q,P)$ is analogous to $\ln{\frac{1}{P(i)}}$ in \cref{srm2} in that it very roughly measures how much of the total probability mass of $P$ is in the high probability region of $Q$. The term which penalizes ``classes'' of $h$ which are too diverse, analogously to $f_i(S)$, is $\mathbf{E}_{h\sim Q}[\hat{\epsilon}(h)$, the average training error, because this is only small if the functions agree on $S$, which will only happen with high probability if they are sufficiently similar. These analogies between the data-dependent SRM bounds and PAC-Bayes are only intuitive, but we conjecture that a more formal connection might be possible.

The PAC-Bayes bound in \cref{pac_bayes_general} is one of the most general non-uniform data-dependent bounds, and its different applications give rise to sensitivity-based bounds (section \ref{sensitivity_bounds}) among others (\cref{other_pac_bayes}). In fact, margin bounds can also be derived from PAC-Bayes \citep{langford2003pac}.


We don't comment on the desiderata for SRM and the PAC-Bayes bounds described above, because they provide a general framework for the non-uniform data-dependent bounds we explore next. We will comment on the desiderata for these bounds individually.


\paragraph{Norm-based and margin bounds}
\label{margin_bounds}


A popular method to obtain data-dependent non-uniform bounds is the method of margin-based bounds. These usually start with \emph{norm-based bounds} which bound the Rademacher complexity of sub-classes of a hypothesis class parametrized by a parameter vector $w \in \mathcal{W}$ (\emph{weights}) corresponding to balls where $w$ has a bounded norm. One can then either express the bound under an assumption on the weight norms, or have the bound depend on the weight norms. The later case is done by applying an SRM-like bound to the family of hypothesis sub-classes corresponding to different weight norms. For \emph{margin bounds}, the bound is applied to a \emph{margin loss}, which upper bounds the 0-1 loss. The result is a bound on the generalization error which depends on a new data-dependent property called \emph{margin}, which measures how confidently the classifier is classifying examples. The bound, in the agnostic case, has the general form, for any margin $\gamma > 0$ \citep{shalev2014understanding},
\begin{equation}
\label{margin-bound}
    \forall \mathcal{D},~\mathbf{P}_{S\sim \mathcal{D}^m}\left[\forall w\in\mathcal{W},~\epsilon(w) \leq \hat{\epsilon}_\gamma(w) + c_1\frac{\lambda(w)}{\gamma\sqrt{m}} + c_2\sqrt{\frac{2\ln{(4/\delta)}}{m}}\right] \geq 1-\delta
\end{equation}
where $c_1$ and $c_2$ are constants (that may depend on the algorithm but not on $h$ or $S$, $\lambda(w)$ is a \emph{capacity measure} which usually measures the norm of the parameters $w$, and the margin error is defined as $\hat{\epsilon}_\gamma(h)=\frac{1}{m}\sum_{i=1}^m \mathbbm{1}[h(x_i)y_i<\gamma]$, for a hypothesis $h$. Note that we usually apply this to real-valued hypotheses where $\mathcal{Y}=\mathbb{R}$, and for which the classification error is defined as $\epsilon(h) = \mathbf{P}_{(x,y)\sim \mathcal{D}}[h(x)y < 0]$, where $y\in\{-1,1\}$ for binary classification. We abuse notation by writing $\epsilon(w)$ and $\hat{\epsilon}_\gamma(w)$ for the quantities evaluated at the hypothesis corresponding to parameter $w$. One can also make a bound that holds (non-uniformly) for all values of $\gamma$ by applying a weighted union bounds over discretized values of $\gamma$ \citep{shalev2014understanding}.

The margin loss $\hat{\epsilon}_\gamma(w)$ measures the amount of miss-classification errors plus some examples which were classified correctly but with low confidence (measured by $h(x)$ being smaller than $\gamma$). In the case of linear models, where $h(x) = w\cdot x$ and $\lambda(w) = |w|$, the ratio $\gamma'=\lambda(w)/\gamma$ measures a \emph{geometric margin}, and the margin loss measures the number of examples that are not on the right side of the classification boundary by a distance greater than $\gamma'$. Support vector machines are a famous example of an algorithm trying to maximize this geometric margin \citep{cortes1995support}.

For neural networks, margin bounds were originally developed based on the analysis of their fat-shattering dimension \citep{bartlett1997valid, bartlett1998sample}. These bounds depend on the $l_1$ norm of the weights, and thus typically grows with overparmetrization. The bounds also grow exponentially with depth. More recently, more complex norms of the weights were studied, as well as using the Lipschitz constant of the network as the capacity measure for margin-bounds \citep{bartlett2017spectrally,neyshabur2018a,golowich2017size,neyshabur2018towards,barron2019complexity}. These bounds have shown some correlation with the complexity of the data, but suffer from their (implicit or explicit) dependence on width and depth, and are vacuous \citep{neyshabur2017exploring,neyshabur2015norm,arora2018stronger}. They also show negative correlation with the generalization error when changing certain training hyperparmeters \citep{jiang2019fantastic}.


We now summarize the strengths and weaknesses of margin-based bounds on our desiderata. Note that most of our conclusions are based on empirical results on existing margin-based bounds, and may not be fundamental to the margin-based approach itself.

\begin{itemize}
    \item \textbf{D.1} \cmark~Margin-based bounds have shown to correlate with the true error when comparing CIFAR versus MNIST, and when comparing uncorrupted versus corrupted data \citep{bartlett2017spectrally,neyshabur2017exploring}. The correlation should be explored over a wider range of datasets and quantified more precisely.
    \item \textbf{D.2} \omark~The dependence on the training set size $m$ of margin-based bounds depend on how the capacity measure changes with $m$. In \cite{nagarajan2019uniform}, it has been shown that several types of the margin-based bounds proposed for deep neural networks actually increase with training set size! However, in \citet{dziugaite2020search}, other norm and margin-based generalization measures are found to correlate well with the error when changing training set size, suggesting that it may be possible to prove better bounds based on these measures.
    
    \item \textbf{D.3} \omark~While most margin-based bounds increase with layer width, some ($l_2$-path norm bound in \citet{neyshabur2017exploring} and the bound in \citet{neyshabur2018towards}) actually decrease with layer width. Regarding variations in depth, all the proposed bounds increase with number of layers, and most of them do so exponentially \citep{neyshabur2015norm,arora2018stronger,barron2019complexity}. However, the empirical results in \citet{jiang2019fantastic,maddox2020rethinking,dziugaite2020search} show that certain measures (e.g.\ path-norm) positively correlate with the error when varying depth. Furthermore, according to  \citet{maddox2020rethinking}, the log path-norm correlates with both width and depth significantly better than path-norm. While no bounds have been derived that scale like these measures, it may be promising to consider work in this direction.

    \item \textbf{D.4} \omark~\cite{jiang2019fantastic} show that margin-based bounds that have been proposed to date appear to often \emph{anti}-correlate with the generalization error when changing common optimization hyperparameters (dropout, learning rate, type of optimizer, etc.). On the other hand, in \citet{dziugaite2020search}, it was demonstrated that some measures such as path norm, often predict certain properties well when changing learning rate (but they didn't vary the other hyperparameters studied in \citet{jiang2019fantastic}).
    
    \item \textbf{D.5} \xmark~As far as the authors are aware, all of the margin-based bounds published to date for DNNs are vacuous \citep{neyshabur2017exploring,neyshabur2015norm,arora2018stronger}.
    \item \textbf{D.6} \cmark~Margin-based bounds are often based on a notion of the norm of the weights that is relatively efficiently computable.
    \item \textbf{D.7} \cmark~Proposed bounds are based on rigorous theorems with typically weak assumptions.
\end{itemize}




\paragraph{Bounds based on the neural tangent kernel}\label{bounds_ntk}

It was recently demonstrated that infinite-width DNNs, when trained by SGD with infinitesimal learning rate, evolve in function space as linear models with a kernel known as the neural tangent kernel (NTK) \citep{jacot2018neural,lee2019wide}. Several works, either inspired by NTK or not, have relied on the linearization of the dynamics for wide neural networks. The bounds are similar to the norm-based bounds in the previous section \ref{margin_bounds} in that they tend to bound the Rademacher complexity of hypothesis subclasses characterized by a norm, usually the norm of the deviation of the weights from initialiation. The analyses based on NTK are often also able to guarantee \emph{convergence} of the optimizer, so that we can also bound the empirical risk for a sufficiently large number of optimizer iterations. For instance, \citet{arora2019fine} proved that for  a sufficiently wide two-layer fully connected neural network and sufficiently many (full batch) gradient descent steps $t$, we have
\begin{equation}
\label{ntk-bound}
    \forall \mathcal{D},~\mathbf{P}_{S\sim \mathcal{D}^m}\left[R(h_t) \leq  \sqrt{\frac{2\mathbf{y}^T(\mathbf{H}^\infty)^{-1}\mathbf{y}}{m}} + O\left(\sqrt{\frac{\log{\frac{m}{\lambda_0 \delta}}}{m}}\right)\right] \geq 1-\delta
\end{equation}
where $h_t$ is the hypothesis learnt by the DNN, $\mathbf{y}$ is the vector of training outputs $y_i$, $\mathbf{H}^\infty$ is the Gram matrix for the inputs in the training set $S$, and $\lambda_0$ is a lower bound on the eigenvalues of $\mathbf{H}^\infty$. See \citet{arora2019fine} (Theorem 5.1) for the full statement of the result. The connection to the norm-based bounds comes from the NTK analysis they carried out, which showed that the Frobenius norm of the deviation of the weights from initializatoin is bounded by $\sqrt{\mathbf{y}^T(\mathbf{H}^\infty)^{-1}\mathbf{y}}$ plus higher order terms. They then used this bound to obtain a data-dependent bound on the Rademacher complexity. NTK analyses could provide tighter bounds on Rademacher complexity than those we saw in section \ref{margin_bounds}, because the NTK likely captures the relation between the parameters and the function the network implements more precisely than the analyses based on bounding the Lipschitz constant or similar quantities.

More recently, \citet{cao2019generalization} sharpened \citet{arora2019fine}'s result with a similar bound, but which applies to networks with any depth trained with SGD. In their bound, $\mathbf{H}^\infty$ is replaced by the NTK matrix, which gives a tighter bound.

\begin{itemize}
    \item \textbf{D.1} \cmark~The NTK-based capacity in \citet{arora2019fine} was shown to increase with amount of label corruption on MNIST and CIFAR, and on MNIST for the capacity measure in \citet{cao2019generalization}.
    \item \textbf{D.2} \omark~The authors are not aware of any work studying the dependence of NTK-based bounds on $m$. The $O(m^{-1/2})$ is not necessarily indicative because the numerator in the bound likely has non-trivial dependence on $m$.
    \item \textbf{D.3} \xmark~Like norm-based margin bounds, current NTK-bounds grow with depth \citep{cao2019generalization}. On the other hand, they show very little dependence on network width, for large enough width. This is a property of NTK analyses which matches empirical observations well.
    \item \textbf{D.4} \omark~The authors are not aware of any work studying the dependence of NTK-bounds on the optimization algorithm. Most analyses focus on vanilla versions of SGD with specific hyperparameter choices which help the theoretical analysis. Other optimization algorithms have been shown to have NTK limits (e.g.\ momentum in \citet{lee2019wide}), but we are not aware of generalization bounds for these.
    \item \textbf{D.5} \omark~The bounds in \citet{cao2019generalization} are non-vacuous (at least up to the dominant term), but they are not very tight (with values close to $1$ above $20\%$ label corruption).
    \item \textbf{D.6} \cmark~The NTK of fully connected networks has an analytical form which is efficiently computable \citep{lee2019wide}. However, for more complex architectures, it may be necessary to estimate the NTK limit (when it exists, see \citet{yang2019scaling}) via Monte Carlo \citep{novak2019neural}.
    \item \textbf{D.7} \cmark~Proposed bounds are based on rigorous theorems, though the assumptions on the algorithms and the width are sometimes hard to match in practice (too large width, or too small learning rate).
\end{itemize}

\paragraph{Sensitivity-based bounds}
\label{sensitivity_bounds}

Many generalization error bounds recently developed and applied to deep learning are based on the idea that neural networks whose outputs (or loss function values) are robust to perturbations in the weights may generalize better. This is linked to the observed phenomenon that flatter minima empirically generalize better than sharper minima \citep{hochreiter1997flat,Hinton:1993:KNN:168304.168306,zhang2018energy,keskar2016large}. At an informal level, it has been argued that the reason for this correlation is that flatter minima may correspond to simpler functions \citep{hochreiter1997flat,wu2017towards}. In particular, \citet{hochreiter1997flat} link flatness to generalization via the idea of \emph{minimum description length} (MDL) \citep{rissanen1978modeling}. MDL generalization bounds are formally equivalent to the simple SRM bound in \cref{srm1}, where $-\log{P(h)}$ is often interpreted as the length of the string representing $h$ under some prefix-free code \citep{shalev2014understanding}.

A more sophisticated argument linking flatness to generalization is found in the data-dependent SRM analysis of \citet{shawe1997pac}. As we mentioned in section \ref{srm_bounds}, they proved generalization bounds where the capacity was mainly controlled by the volume of a region (which they took to be a ball) in weight space in which the training error was zero. A larger volume corresponds to a flatter minimum. Note that one difference with previous work is that \citet{shawe1997pac} define flatness in terms of the classification error, rather than the loss function (for which one typically uses the Hessian as a measure of flatness).



Recent theoretical works studying the link between sensitivity, flatness, and generalization focus on PAC-Bayes analyses \citep{neyshabur2017exploring,jiang2019fantastic}. In the PAC-Bayes bound \cref{pac_bayes_general}, the $KL(Q,P)$ term is typically larger when the posterior $Q$ has a larger variance, so that it can ``overlap'' with the prior $P$ more. On the other hand to control the average training error $\mathbf{E}_{h\sim Q}[\hat{\epsilon}(h)]$, we need $Q$ to put most of its weight on regions of low error. If we combine these two considerations, the bound typically predicts best generalization for large regions of weight space with low error (flat minima). However, as shown in \cite{neyshabur2017exploring}, a more careful look at the PAC-Bayesian analysis suggests that flatness alone is not sufficient to control capacity and should be complemented with some other measure such as norm of the weights. In particular, if $Q$ is taken to be a Gaussian around the weights found after training, and $P$ is taken to be a Gaussian around the origin, then the $KL(Q,P)$ term also grows with the norm of the weights. This can be seen in the bound proposed in \cite{neyshabur2017exploring} which states that, for all $\mathcal{D}$, with probability at least $1-\delta$, over $S\sim \mathcal{D}^m$, we have
\begin{align}
\label{sensitivity-bound}
    \mathbb{E}_{\nu \sim N(0, \sigma)^{n}}\left[R\left(f_{w+\nu}\right)\right] - \widehat{R}\left(f_{w}\right) \leq \underbrace{\mathbb{E}_{\nu \sim \mathcal{N}(0, \sigma)^{n}}\left[\widehat{R}\left(f_{w+\nu}\right)\right]-\widehat{R}\left(f_{w}\right)}_{\text {expected sharpness }}+4 \sqrt{\frac{1}{m}(\underbrace{\frac{\|w\|_{2}^{2}}{2 \sigma^{2}}}_{\text {KL }}+\ln \frac{2 m}{\delta})}
\end{align}
where $R$ is the risk, $f_w=A(S)$ and $w$ are the function and weights, respectively, produced by the network after training, and $f_{w+\nu}$ is the function obtained by perturbing the weights of the network, $w$, by the noise $\nu$. $\sigma$ is a hyperparameter that can be chosen to take any value greater than $0$. The first two terms after the inequality are called \emph{expected sharpness} by \cite{neyshabur2017exploring}, and measures how much the loss increases in average by a perturbation of order $\sigma$ to the weights. \citet{neyshabur2017exploring} perform experiments that show that this bound correlates well with the true error when varying data complexity, and number of training examples, but not when varying the amount of overparametrization. By optimizing the posterior of a PAC-Bayes bound conceptually similar to \cref{sensitivity-bound}, \citet{dziugaite2017computing} also obtained bounds on the expected error under weight perturbations. Their results were noteworthy because the bounds were non-vacuous.

The bounds in \cite{neyshabur2017exploring,dziugaite2017computing} are bounding the expected value of the generalization error under perturbation of the weights, rather than the generalization error of the original network. Obtaining bounds on the latter is addressed by recent work on \emph{determinist} PAC-Bayes bounds \citep{nagarajan2018deterministic}. However their bounds are vacuous, and follow the wrong trends when varying depth and width.

In \cite{jiang2019fantastic}, it was found empirically that using a worst-case measure of sharpness, that measures the loss change along the worst weight perturbation of a certain magnitude, very similar to the one proposed in \cite{keskar2016large}, gives the best correlation (and best results in their causal analysis) among the many measures they tested.

\citet{arora2018stronger} developed another approach to prove generalization bounds based on the robustness of neural networks to perturbations of the weights. They showed that if the effects of perturbing the weights does not grow too much as it propagates through every layer\footnote{a condition which they formalized through a series of measurable quantities}, the network could be compressed to a network with fewer parameters, for which a generalization error bound could be given that was tighter than other proposed bounds, although still vacuous. They found that their bound correlates with the true error as it decreased during training. However, this bound has the disadvantage that it applies to the compressed network only, and not to the original network.

Recently, \citet{banerjee2020randomized} have developed novel deterministic bounds based on a de-randomization of a PAC-Bayes bound. Their bound is also based on the flatness of the minimum found after training (measured by Hessian eigenvalues), and also takes into account the distance moved in parameter space. They provided evidence that their bound correlates well with the test error when varying the training set size and label corruption. However, they didn't study the tightness of their bound (which depends on certain smoothness constants of a linearized version of the non-linear DNN).

\begin{itemize}
    \item \textbf{D.1} \cmark~\cite{neyshabur2017exploring,banerjee2020randomized} provided some evidence that their PAC-Bayesian bounds correlates with true error when varying the data complexity. The authors are not aware of similar results for the other measures.
    \item \textbf{D.2} \cmark~\cite{neyshabur2017exploring,banerjee2020randomized,dziugaite2020search} have shown evidence that different sensitivity-based PAC-Bayesian bounds correlate with true error when varying the training set size. However, a more quantitative comparison could be done, over more datasets and different architectures.
    \item \textbf{D.3} \omark~Both the worst-case and expected sharpness measures appear to correlate well with the true error when varying depth. However, only the worst-case sharpness appears to correlate with the error when varying width \citep{neyshabur2017exploring,jiang2019fantastic}. Furthermore, \citet{dziugaite2020search} show that although the average correlation with depth and width is good for some PAC-Bayes and sharpness measures, they are not robust, and all of them fail for a significant number of experiments. The bound in \citet{arora2018stronger} depends on quantities whose dependence on the architecture are hard to predict; however, the bound's explicit dependence on depth suggests that it may grow linearly with depth, unlike the empirical observations. Recently, \citet{maddox2020rethinking} showed that a measure of flatness known as effective dimensionality correlates better with the error than PAC-Bayes measures, when varying width and depth, suggesting that it may be a better measure than PAC-Bayes-based flatness measures to understand generalization.
    
    \item \textbf{D.4} \cmark~The worst-case sharpness appears to correlate well with the true error when varying several algorithm hyperparameters, while other sharpness measures correlate a bit worse \cite{jiang2019fantastic}. In \citet{dziugaite2020search}, it was shown that some flatness measures indeed correlate well with the error over most experiments.
    \item \textbf{D.5} \omark~Although the sharpness bounds in \cite{neyshabur2017exploring,jiang2019fantastic} are likely vacuous, \cite{dziugaite2017computing} showed that by optimizing the PAC-Bayesian posterior over a large family of Gaussians, non-vacuous bounds could be obtained. 
    
    \item \textbf{D.6} \omark~Some sharpness bounds studied in \cite{jiang2019fantastic} are efficiently computable. However, the more advanced ones such as the ones in \cite{dziugaite2017computing} require significant computational expense.
    
    \item \textbf{D.7} \omark~The bounds in \cite{neyshabur2017exploring,dziugaite2017computing,arora2018stronger} are based on rigorous theorems. However, they only apply to either the expected error under random perturbations of the weights, or to the compressed network. The deterministic PAC-Bayes bounds in \cite{nagarajan2018deterministic,banerjee2020randomized} apply to the deterministic error of the original network, but may be vacuous or not very tight. The worst-case sharpness measure which appears to correlate best with the generalization error \cite{jiang2019fantastic} lacks a rigorous theorem that explains this correlation.
\end{itemize}

\subsubsection{Other PAC-Bayes bounds}\label{other_pac_bayes}

There are many recent works applying PAC-Bayesian ideas to obtain generalization error bounds in novel ways. \citet{zhou2018non} proved non-vacuous generalization error bounds on compressed networks trained on large datasets. However, their bounds are still very loose, and their correlation with the true error hasn't been studied yet. \cite{dziugaite2018data} extended the PAC-Bayesian analysis to include data-dependent priors under the assumption that they are close to differentially-private priors. Their bounds are non-vacuous and apply to the expected value of the generalization error after training with SGLD \citep{welling2011bayesian}, but they are very computationally expensive, and have only been tested on a small synthetic dataset.





\subsection{Other algorithm-dependent bounds}\label{other_algo_dep_bounds}

We now consider other types of algorithm-dependent bounds, which are not based on non-uniform convergence. The main class of such bounds are stability-based bounds, under which we include compression and algorithmic stability bounds. These bounds include both data-independent and data-dependent bounds. The data-independent bounds will suffer from many of the same pitfalls as VC dimension bounds, as DNNs show generalization for some datasets but not others, while the data-dependent bounds are more promising.

\subsubsection{Stability-based bounds}
\label{stability_bounds}

Stability-based bounds offer an alternative way to obtain algorithm-dependent bounds, different from the non-uniform convergence SRM-like bounds. In fact, they even allow to obtain data-independent algorithm-dependent bounds. Stability analyses show that if the output of a learning algorithm depends weakly on the training set, then it can be shown to generalize. One approach of this kind was developed by \citet{littlestone1986relating}, who derived \emph{compression bounds}. They obtain data-independent bounds on the generalization error for learning algorithms whose output can be computed via a fixed function of only $k<m$ out of the $m$ training examples (which $k$ examples they are can depend on the training sample). For the realizable case, the bound is
\begin{equation}
\label{compression-bound}
    \mathbf{P}_{S\sim \mathcal{D}^m}[\epsilon(\mathcal{A}(S)) \leq \frac{8k\log{(m/\delta)}}{m}] \geq 1-\delta.
\end{equation}
See \citet{shalev2014understanding} for the formal statement and proof. These bounds are based on the general concept of `stability' described above because if the output of the algorithm only depends on a small subset of the training examples, it means the output is insensitive to changes in most of the outputs. A compression bound has been recently developed for two-layer neural networks trained with SGD on linearly-separable data, based on a proof that in this case SGD converges in a bounded number of non-zero weight updates, which therefore gives a bound on $k$ \citep{brutzkus2018sgd}.

The most common notion of stability, called \emph{algorithmic stability}, was related to generalization by \citet{bousquet2002stability}, and considers how sensitive the output of the learning algorithm is to removing a single example from the training sample. Most work on algorithmic stability has focused on the data-independent notion of \emph{uniform stability}, which has been used to obtain data-independent bounds for SGD \citep{pmlr-v48-hardt16,mou2018generalization}.

Because compression bounds and uniform stability are both data-independent, they can't capture the crucial data-dependence of generalization in deep learning which was pointed out, for example, by \citet{zhang2016understanding}. To this end, some recent extensions have looked at data-dependent notions of stability. \citet{kuzborskij2017data} applied this idea to obtain generalization error bounds for SGD-trained models. They obtain bounds of the form
\begin{equation}
\label{data-dep-stability-bound}
    \forall \mathcal{D},~\mathbf{E}_{(S\sim \mathcal{D}^m,A)}[R(\mathcal{A}(S)) - \hat{R}(\mathcal{A}(S))] \leq \mathcal{O}\left(\sqrt{c(R(w_1)-R^*)}\cdot \frac{\sqrt[4]{T}}{m} + c \sigma \frac{\sqrt{T}}{m}\right)
\end{equation}
where the expectation is also taken over the randomness in the algorithm $\mathcal{A}$ (as SGD is a stochastic algorithm), $c$ is a constant related to the step size of SGD, $w_1$ is the initial weights of the neural network, $R^*$ is the minimum risk achiveable by the hypothesis class of the algorithm, $T$ is the training time, and $\sigma$ is a bound on the variance of the SGD gradients.


The bound in \cref{data-dep-stability-bound} has several limitations which we discuss now. First of all, it is a bound on the expected value of the generalization error, which does not immediately imply a bound that holds both with high probability and logarithmic dependence\footnote{Note that the Markov inequality implies a high probability bound, but it has a polynomial $1/\delta$ dependence on the confidence parameter, which is expected to be far from optimal.} on the confidence parameter $\delta$, as the other bounds studied here \citep{shalev2010learnability,feldman2019high}. The bound applies for smooth convex losses, but the authors also provide a more complex bound for smooth non-convex losses. The smoothness is an important limitation, as most neural networks in practice use ReLU activations, making the loss surface non-smooth. However, in their experiment they use a CNN with max pooling, which gives a non-smooth loss surface, and their bounds still work well empirically. The bound also requires an estimate of $R(w_1)$. They estimate this with a validation set, which makes the bound not dependent on $S$ alone. However, in practice the initialization is usually random and independent of $S$, which means $R(w_1)$ could be estimated from the empirical loss on $S$. The bound can not be directly applied to classification error, which is not Lipschitz, but it can be applied to cross-entropy loss which in turn implies an upper bound on classification error. Perhaps the most serious limitation of this stability bound is that it assumes a single pass over the data $T\leq m$, which is not the usual case in practice as training and generalization error often decrease by training on several passes of the data.

Some recent works extended the data-dependent stability analysis of SGD-trained neural networks in different directions. \cite{london2017pac,li2019generalization} combined stability bounds with the PAC-Bayesian approach we be discussed later. \cite{zhou2019understanding} proved data-dependent stability bounds that apply to SGD with multiple passes over the data. However, their bound increases with training time (although logarithmically rather than polynomially as in \cite{pmlr-v48-hardt16,mou2018generalization}), contradicting the empirical result that generalization error appears to pleateau with training time \citep{hoffer2017train}. However, their results have not yet been empirically tested, so that it is hard to evaluate these bounds on the desiderata.

\begin{itemize}
    \item \textbf{D.1} \cmark~The data-dependent stability bound in \cite{zhou2019understanding} correlated with the true error when varying the amount of label corruption on three different datasets. \cite{li2019generalization} also showed that their Bayes stability analysis gives bound that are larger for randomly labelled CIFAR10 than for uncorrupted CIFAR10.
    \item \textbf{D.2} \omark~The explicit dependence of many stability bounds on $m$ is given by the classical $1/m$ or $\sqrt{1/m}$. However, data dependent bounds have quantities which may change with $m$ in complicated ways. Furthermore, the bounds for non-convex losses in \cite{kuzborskij2017data} have a more complicated data-dependence with explicit power law dependence on $m$. \cite{kuzborskij2017data} show a good correlation between their bound and the empirical generalization gap. However, this is only done for one-pass (online) SGD. To the best of our knowledge, no study has compared the $m$-dependence of the other bounds with the true error, for the usual case where SGD is trained over multiple passes to reach low training error.
    \item \textbf{D.3} \omark~The data-dependent stability bounds depend on empirical quantities of the loss surface and the behavior of SGD, both of which can in principle be affected by the choice of architecture. However, as far as the authors know this dependence hasn't been explored.
    \item \textbf{D.4} \xmark~ Most stability bounds grow with training time. However, empirically the opposite correlation is found, with longer training time leading to better generalization \cite{hoffer2017train,jiang2019fantastic}. \cite{charles2018stability} showed situations in which gradient descent (GD) is not uniformly stable but SGD is. However, whether SGD really generalizes better than GD is still a controversial topic \citep{hoffer2017train}.
    \item \textbf{D.5} \omark~Many stability bounds \cite{pmlr-v48-hardt16,mou2018generalization,zhou2019understanding} grow with training time, and thus will become vacuous for sufficiently large training times. \cite{kuzborskij2017data} have shown remarkably tight expected generalization gap bounds, which however only apply to one-pass SGD. These results are very promising, but further empirical analysis, and work on tight bounds for multi-pass SGD is still needed.
    \item \textbf{D.6} \cmark~Data-independent stability bounds are typically easy to compute. Data-dependent ones like the one in \cite{kuzborskij2017data} are harder (depending on empirical quantities like the Hessian and gradient sizes), but still applicable to reasonably sized problems
    \item \textbf{D.7} \omark~Proposed bounds are based on rigorous theorems. However, these often have assumptions which are not held in common practice (e.g.\ one-pass over the data, smoothness of loss function, linear separability).
\end{itemize}


\subsection{Other bounds and generalization measures}
\label{other_bounds}

\citet{jiang2018predicting} have recently empirically studied a measure of generalization based on features of the distribution of margins at different hidden layers. Their measure shows significantly better correlation between the bound and the error as data complexity and architecture is varied, than the margin-based measures in \citet{bartlett2017spectrally}. However, they provide limited results regarding the predictive power when changing individual features of the data or architecture, and instead provide an aggregate correlation score when they are all changed simultaneously. The success of the measure explored in \citet{jiang2018predicting} may be related to the correlation observed in \citet{valle2018deep,mingard2020sgd} between the prior probability of functions for Bayesian DNNs and the critical sample ratio (CSR), which was proposed as a complexity measure in \citet{arpit2017closer}. The critical sample ratio is an aggregate measure of the distances between input points and the decision boundary, like the distribution of input margins in \citet{jiang2018predicting}. Furthermore the correlation between the prior probability and generalization is established in \citet{valle2018deep,mingard2020sgd}, as well as our PAC-Bayes bound in \cref{pac_bayes_bound}, which helps understand why CSR may correlate with generalization.

\citet{wei2019data,wei2019improved} offered theoretical bounds based on a similar idea to \citet{jiang2018predicting}. They considered extending the notion of margins to all the layers of the DNN. However, no empirical evaluation was presented.

Recently, and in response to the work of \citet{nagarajan2019uniform}, \citet{negrea2019defense} proposed a method to apply uniform convergence to cases where it previously produced vacuous generalization gap bounds. The idea is to show that an algorithm produces a hypothesis with generalizatoin and empirical errors close to some hypothesis in a class with a uniform convergence property. This work has yet to be applied to deep learning, but could offer an interesting direction.

Finally, several measures investigated by \citet{jiang2019fantastic,maddox2020rethinking,dziugaite2020search}, some of which we have discussed in the sections on margin and sensitivity-based bounds, don't yet correspond to rigorous analyses of generalization, but some predict generalization better than existing bounds, and so are promising directions for future more rigorous analyses of generalization.

\subsection{generalization error predictions for specific data distributions}
\label{average_generalization_error}

There is recent work on predicting generalization error of DNNs by making assumptions on $\mathcal{D}$, rather than relying on frequentist PAC bounds which typically don't make any assumption on $\mathcal{D}$ (beyond assuming realizability). \citet{spigler2019asymptotic,bordelon2020spectrum} study kernel regression with miss-specified priors, based on \citet{sollich2002gaussian}\footnote{A similar analysis from a physics-inspired perspective has been presented in \citet{DBLP:journals/corr/abs-1906-05301}}. \citet{bordelon2020spectrum} further apply this idea to the neural tangent kernel (NTK) of a fully connected network, which approximates the behaviour of SGD-trained DNNs in the infinite width and infinitesimal learning rate limit \citep{jacot2018neural}, which seems to work well for finite-width but wide DNNs \citep{lee2019wide}. They apply this to MNIST by estimating the NTK eigenspectrum on a sample of MNIST, and then training the DNN on smaller samples. Their predicted generalization error closely follows the observed error of the SGD-trained DNN. As far as the authors are aware this is one of the most accurate predictions of the generalization error of DNNs based on well-established theory.



One of the limitations of the analysis in \citet{bordelon2020spectrum} is that it relies on knowing what the data distribution is, and in particular the eigenvalues of the NTK kernel, and the eigenspectrum of the target function (with respect to the eigen basis of the NTK kernel). This can be estimated by using a sufficiently large sample of the data, but it is not discussed in \citet{bordelon2020spectrum} how big the sample needs to be for the estimate to be accurate. They use a sample larger than the training set, which therefore makes this predictor fall outside the requirements of the kinds of predictons we have been considering (which only depend on $S$). However, the approach offers an analytical theory of generalization which can help with interpretability and gaining understanding of which properties of a DNN architecture lead to generalization for a particular dataset. The other limitation of the work in \citet{bordelon2020spectrum} is that the analysis only applies for MSE loss, which is not commonly used for classification (though the training with the two losses often results in DNNs with similar learned functions \citep{mingard2020sgd}).


\section{Marginal-likelihood PAC-Bayesian generalization error bound}
\label{pac_bayes_bound}

In the previous section, we saw that algorithm-independent or data-independent bounds are clearly insufficient to explain the generalization performance of DNNs because the hypothesis class of DNNs is too expressive, and the generalization strongly depends on the dataset, respectively. Furthermore, the main approaches for algorithm-dependent bounds are based on non-uniform convergence, which has been shown to have fundamental limitations in its ability to predict generalization in SGD-trained DNNs for some datasets \citep{nagarajan2019uniform}. Although there are ways around this limitation (see the discussion in \cref{non_uniform_algor_dep}), it suggests that looking at other approaches to obtain generalization bounds may be promising. Non-uniform stability bounds offer an interesting alternative to non-uniform convergence, but their empirical success so far is still limited.

Here we present a new deterministic realizable PAC-Bayes bound which applies to a DNN trained using Bayesian inference, with high probability over the posterior. We work in the same set up as \citet{valle2018deep} and \citet{mcallester1998some}. We consider binary classification, and a space of functions or hypotheses with codomain $\{0,1\}$. We consider a ``prior'' $P$ over the hypothesis space $\mathcal{H}$, and an algorithm which samples hypotheses according to the Bayesian posterior, with $0-1$ likelihood. To recall, we define generalization error as the probability of misclassification upon a new sample $\epsilon(h)=\textbf{P}_{x\sim\mathcal{D}}[h(x)\neq t(x)]$, where $t$ is the target function. In \cref{proof:pac_bayes_theorem}, we prove the following theorem:

\begin{theorem} (\textbf{marginal-likelihood PAC-Bayes bound})

\label{pac_bayes_theorem}
For any distribution $P$ on any hypothesis space $\mathcal{H}$ and any realizable distribution $\mathcal{D}$ on a space of instances we have, for $0< \delta \leq 1 $, and $0< \gamma \leq 1 $, that with probability at least $1-\delta$ over the choice of sample $S$ of $m$ instances, that with probability at least $1-\gamma$ over the choice of $h$:

$-\ln{(1-\epsilon(h))} < \frac{\ln{\frac{1}{P(C(S))}} + \ln{m}+ \ln{\frac{1}{\delta}} + \ln{\frac{1}{\gamma}}}{m-1} $

\noindent
where $h$ is chosen according to the posterior distribution $Q(h)=\frac{P(h)}{\sum_{h\in C(S)} P(h)}$, $C(S)$ is the set of hypotheses in $\mathcal{H}$ consistent with the sample $S$, and where $P(C(S))=\sum_{h\in C(S)} P(h)$
\end{theorem}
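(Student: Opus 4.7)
The plan is to convert the nested high-probability bound (over $S$, and then over $h \sim Q$) into a single bound on a joint probability, which can then be split via Markov's inequality. Let $r(h) = -\ln(1-\epsilon(h))$ and let $T(S)$ denote the stated right-hand side of the bound. The target statement is $\mathbf{P}_S[\mathbf{P}_{h \sim Q}[r(h) \geq T(S)] \geq \gamma] \leq \delta$. Applying Markov's inequality to the inner probability (viewed as a non-negative function of $S$) reduces this to showing $\mathbf{P}_{S,\, h \sim Q(\cdot|S)}[r(h) \geq T(S)] \leq \delta\gamma$; in fact the computation below produces the stronger bound $\delta\gamma/m$, and this extra factor of $1/m$ is exactly what accommodates the $\ln m$ term inside $T(S)$.

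Rearranging using $e^{-(m-1)T(S)} = \delta\gamma P(C(S))/m$, the event $r(h) \geq T(S)$ is equivalent to $(1-\epsilon(h))^{m-1} \leq \delta\gamma P(C(S))/m$, or equivalently $P(C(S)) \geq m(1-\epsilon(h))^{m-1}/(\delta\gamma)$. Writing the joint probability as an expectation over $S$ and a sum over $h$ weighted by the Bayesian posterior $Q(h|S) = P(h)\mathbbm{1}[h \in C(S)]/P(C(S))$, and then interchanging the sum over $h$ with the expectation over $S$, gives
\[
\mathbf{P}_{S,h}[r(h) \geq T(S)] = \sum_{h \in \mathcal{H}} P(h)\, \mathbf{E}_S\!\left[\frac{\mathbbm{1}[h \in C(S)]\,\mathbbm{1}[P(C(S)) \geq m(1-\epsilon(h))^{m-1}/(\delta\gamma)]}{P(C(S))}\right].
\]
The main obstacle is handling the random normalizer $P(C(S))$ in the denominator, which has no uniform lower bound over $S$ and is exactly what blocks a direct union-bound argument.

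The resolution is the elementary inequality $\mathbbm{1}[X \geq c]/X \leq 1/c$ for positive $X$ and $c$, applied with $X = P(C(S))$ and $c = m(1-\epsilon(h))^{m-1}/(\delta\gamma)$. This replaces the random factor $1/P(C(S))$ by a deterministic one, after which only $\mathbf{E}_S[\mathbbm{1}[h \in C(S)]]$ remains; by realizability, $h \in C(S)$ is equivalent to $h(x_i) = t(x_i)$ on each of the $m$ independent draws, so this expectation equals $(1-\epsilon(h))^m$. Each summand is thus at most $P(h)\cdot\delta\gamma(1-\epsilon(h))/m \leq P(h)\delta\gamma/m$, and summing over $h$ yields $\mathbf{P}_{S,h}[r(h) \geq T(S)] \leq \delta\gamma/m$. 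Markov's inequality over $S$ then gives $\mathbf{P}_S[\mathbf{P}_h[r(h) \geq T(S)] \geq \gamma] \leq \delta/m \leq \delta$, which is the desired statement.
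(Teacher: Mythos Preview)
Your proof is correct, and it takes a genuinely different route from the paper's.

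The paper invokes McAllester's \emph{quantifier reversal lemma} as a black box: starting from the single-hypothesis tail bound $\mathbf{P}_S[c\in C(S)]=(1-\epsilon(c))^m$, it reverses the order of the $c$- and $S$-quantifiers at the cost of introducing a free parameter $\beta\in(0,1)$, then converts the prior-probability statement on $c$ into a posterior-probability statement by dividing by $P(C(S))$, and finally sets $\beta=1/m$ (which is what produces both the $\ln m$ in the numerator and the $m-1$ in the denominator).

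You instead compute the joint probability $\mathbf{P}_{S,\,h\sim Q}[\,\cdot\,]$ directly: after Fubini, the single obstruction is the random normalizer $1/P(C(S))$, which you neutralise with the pointwise inequality $\mathbbm{1}[X\ge c]/X\le 1/c$. This is elementary and self-contained---no external lemma is needed---and it even yields a slightly sharper conclusion: your joint bound $\delta\gamma/m$ shows that the $\ln m$ term in the stated theorem is not actually required (equivalently, one could keep $\ln m$ and strengthen the $S$-confidence to $1-\delta/m$). The paper's route, by contrast, cannot avoid the $\ln(1/\beta)$ penalty inherent in the quantifier reversal lemma, so the $\ln m$ appears essential from that perspective. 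The paper's approach does have the advantage of situating the result within the broader PAC-Bayes machinery via a reusable lemma, whereas your argument is bespoke to this bound.
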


The proof is presented in \cref{proof:pac_bayes_theorem}. It closely follows that of the original PAC-Bayesian theorem by McAllister, with the main technical step relying on the quantifier reversal lemma of \citet{mcallester1998some}. Note that the bound is essentially the same as that of \cite{langford2001bounds}, except for the fact that it holds in probability and it adds an extra term dependent on the confidence parameter $\gamma$, which is usually negligible, but may be important when considering the effect of optimizer choice. The quantity $P(C(S))$ corresponds to the marginal likelihood, or Bayesian evidence of the data $S$, and we will also denote it by $P(S)$, to simplify notation.


In \cite{valle2018deep}, the authors interpreted $Q(h)$ as approximating the probability by which the stochastic algorithm (e.g.\ SGD) outputs hypothesis $h$ after training. The preceding bound relaxes this assumption, because it shows that in some sense, the bound holds for ``almost all'' of the zero-error region of parameter space. More precisely, it holds with high probability over the posterior. This suggests that SGD may not need to approximate the Bayesian inference as closely, for this bound to be useful. Nevertheless, \citet{mingard2020sgd} gave empirical results showing that, for DNNs, the distribution over functions that SGD samples from, approximates the Bayesian posterior rather closely. A fully rigorous generalization error bound for DNNs would need further analysis of SGD dynamics, but we believe these theoretical and empirical results strongly suggest that the PAC-Bayes bound should be applicable to SGD-trained DNNs.

Because it applies to the Bayesian posterior only, the bound in \cref{pac_bayes_theorem} does not apply universally over a large family of posteriors, like standard deterministic PAC-Bayes bounds do, which can be shown to sometimes give loose bounds \citep{nagarajan2019uniform}. Furthermore, as we will show in \cref{bayesian_optimality}, the bound is in a certain sense asymptotically optimal in the limit of large training set size.

We expect our  bound to give significantly tighter results than previous PAC-Bayes bounds applied to DNNs, because rather than working with parameters, our bound works directly with posteriors and priors in \emph{function space}. Since the parameter-function map \citep{valle2018deep} of DNNs is many-to-one, with a lot of parameter-redundancy, it is not hard to construct situations where $KL(Q_{\text{par}}||P_{\text{par}})$ between a parameter-space posterior $Q_{\text{par}}$ and prior $P_{\text{par}}$ is high, but $KL(Q||P)$ between the induced posterior and prior in function-space is low. In fact, in \cref{proof:kl_div_inequality}, we show that the following inequality holds
\begin{equation}
    KL(Q||P)\leq KL(Q_{\text{par}}||P_{\text{par}})
\end{equation}
which implies that it is always better (or at least not worse) to consider PAC-Bayes bounds in function space for parametrized models, if possible. Furthermore, in \cref{experimental_results}, we will empirically verify that our bound gives good predictions for SGD-trained DNNs, and satisfies most of our desiderata for a generalization error bound.  Thus our empirical results corroborate our expectation of better agreement above. 

\section{Optimality of data-dependent bounds}\label{optimality_nonuniform}

\subsection{General definitions for optimality} 

In \cref{vc_dimension} we saw that VC dimension bounds are provably optimal (up to a constant) among the set of data-independent algorithm-independet uniform bounds. The question of optimality is more difficult for the other types of bounds. As will be shown  below, the optimal data-dependent bound may depend on a chosen prior over data distributions, as well as the algorithm. We won't consider notions of optimality for data-independent non-uniform, or algorithmic-dependent bounds (e.g.\ uniform stability bounds). Instead,  the notion of optimality we treat here is for data-dependent algorithm-dependent bounds.

To explore this notion of optimality, we make the following two definitions, using a simplified notation for the right hand side of the inequality in \cref{general-pac}, which we refer to as the \emph{value} of the bound:
\begin{equation*}
    B(S)=\frac{f_\mathcal{A}(S)}{m^\alpha}.
\end{equation*}

\begin{definition}
A PAC bound of the form in \cref{general-pac} with value $B(S)$ is called \emph{distribution-admissible} or \emph{distribution-Pareto optimal} for algorithm $\mathcal{A}$ if there does not exist another bound for algorithm $\mathcal{A}$ with value $B'$ such that for all $\mathcal{D}$, for all $\delta>0$, and for all $m$, $\mathbf{E}_{S\sim\mathcal{D}^m}[B'(S)] \leq \mathbf{E}_{S\sim\mathcal{D}^m}[B(S)]$, and $\mathbf{E}_{S\sim\mathcal{D}^m}[B'(S)] < \mathbf{E}_{S\sim\mathcal{D}^m}[B(S)]$ for some $\mathcal{D}$,$\delta$,$m$.
\end{definition}

\begin{definition}
A PAC bound of the form in \cref{general-pac} with value $B(S)$ is called \emph{optimal with respect to a prior $\Pi$} over data distributions, for algorithm $\mathcal{A}$, for a training set size $m$ and confidence level $\delta$ if it minimizes the value of $\mathbf{E}_{\mathcal{D}\sim \Pi, S\sim \mathcal{D}^m}[B(S)]$ over valid PAC bounds for algorithm $\mathcal{A}$.
\end{definition}
We also say that the bound is optimal with respect to a prior $\Pi$ over data distributions, for algorithm $\mathcal{A}$, if it is optimal in the above sense for all $m$, and $\delta\in [0,1]$.

Any of these definitions can be extended to require optimality over a family of algorithms rather than a single one. Analogous definitions can also be made for data-dependent uniform bounds like \cref{rademacher-bound}.  For hypothesis classes where the Rademacher complexity dominates the $O(1/\sqrt{m})$ term (in expectation over $S$, for any $\mathcal{D}$) so that the lower bound closely matches the upper bound, then not only is \cref{rademacher-bound} (approximately) distribution-Pareto optimal, but it's the unique distribution-Pareto optimal uniform generalization gap bound, and therefore the optimal uniform generalization gap bound for any prior $\Pi$ (up to lower order terms).

For non-uniform data-dependent bounds, the authors are not aware of any result showing optimality. However, as we saw in section \ref{srm_bounds}, non-uniform bounds are typically based on choosing a ``prior'', and the expected value of the bound for different $\Pi$ depends heavily on the choice of this ``prior''. This suggests that perhaps there isn't a unique notion of optimality (i.e.\ the optimal bound could depend on the assumed true prior $\Pi$), but doesn't prove this negative result because these are only upper bounds. We will see in the next section that for some pairs of algorithms and priors over data distributions, the marginal-likelihood PAC-Bayes bound asymptotically matches (up to a constant) the expected generalization error (which is a lower bound on the expected value of any generalization error upper bound) and is thus asymptotically optimal (up to a constant) according to our definition. This result, combined with the empirical result that the average value of the bound empirically depends on the data distribution, also implies that the optimal PAC bound, for a fixed algorithm, depends on the prior over data distributions.

\subsection{Bayesian optimality of the marginal-likelihood PAC-Bayesian bound}
\label{bayesian_optimality}

In this section, we present a theorem to show that under certain conditions on the algorithm and distribution, the marginal-likelihood PAC-Bayes bound is tight. In particular, we will show that if the generalization error decreases as a power law with training set size $m$, the bound is asymptotically optimal up to a constant.

We again consider binary classification. Let $S_m=\{(x_i,y_i)\}_{i=1}^m$ be a training set of size $m$. Consider sampling one more sample $(x_{m+1},y_{m+1})$ to obtain $S_{m+1}=\{(x_i,y_i)\}_{i=1}^{m+1}$. We think of $S_m$ and $S_{m+1}$ as random variables with distributions determined by $\mathcal{D}$. We also define $P_e(x,y,S):=1-P(y|S;x)$ to be the probability that a Bayesian posterior $P$ conditioned on a training set $S$ and a test point $x$ predicts the incorrect label. We will denote $P_e(m):=P(x,y,S_m)$, where the dependence on $x,y,S_m$ is left implicit. Note that $\mathbf{E}_{(x,y)\sim \mathcal{D}}[P_e(m)] = \epsilon(\mathcal{A}(S_m))$ where $\mathcal{A}$ is a learning algorithm that returns a function sampled from the Bayesian posterior $P$ conditioned on training set $S_m$, and $\epsilon$ is the average generalization error (averaged both over posterior samples). We will let $\epsilon(m) = \epsilon(\mathcal{A}(S_m))$ where we omit the dependence on $\mathcal{A}$ and $S_m$ for brevity.


In the following we denote with angle brackets $\langle \cdot \rangle$ an average over $S_{m+1}$ (which includes average over $S_m$). Note that $\langle P_e(m) \rangle = \langle \epsilon(m) \rangle$, and it's also the same as $\epsilon(m)$ averaged over $S_m$, because $\epsilon(m)$ already includes an average over one test point sampled from $\mathcal{D}$.

\begin{lemma}\label{lemma:logP_epsilon}
Assume $\langle \epsilon(m) \rangle =o(1)$ as $m\to \infty$. Then, assuming that for some $E<1$, for all $S_{m+1}$, $P_e(m) \leq E$ , there exists a constant $C$, such that, as $m\to \infty$,
\begin{equation}
    \langle \log{P(S_m)} \rangle - \langle \log{P(S_{m+1})} \rangle \sim C \langle \epsilon(m) \rangle.
\end{equation}
Furthermore, if $\text{Var}(P_e(m))=o(\langle \epsilon \rangle)$, then $C=1$.
\end{lemma}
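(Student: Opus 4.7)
The plan is to exploit the telescoping identity for the marginal likelihood: since the $(x_{m+1},y_{m+1})$ pair is sampled i.i.d.\ and $P$ is the Bayesian posterior, the ratio $P(S_{m+1})/P(S_m)$ equals the posterior-predictive probability that the sampled hypothesis agrees with $y_{m+1}$ at $x_{m+1}$, which is exactly $1-P_e(m)$. Taking expectations therefore yields
\begin{equation*}
\langle \log P(S_m) \rangle - \langle \log P(S_{m+1}) \rangle \;=\; -\langle \log(1-P_e(m)) \rangle.
\end{equation*}
All subsequent work is aimed at controlling the right-hand side in terms of $\langle \epsilon(m) \rangle = \langle P_e(m) \rangle$.

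The next step is a Taylor expansion of $-\log(1-x)$ on $[0,E]$. Writing $-\log(1-x) = x + x^2 g(x)$, the boundedness hypothesis $P_e(m)\le E<1$ ensures that $g$ is bounded above by some constant $G = G(E)$ (explicitly, $G = \sum_{k\ge 0} E^k/(k+2) = -E^{-2}\log(1-E) - E^{-1}$). Plugging in $x=P_e(m)$ and taking expectations gives
\begin{equation*}
-\langle \log(1-P_e(m))\rangle \;=\; \langle P_e(m)\rangle + \langle g(P_e(m))\,P_e(m)^2\rangle,
\end{equation*}
so the difference $\text{LHS} - \langle\epsilon(m)\rangle$ is controlled by $G\,\langle P_e(m)^2\rangle$. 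The crude bound $P_e(m)^2 \le E\,P_e(m)$ gives $\langle P_e(m)^2\rangle \le E\langle\epsilon(m)\rangle$, hence the ratio $\text{LHS}/\langle\epsilon(m)\rangle$ lies in an interval $[1,\,1+EG]$ for all large $m$; this establishes the first claim with a constant $C$ in this interval (and, under any additional regularity that makes the ratio converge, pins $C$ down exactly).

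For the sharpened claim $C=1$ under $\text{Var}(P_e(m)) = o(\langle\epsilon(m)\rangle)$, the key identity is $\langle P_e(m)^2\rangle = \text{Var}(P_e(m)) + \langle P_e(m)\rangle^2$. The variance hypothesis makes the first summand $o(\langle\epsilon(m)\rangle)$; the hypothesis $\langle\epsilon(m)\rangle = o(1)$ makes the second summand $\langle\epsilon(m)\rangle^2 = o(\langle\epsilon(m)\rangle)$. Therefore $\langle P_e(m)^2\rangle = o(\langle\epsilon(m)\rangle)$, the Taylor remainder is $o(\langle\epsilon(m)\rangle)$, and the ratio $\text{LHS}/\langle\epsilon(m)\rangle$ converges to $1$, giving $\sim 1\cdot\langle\epsilon(m)\rangle$.

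The main obstacle is really only conceptual rather than technical: one must avoid treating $P_e(m)$ as pointwise small (it need not be, even as its mean vanishes), and instead use the uniform bound $P_e(m)\le E$ to justify a single Taylor remainder estimate in expectation. Everything else — the telescoping of the marginal likelihood, the expansion of $\log(1-x)$, and the variance decomposition — is routine.
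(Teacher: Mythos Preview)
Your argument is correct and follows the same overall skeleton as the paper: the telescoping identity $P(S_{m+1})=P(S_m)(1-P_e(m))$ reduces everything to controlling $-\langle\log(1-P_e(m))\rangle$, the lower bound is the trivial $-\log(1-x)\ge x$, and the second claim is obtained via the decomposition $\langle P_e(m)^2\rangle = \text{Var}(P_e(m)) + \langle P_e(m)\rangle^2$.

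Where you differ from the paper is in how you handle the upper bound on the remainder. The paper does not use the hypothesis $P_e(m)\le E$ globally; instead it works with the inequality $-\log(1-x)\le x+x^2$, which only holds for $x\lesssim 0.6$, and therefore has to split into the event $\{P_e(m)\le 0.6\}$ and its complement. The tail event is then controlled by Markov's inequality (for the first claim) and by Chebyshev's inequality (for the second), with the assumption $P_e(m)\le E$ invoked only to cap $-\log(1-P_e(m))$ by $E'=-\log(1-E)$ on that tail. Your approach is more direct: you exploit $P_e(m)\le E<1$ from the outset to bound the Taylor remainder $g(x)$ uniformly on $[0,E]$, which removes the need for any case split or concentration inequality. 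This buys a shorter and conceptually cleaner proof, at the (negligible) cost of a slightly larger explicit constant in the first claim. Both arguments, incidentally, establish only that the ratio lies in a bounded interval $[1,C']$, so the ``$\sim C$'' in the lemma statement should be read as ``same order'' rather than strict asymptotic equivalence --- a point you correctly flag.
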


The proof is found in \cref{proof:logP_epsilon}.

\begin{theorem}\label{main_optimality_theorem}
 Assume $\langle \epsilon(m) \rangle \sim b m^{-\alpha}$ as $m\to \infty$, with $0< \alpha < 1$. Then, assuming that for some $E<1$, for all $S_{m+1}$, $P_e(m) \leq E$ , there exists a constant $C'$, such that, as $m\to \infty$, 
\begin{equation}
    -\langle \log{P(S_m)} \rangle \sim C' bm^{1-\alpha}
\end{equation}
Furthermore, if $\text{Var}(P_e(m))=o(\langle \epsilon \rangle)$, then $C'=\frac{1}{1-\alpha}$.
\end{theorem}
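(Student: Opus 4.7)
The plan is to telescope $-\langle \log P(S_m)\rangle$ into a sum of one-step differences and then apply Lemma~\ref{lemma:logP_epsilon} term-by-term. Writing $\Delta_k := \langle \log P(S_k)\rangle - \langle \log P(S_{k+1})\rangle$, we have the identity
\begin{equation*}
    -\langle \log P(S_m)\rangle \;=\; -\langle \log P(S_0)\rangle + \sum_{k=0}^{m-1} \Delta_k,
\end{equation*}
where $\langle \log P(S_0)\rangle$ is just a finite constant (the log of the prior mass on the trivial empty sample), and so becomes irrelevant compared to the diverging sum. Lemma~\ref{lemma:logP_epsilon} gives, for each $k$ large enough, $\Delta_k \sim C\langle \epsilon(k)\rangle$, and under our assumption $\langle \epsilon(k)\rangle \sim b\,k^{-\alpha}$, we therefore have $\Delta_k \sim C b\,k^{-\alpha}$.

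The next step is to sum these asymptotics. I would invoke the standard fact (a consequence of Stolz--Ces\`aro, or equivalently: if $a_k, b_k > 0$, $a_k \sim b_k$, and $\sum b_k$ diverges, then $\sum_{k=1}^{m} a_k \sim \sum_{k=1}^{m} b_k$). Since $0<\alpha<1$, the series $\sum_{k\geq 1} k^{-\alpha}$ diverges, so the hypothesis is satisfied. A short calculation, either via comparison with $\int_1^{m} x^{-\alpha}\,dx$ or Euler--Maclaurin, yields
\begin{equation*}
    \sum_{k=1}^{m-1} k^{-\alpha} \;\sim\; \frac{m^{1-\alpha}}{1-\alpha} \quad \text{as } m\to\infty.
\end{equation*}
Combining these ingredients,
\begin{equation*}
    -\langle \log P(S_m)\rangle \;\sim\; C b \cdot \frac{m^{1-\alpha}}{1-\alpha},
\end{equation*}
so the theorem holds with $C' = C/(1-\alpha)$. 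When $\mathrm{Var}(P_e(m)) = o(\langle \epsilon\rangle)$, Lemma~\ref{lemma:logP_epsilon} gives $C=1$ and hence $C' = 1/(1-\alpha)$, as claimed.

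The one step that needs a little care is the transfer from a termwise asymptotic to an asymptotic of the partial sums. Strictly speaking, Lemma~\ref{lemma:logP_epsilon} gives $\Delta_k/\langle\epsilon(k)\rangle \to C$ only as $k\to\infty$, so a small number of initial terms may deviate; however, these contribute an $O(1)$ additive error, which is negligible compared to the $m^{1-\alpha}\to\infty$ leading order. Beyond this, the argument is purely routine: the bounded positivity of $P_e$ together with the assumed $E<1$ is what keeps the lemma applicable for all sufficiently large $k$, and after that the computation reduces to the elementary summation above. So the main obstacle is really bookkeeping rather than any new idea: confirming that the $O(1)$ boundary contribution and the finitely many ``bad'' early terms do not interfere with the leading-order asymptotic, and that the asymptotic-summation lemma applies cleanly.
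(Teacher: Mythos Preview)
Your proposal is correct and follows essentially the same approach as the paper: telescope $-\langle \log P(S_m)\rangle$ into one-step differences, apply Lemma~\ref{lemma:logP_epsilon} to each, and sum using the power-law assumption to obtain $C' = C/(1-\alpha)$. If anything, you are more careful than the paper in explicitly justifying the passage from termwise asymptotics to the asymptotic of the partial sums via the divergent-series summation lemma.
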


\begin{proof}
We can write $-\langle \log{P(S_m)} \rangle$ using a telescoping sum

\begin{align}-\langle \log{P(S_m)} \rangle &=-\langle \log{P(S_0)} \rangle+\sum_{m'=0}^{m-1} \left[-\langle \log{P(S_{m'+1})}\rangle - -\langle \log{P(S_m')} \rangle \right]\\
 &\sim C\sum_{m'=0}^{m-1} \langle \epsilon(m')\rangle\\
 &\sim \frac{Cb}{1-\alpha} m^{1-\alpha}
\end{align}

The second step uses \cref{lemma:logP_epsilon}, and the third uses the assumption in the theorem. Furthermore, from \cref{lemma:logP_epsilon}, the second part of the theorem directly follows.
\end{proof}

In \cref{main_optimality_theorem}, $\langle \epsilon(m)\rangle$ is the actual expected error averaged over training sets of size $m$. This is a lower bound for the training-set-averaged value of any upper bound on the expected error, such as the PAC-Bayes bound \citep{mcallester1998some}. It is also a lower bound on the average value of high-probability bounds like \cref{pac_bayes_theorem}. On the other hand, the PAC-Bayes bound has an expected value with a leading order behaviour given by
\begin{equation}
    \mathbf{E}_{S_m\sim \mathcal{D}^m}\left[\text{PB}(m)\right]\sim \frac{-\langle\log{P(S_m)}\rangle}{m}\sim C' m^{-\alpha},
\end{equation}
which up to a constant, matches the asymptotic behaviour of the lower bound given by $\langle \epsilon(m)\rangle$ itself. Therefore, we can conclude that the PAC-Bayes bound \cref{pac_bayes_bound} is asymptotically optimal (up to a constant, which may be computable a priori), for pairs of priors and learning algorithms that satisfy the theorem assumptions.

The main assumption of the theorem is that $\langle \epsilon(m)\rangle$ follows a power law behaviour asymptotically in $m$. As we mentioned in \cref{desiderata}, this has been empirically found to be the case for sufficiently expressive deep learning models. \citet{spigler2019asymptotic} could prove that the learning curve follows a power law for stationary Gaussian processes in a misspecified teacher-student scenario, and input instances distributed on a lattice. They could also compute the power law exponents analytically. More recently, \citet{bordelon2020spectrum} developed a theory of average generalization error learning curves (see \cref{average_generalization_error}) and proposed an explanation for the power law behaviour based on some assumptions on the data distribution.




The second assumption, which is only necessary to get a smaller value of the constant is that $\text{Var}(P_e(m))=o(\langle \epsilon(m) \rangle)$. This seems plausible in many situations. For large $m$, most of the variation in $P_e(m)$ should typically come from the choice of test point $x$ (on which $P_e(m)=P_e(x,y,S_m)$ implicitly depends) rather than the choice of $S_m$. We can consider two extreme cases. If $P_e(x,y,S_m)$ is $1$ for some $x$ and $0$ for all other $x$, then $P_e(x,y,S_m)$ is a Bernoulli variable, and the variance is of the same order as the mean. On the other extreme, if $P_e(x,y,S_m)=\epsilon(m)$ for all $x$, then the variance (coming from the choice of $x$) is $0$. It seems plausible that in many situations, we find something in between. 


In \cref{experimental_results}, we will show extensive empirical evidence that the PAC-Bayes bound follows a power-law behaviour with an exponent closely matching the empirically-measured test error. We sometimes observe deviations, but these are likely coming  from the use of the expectation-propagation (EP) approximation, which empirically appears to introduce systematic errors, as a power law \citep{mingard2020sgd}. Furthermore, we observe a positive correlation between the exponent $\alpha$ and the proportionality constant relating the bound and the error, $C'$, as predicted by \cref{main_optimality_theorem}.



%
%
%
%
%

\newpage 

\section{Experimental results for the marginal-likelihood PAC-Bayes bound}
\label{experimental_results}

In \cref{desiderata} we presented a series of seven desiderata for a generalization theory of deep learning, and in \cref{comparing_bounds}, we used this framework to compare a wide range of different bounds from  the literature.  In this section  we perform extensive experiments that are needed to test the marginal-likelihood PAC-Bayes bound against the desiderata, especially D.1-3 (performance when varying dataset, architecture, and training set size).  This empirical work provides an example of how to test a bound in detail against the desiderata. In \cref{marglik_bound_desiderata}, we will discuss the comparison of our bound against all the desiderata. 

The key quantity needed to compute the bound \cref{pac_bayes_theorem} for DNNs is the marginal likelihood, which measures the capacity term in the bound.  We follow the same approach as \citet{valle2018deep} and calculate the marginal likelihood using the Gaussian process (GP) approximation to DNNs (also called neural network GP, or NNGP), proposed in several recent works \citep{lee2017deep,matthews2018gaussian,novak2018bayesian,garriga2018convnets,yang2019tensor}, and which can be used to approximate Bayesian inference in DNNs. This approximation requires computing the GP kernel of the NNGP for the inputs in the training set $S$, which in the infinite width limit, equals the covariance matrix of the outputs of the DNN with random weights, for inputs in $S$. This can be computed anlaytically for FCNs, but for more complex archiectures this is unfeasible, so we rely on the Monte Carlo approximation used in \citet{novak2018bayesian}. We approximate the kernel by an estimator of the empirical covariance matrix for the vector of outputs of the DNN, computed from $M$ random initializations of the DNN.
The marginal likelihood is then approximated using the expectation-propagation (EP) approximation, as was done in previous works \citep{valle2018deep,mingard2020sgd}. See \cref{app:marginal_likelihood} for more details. We note that for some experiments, running the network to convergence or computing the EP approximation turned out to be too computationally expensive to achieve within our budget, which is why several architectures are not present for some of the experiments for larger training set sizes (e.g.\ those in \cref{error_vs_arch}).  Code for the experiments can be found in \url{https://github.com/guillefix/nn-pacbayes}

In the following experiments, the datasets are binarized, and all DNNs are trained using Adam optimizer to 0 training error. The test error is measured on the full test set for the different datasets used, while the training set is sampled from the full training set provided by these datasets. For full experimental details see \cref{experimental_details}.

\subsection{Error versus label corruption (Desideratum D.1)}
\label{error_vs_label_corruption}

Desideratum \textbf{D.1} requires that the bound correlates with the error as we change the dataset complexity. We test this correlation in two ways: by directly corrupting a fraction of the labels of a standard dataset, increasing its complexity, and by comparing different standard datasets differing in complexity.

\begin{figure}[H]
    \centering
    \begin{subfigure}[t]{0.49\textwidth}
        \includegraphics[width=1.0\textwidth]{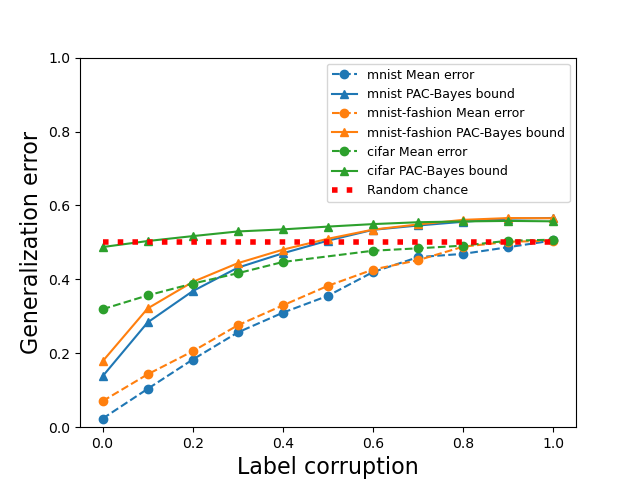}
        \caption{}
    \end{subfigure}
    \begin{subfigure}[t]{0.49\textwidth}
        \includegraphics[width=1.0\textwidth]{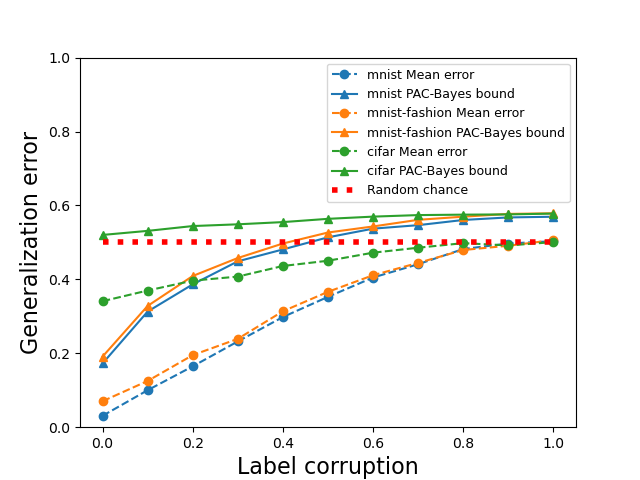}
        \caption{}
    \end{subfigure}
    \caption{\textbf{Generalization error versus amount of label corruption (D1)}. PAC-Bayes bound and generalization error versus amount of label corruption, \textbf{(a)} for a CNN with 4 layers and no pooling and \textbf{(b)} for a FCN with 2 layers, for three datasets and a training set of size $10000$. See \cref{experimental_details} for more details on architectures. The bound follows the expected trends with complexity, both for increasing label corruption, and for the ordering among the datasets.  The dashed red line denotes the 50\% error expected upon random guessing.}
    \label{fig:pac-bayes-compsweep}
\end{figure}

In \cref{fig:pac-bayes-compsweep}, we show the true test error and the PAC-Bayes bound for a CNN and a FCN, as we increase the label corruption for three different datasets (CIFAR10, MNIST, and Fashion MNIST), which have been binarized. We find that the bound is not only relatively tight, but qualitatively follows the behaviour of the true error, increasing with complexity, as well as preserving the order among the three datasets. In \cref{fig:lc_main} and \cref{fig:lc_somenets}, we present more datasets, and observe that for sufficiently large training set size, the bound typically can correctly predict in which datasets the networks will generalize better.

\subsection{Error versus training set size (Desideratum D.2)}
\label{learning_curves}

Desideratum \textbf{D.2} requires that the bound predicts the change in error as we increase the training set size. As mentioned previously, several works \citep{hestness2017deep,novak2019neural,rosenfeld2019constructive,kaplan2020scaling} have found that learning curves for DNNs tend to show a power law behaviour with an exponent that depends on the dataset, but not significantly on the architecture. However, we note that in some work on learning curves for Gaussian processes \citep{sollich2002gaussian,spigler2019asymptotic,bordelon2020spectrum}, more complex types of learning curves have also been observed (sometimes involving several regions of non-monotonicity), suggesting that DNNs may potentially show more complex learning curves, different from power laws, in some regimes. As a simple example, in \citet{hestness2017deep}, it was pointed out that in the low data regime learning curves do not show the asymptotic power law behaviour. Another example is the double-descent behaviour with respect to data size observed in \citet{nakkiran2019deep}. 

\begin{figure}[H]
    \centering
    \includegraphics[width=0.8\textwidth]{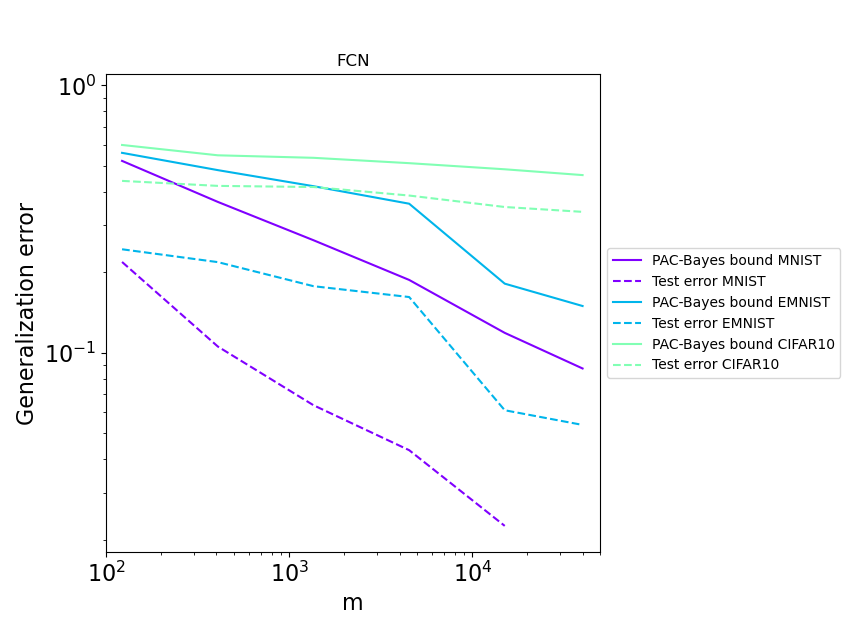}
    \caption{\textbf{Learning curves for fully connected network (batch 256)}. Solid and dashed line show, respectively, the empirical test error and the PAC-Bayes bounds for fully connected network (FCN) with 2 hidden layer, versus training set size $m$, for three different datasets. The DNNs were trained using Adam and batch size 256 to 0 training error.}
    \label{fig:lc_fc_main}
\end{figure}

We empirically computed the learning curves for many combinations of architectures and datasets, as well as the corresponding PAC-Bayes bounds.  In \cref{fig:lc_fc_main} and \cref{fig:lc_main}, we show the learning curves for three representative architectures for different datasets. The exponent of the learning curve clearly depends on the dataset, but less strongly on the architecture. The PAC-Bayes bound approximately matches the power law exponent of the empirical learning curves, as predicted by \cref{main_optimality_theorem}. For \cref{fig:lc_fc_main}, the bound even predicts the quick drop in generalization for EMNIST between $m=4k$ and $m=15k$ for the FCN. However, we found that this fine grained agreement doesn't typically hold for other architectures, for which the bound only matches the overall power law behaviour of the learning curve.

In \cref{fig:lc_somenets}, we show the learning curves for several representative architectures for five different datasets. For each  architecture, the bound and the SGD results  have a very similar learning curve exponent, though there can be a different vertical offset that depends on the architecture.  The relative ordering in generalization performance between different architectures is typically also predicted by the bound, specially for large $m$. We will explore this trend in more detail in \cref{error_vs_arch}.

\begin{figure}[H]
    \centering
    \includegraphics[width=1.0\textwidth]{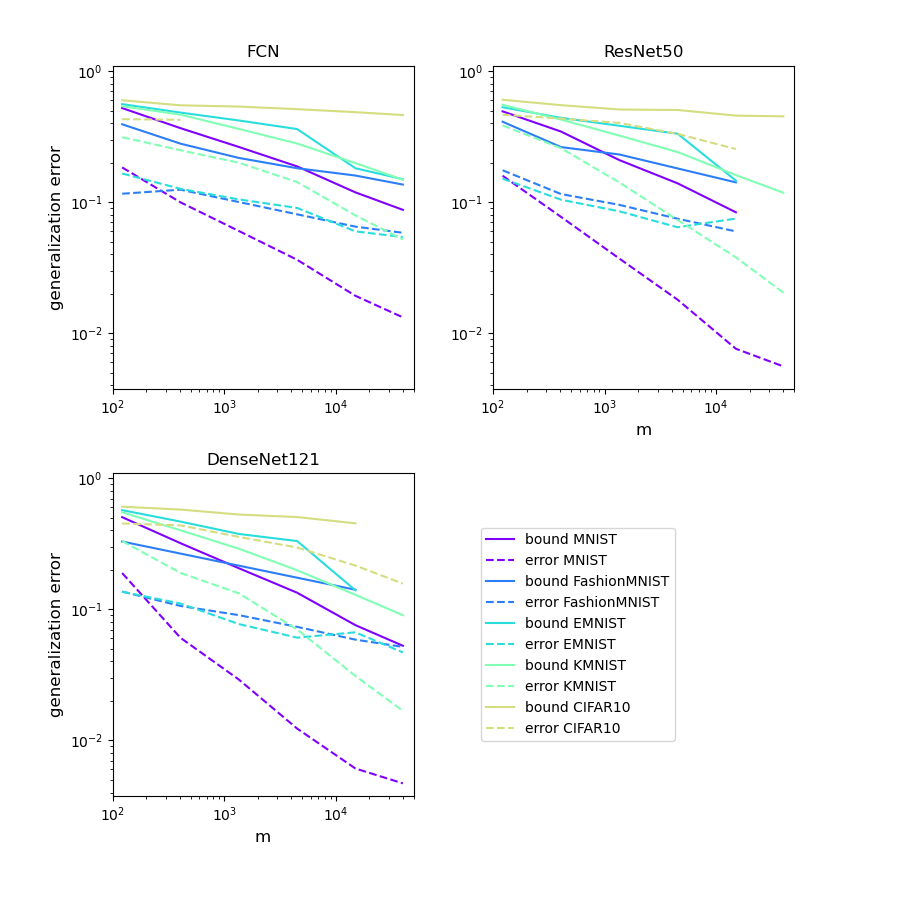}
    \caption{\textbf{Learning curves, compared by dataset, for three architectures}. Solid and dashed lines show, respectively, the empirical test error and the PAC-Bayes bounds. The architectures are a FCN, Resnet50, and Densenet121. The datasets show a range in complexity, from simpler (MNIST) to more complex (CIFAR-10). The DNNs were trained using Adam and batch size 32 to 0 training error.}
    \label{fig:lc_main}
\end{figure}

\begin{figure}[H]
    \centering
    \includegraphics[width=1.0\textwidth]{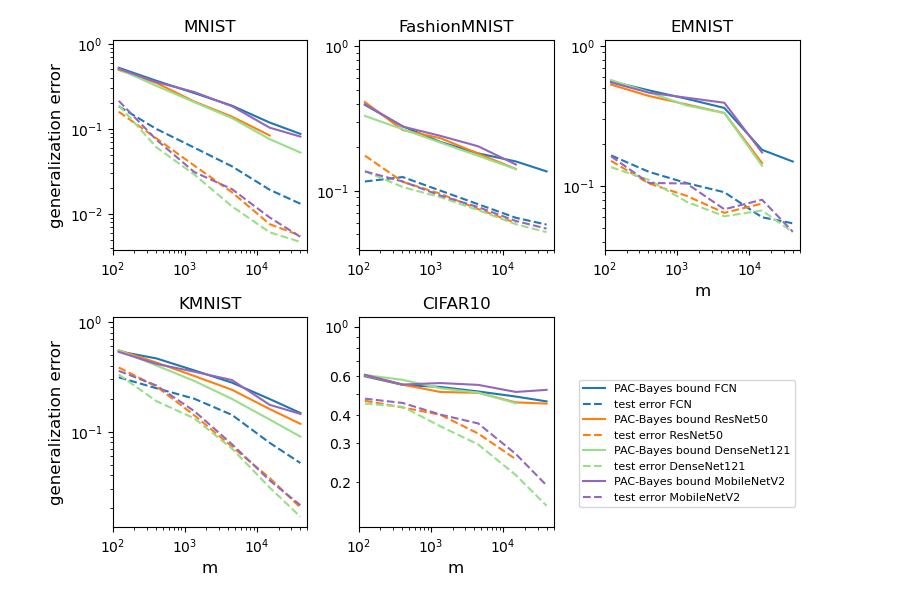}
    \caption{\textbf{Comparing different architectures}. Learning curves for the test error and the PAC-Bayes bounds for representative architectures and different datasets. Solid and dashed line show, respectively, the empirical test error and the PAC-Bayes bounds. The architectures are  FCN, Resnet50, Densenet121, and MobileNetv2. The DNNs were trained using Adam with batch size 32 to 0 training error. Different architectures show similar learning curve power law exponents, which are matched well by the PAC-Bayes bound. Note that we used slightly different y-axis ranges for each dataset, to aid the distinction of different architectures. The ordering of the PAC-Bayes bound also agrees reasonably well with the ordering of the true learning curves, when comparing architectures (Desideratum D.3).  }
    \label{fig:lc_somenets}
\end{figure}

The learning curves we observe in the figs above agree with the previous empirical observations of power law behaviour in learning curves for DNNs, with only a few exceptions, where we observe a deviation from power law behaviour. In particular the learning curve for CIFAR10 for batch 32 appears to deviate from a power law on this range of $m$. However for batch 256 it shows cleaner power law behaviour (see \cref{fig:lc_somenets_256}, \cref{fig:lc_resnets_256}, \cref{fig:lc_densenets_256} in \cref{app:lc_batch256}) that agrees better with the PAC-Bayes  bound exponent.

In \cref{fig:lc_resnets} and \cref{fig:lc_densenets}, shown in \cref{extra_learning_cuves}, we present the learning curves for several variants of ResNets and DenseNets, respectively. Within each family of similar architectures, the learning curve is even more similar. The PAC-Bayes bound matches the behaviour of the true error rather closely for the entire range of architectures and datasets used.  In particular, the power law exponent of the PAC-Bayes bound is close to that of the true learning curves for these 14 different architectures, just as was found in \cref{fig:lc_main} for three representative ones, showing that our generalization error theory is robust and widely applicable.



\begin{figure}[H]
    \centering
    \begin{subfigure}[t]{0.49\textwidth}
    \includegraphics[width=1.0\textwidth]{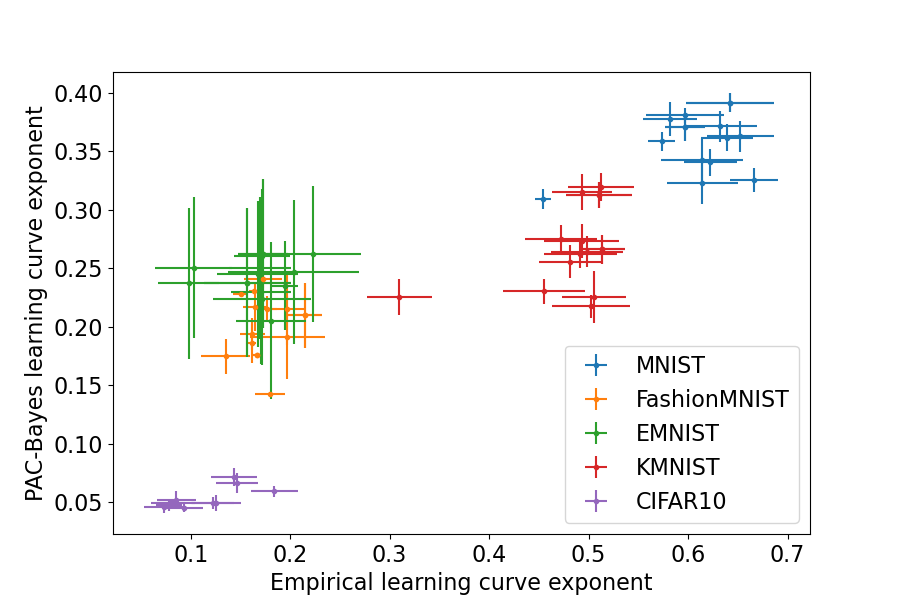}
    \caption{\label{fig:lc_exponents_estimation_exp}}
    \end{subfigure}
        \begin{subfigure}[t]{0.49\textwidth}
    \includegraphics[width=1.0\textwidth]{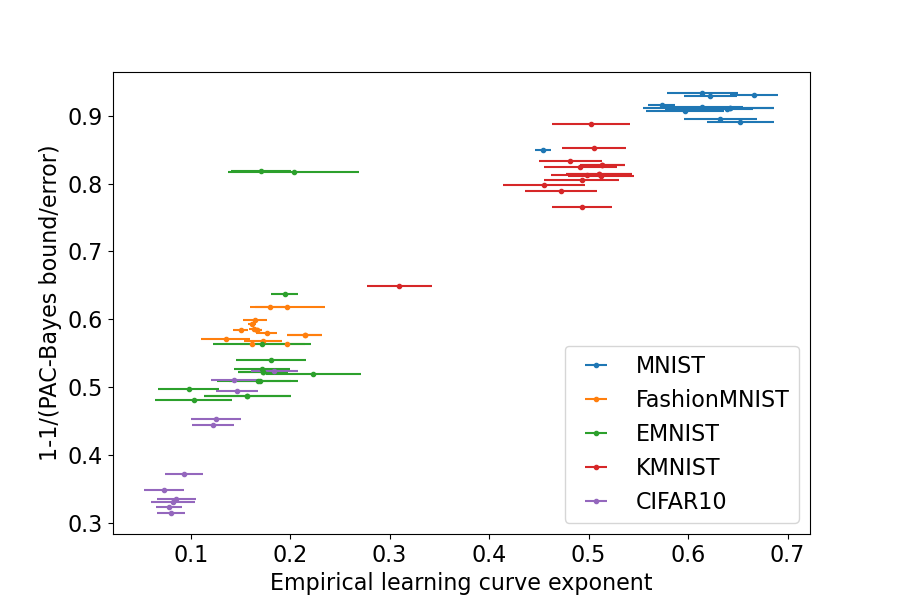}
    \caption{\label{fig:lc_exponents_estimation_ratio}}
    \end{subfigure}
    \caption{\textbf{Learning curve exponents of test error vs learning curve exponent estimated from PAC-Bayes bound}. Each marker shows the learning curve exponent, $\alpha$, measured for all the 19 architectures and 5 datasets in \cref{experimental_details}.  For the empirical error, the exponent was obtained from a linear fit to the learning curve corresponding to $\log{\epsilon}$ vs $\log{m}$ and is shown on the x-axis. For  the the PAC-Bayes bound, in  \textbf{(a)} the exponent is estimated from a linear fit to the log of the PAC-Bayes bound vs $\log{m}$. In \textbf{(b)} the exponent is estimated indirectly from $C'$ which is the ratio of the PAC-Bayes bound and the error (see \cref{main_optimality_theorem}). Then, this second method makes assumptions about how the bounds scale.
    The error bars are estimated standard errors from the linear fits. For the ratio estimate the errors due to fluctuations in dataset are negligible.  Note that the exponents cluster according to dataset. The outliers for MNIST and KMNIST are both the FCN. The DNNs were trained using Adam and batch size 32 to 0 training error.
    For some datasets (e.g.\ CIFAR10) the power law behaviour is less clear, but we still chose to include the estimated exponent for completeness. }
    \label{fig:lc_exponents_estimation}
\end{figure}

In \cref{fig:lc_exponents_estimation}, we compare the power law exponent $\alpha$ from the empirical learning curves (calculated with Adam and batch size 32), to the exponent from the PAC-Bayes bounds. 
For the empirical learning curves,  $\alpha$ is  estimated from a linear fit of $\log{\epsilon}$ vs $\log{m}$.  We note that the exponents  cluster according to the dataset, as observed in previous work \citep{hestness2017deep}. However, we also observe some smaller, but statistically significant variation in exponents within a dataset, indicating that the architecture nevertheless may play a role in the learning curve behaviour (though less significant than the dataset). One exception is the FCN which shows a significantly different exponent than other architectures -- a deviation which is also predicted by the PAC-Bayes bound, but for which we do not yet have an explanation.

To estimate the learning curve exponent for the PAC-Bayes curves, we used two methods. In \cref{fig:lc_exponents_estimation_exp}, we used linear fit to the log of the PAC-Bayes bound vs $\log{m}$, as was done for the empirical exponents. In \cref{fig:lc_exponents_estimation_ratio}, we estimate it as $\alpha=1-1/C'$ where $C'$ is the ratio of the PAC-Bayes bound and the error, which is obtained from the expression in \cref{main_optimality_theorem}. This way of deriving the exponent assumes that the condition $\text{Var}(P_e(m))=o(\langle \epsilon \rangle)$ on the variance of the error holds (although one may still expect $1-1/C'$ and the exponent to correlate even if the condition doesn't hold exactly). For both ways of estimating the error, there is a good correlation between the estimate of the exponent and the empirical exponent. The absolute value of the estimated exponent in \cref{fig:lc_exponents_estimation_exp} does show deviation from the true value, which is probably due to systematic errors in the EP approximation used to compute the marginal likelihood (this was discussed and empirically investigated in \citet{mingard2020sgd}).  It should be kept in mind that power law exponents can be sensitive to the protocol used to measure them \citep{clauset2009power,stumpf2012critical}, so the exact values we find in \cref{fig:lc_exponents_estimation} may not be as meaningful as the correlation between the empirical values and those from the bound.

\subsection{Error versus architecture (Desideratum D.3)}
\label{error_vs_arch}

Desideratum \textbf{D.3} requires that the bound correlates with the error when changing the architecture. We explore this in two ways, by varying certain common architecture hyperparameters (pooling type and depth), and by comparing several state-of-the-art (SOTA) architectures to each other. In \cref{fig:poolsweep}, we vary the pooling type, and find that the bound correctly predicts that the error is higher for max pooling than avg pooling, and both are lower than no pooling, on this particular dataset. In \cref{fig:layersweep}, we vary the number of hidden layers of a CNN trained on MNIST, and find that the bound closely tracks the change in generalization error with numbers of layers.

\begin{figure}[H]
\centering
\begin{subfigure}[t]{0.49\textwidth}
    \includegraphics[width=\textwidth]{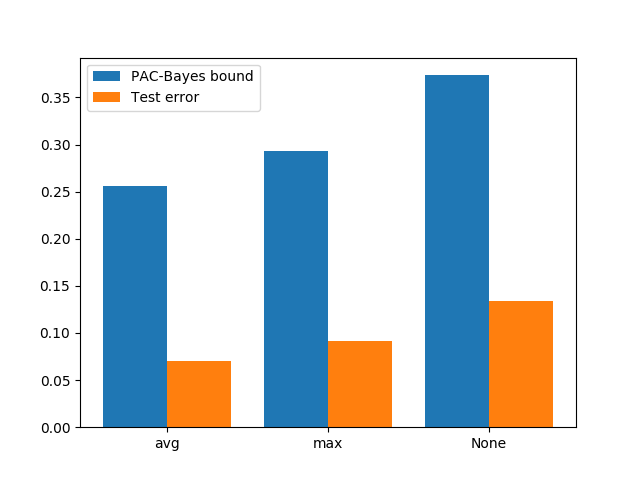}
    \caption{\label{fig:poolsweep} 
    }
\end{subfigure}
\begin{subfigure}[t]{0.49\textwidth}
    \includegraphics[width=\textwidth]{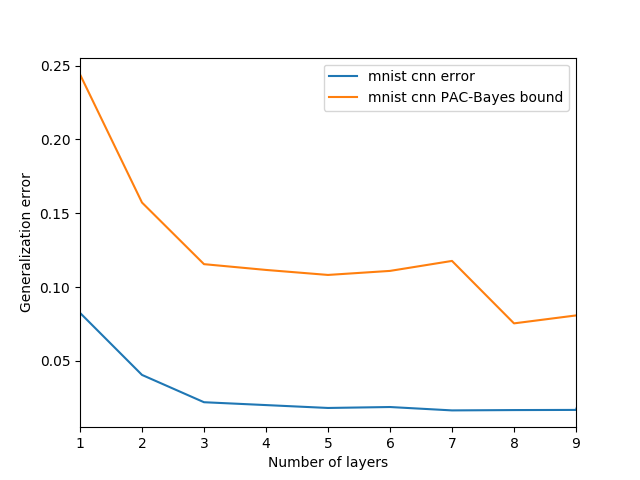}
    \caption{\label{fig:layersweep}
    }
\end{subfigure}
\caption{\label{fig:pool-layer-sweep} \textbf{PAC-Bayes bound and generalization error versus different architecture hyperparameters}. (a) Error versus pooling type, for a CNN trained on a sample of 1k images from KMNIST. (b) Error versus number of layers for a CNN trained on a sample of size 10k from MNIST. Training set error is $0$ in all experiments. We used SGD with batch 32 for both of these experiments.
}
\end{figure}

To explore more complex changes to the architecture, we plot in \cref{error_bound_32_all_datasets} the bound and error against each other for five datasets, for a set of state-of-the-art architectures, including several resnets and densenet variants (see \cref{app:training_details} for architecture details), at a fixed training set size of 15K. The results display a clear correlation, showing that our PAC-Bayes bound can help explain  why some architectures generalize better than others.

\begin{figure}[H]
    \centering
\begin{subfigure}[t]{0.49\textwidth}
    \includegraphics[width=1.0\textwidth]{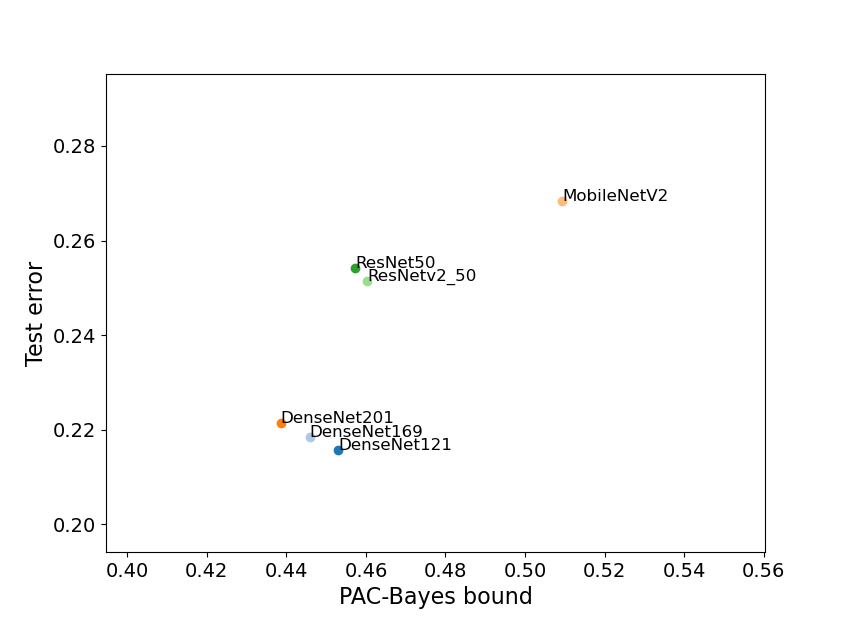}
    \caption{CIFAR10 \label{error_bound_32_cifar}}
\end{subfigure}
\foreach \dataset in {EMNIST,KMNIST,MNIST} {
\begin{subfigure}[t]{0.49\textwidth}
    \includegraphics[width=1.0\textwidth]{figures/bound_vs_error/batch32/without_cnn_fc/bound_vs_error_\dataset_15026_32.png}
    \caption{\dataset \label{error_bound_32_\dataset}}
\end{subfigure}
}
\begin{subfigure}[t]{0.49\textwidth}
    \includegraphics[width=1.0\textwidth]{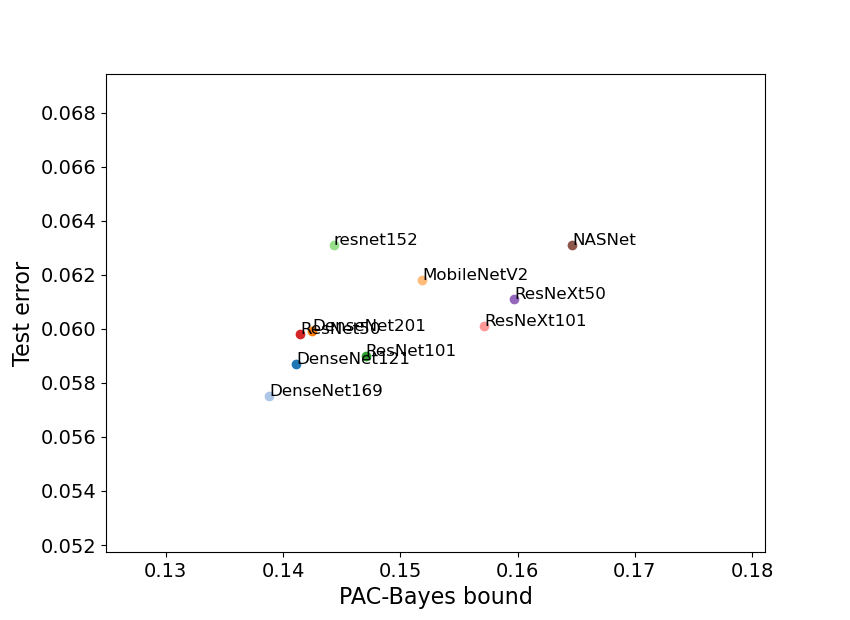}
    \caption{Fashion-MNIST \label{error_bound_32_fashion-mnist}}
\end{subfigure}
    \caption{{\bf PAC-Bayes bound versus test error for different models trained on a sample~of size 15k for the 5 datasets we study.}  The empirical test error was calculated using Adam with batch size 32. FCN and CNNs are removed for clarity as they often have the relatively extreme values of test error and/or bound, which would make the finer grained differences on these plots harder to see. In \cref{app:error_bound_with_cnn_fc}, we show these  plots with FCN and CNN included.}
    \label{error_bound_32_all_datasets}
\end{figure}

Nevertheless,  the empirical differences between architectures are rather small.  For that reason, we can't disentangle whether the deviations from the bound predictions are due to deviations from the bound assumptions (for example SGD not behaving as a Bayesian sampler), or from the different approximations used in computing the bound (for example, the EP approximation used in computing the marginal likelihood, see \cref{app:marginal_likelihood}).

\section{Evaluating the marginal-likelihood bound against the seven desiderata}
\label{marglik_bound_desiderata}

We now evaluate the marginal-likelihood PAC-Bayes bound against the seven desiderata, in the same manner as we did for the other families of bounds that we studied.

\begin{itemize}
    \item \textbf{D.1} \cmark~In \cref{error_vs_label_corruption}, and \cref{learning_curves}, we saw that the bound correctly predicts the relative performance between different datasets for all the architectures we tried, at least for sufficiently large training sets.  Thus the bound captures trends with data complexity.
    
    \item \textbf{D.2} \cmark~In \cref{learning_curves} we showed that the bound correctly predicts the overall behaviour of the learning curve for all the architectures and datasets we studied. In particular, it captures the relative ordering of the power law exponents for different datasets/architecture pairs (\cref{fig:lc_exponents_estimation}).  Thus the bound captures trends with training set size.
    
    \item \textbf{D.3} \cmark~In \cref{error_vs_arch} we showed that the bound shows a good correlation with the generalization when varying the architecture, for all the datasets. We still don't know whether any deviations are fundamental to our approach or due to the various approximations used in computing the bound. We do know that the NNGP approximation can't capture the dependence of generalization on layer width. However, \cref{pac_bayes_theorem} may still  capture these effects if the marginal likelihood could be computed for finite-width DNNs, perhaps using finite-width corrections to NNGPs~\citep{antognini2019finite,yaida2019non}.  Overall, the bound does a good job in tracking the effect of architecture changes.

    \item \textbf{D.4} \xmark~Our approach currently cannot capture effects on generalization caused by different DNN optimization algorithms, because the marginal likelihood, as we calculate it, assumes a Bayesian posterior and does not depend on the fine details of the SGD based optimization algorithms normally employed in DNNs.  In this context it is useful to consider the distinction made in  \citep{mingard2020sgd} between questions of type 1) that ask why DNNs generalize at all in the overparameterised regime, and  questions of type 2), that ask how to further fine-tune generalisation, when the algorithm already performs well, by for example changing optimiser hyperparameters such as batch size or learning rate. Effects of type 2) that are sensitive to the optimiser are hard to capture with the current version of our bound.  We are effectively relying on the observation  that, to first order, different SGD variants seem to perform similarly, and all appear  to approximate Bayesian inference \citep{mingard2020sgd}. Capturing the second order effects, e.g.\ deviations from approximate Bayesian inference due to optimiser hyperparameter tuning, would require an extension to our approach. We note that one of the main hypotheses used to explain differences in generalization among different optimization algorithms is that some algorithms are more biased towards flat solutions in parameter space than others \citep{hochreiter1997flat,keskar2016large,jastrzebski2018finding,wu2017towards,zhang2018energy,wei2019noise}. Therefore, one possibility could be to combine our PAC-Bayes approach (based on probabilities of functions), with the more standard approaches based on flatness, to capture this effect.
     
    \item \textbf{D.5} \cmark~The results in \cref{experimental_results} clearly show that the bound is non-vacuous. In fact, the logarithm in the left hand side of \cref{pac_bayes_theorem} ensures  the bound is less than one\footnote{Note that  PAC-Bayes bounds of the form derived in \cref{pac_bayes_general} are non-vacuous when the KL divergence on the left hand side is directly inverted, rather than using Pinkster's inequality. In the realizable case, the KL divergence reduces to a logarithmic form that can be easily inverted.}.
     More importantly,  we find that the bound can be relatively tight.  
    
    \item \textbf{D.6} \omark~Our bound lies on the higher end of computation cost among bounds we compare here. The most expensive steps are computing the kernel via Monte Carlo sampling and computing the marginal likelihood. The former has a complexity proportional to $O(m^2)$ times the cost of running the model, but the constant can be reduced by making the last layer wider, and it can be heavily parallelized. On the other hand, computing the marginal likelihood using the NNGP approach has a complexity of $O(m^3)$ because it requires inverting a matrix, and may only be improved by making assumptions on the kernel matrix (like being low rank). Therefore, the computational complexity of the bound scales similarly to inference in Gaussian processes. This means we could compute up to $m\approx 10^5$, but larger training set sizes would be difficult. In concurrent work, \citet{park2020towards} showed that for small enough training set sizes computations based on NNGP are competitive relative to training the corresponding DNN.
    
    \item \textbf{D.7} \omark~\cref{pac_bayes_theorem} is fully rigorous for DNNs trained with exact Bayesian inference. In \cref{pac_bayes_bound} we argue that the theorem is probably applicable to DNNs trained with SGD (and several of its variants), based on empirical evidence and arguments from \citet{valle2018deep} and \citet{mingard2020sgd}, as well as the new evidence in the current paper. Furthermore, unlike the bound used in \citet{valle2018deep}, \cref{pac_bayes_theorem} applies with high probability over the posterior (with only logarithmic dependence on the confidence parameter). This property could aid in the analysis of algorithms that sample parameter space with a distribution sufficiently similar to the Bayesian posterior. However, further work is needed to further justify the application of the bound to other optimizers. In addition, the NNGP and EP approximations which we used to evaluate the bound  may have  introduced errors for which we don't have rigorous guarantees yet.
\end{itemize}


\section{Discussion}
\label{discussion}

In this paper we provide a general framework for comparing generalization bounds for deep learning, which complements two other recent large-scale studies \citet{jiang2019fantastic,dziugaite2020search}.   Our framework has two parts. Firstly,  we introduce seven desiderata in \cref{desiderata} against which generalization theories (including the  bounds we focus on here)  should be compared.  Secondly, we classify, in  \cref{generalization_theory},  different bound types by the assumptions (if any) they make on the data and the algorithm. This classification is summarised in~\cref{tab:multicol}.

Next, in \cref{comparing_bounds}, we apply this systematic framework to analyse the performance of bounds across all the different classes.  One conclusion of this analysis is  that generalization theories for DNNs must take into account both the algorithm and the data.  On the one hand, this is not a surprise.  It is known that DNNs are highly expressive.  As  discussed in \cref{comparing_bounds}, classical worst-case analyses such as VC dimension (which ignore properties of the algorithm other than the hypothesis class) miss out on the fact that DNNs have a strong inductive bias towards certain hypotheses (for example simple ones~\citep{valle2018deep}), and therefore will pick out poor predictions that a DNN could make in principle, but does not do in practice.   More generally, this implies that bias-variance style arguments which link the capacity term in the bounds with the number of parameters in a DNN are barking up the wrong tree.  Another way of expressing these theme comes from  recent work, \citet{wilson2020bayesian}, where it is argued  that one should not conflate flexibility (corresponding to the support of the prior, or the ``size'' of the hypothesis class) with model complexity. As an example, they point out that many classes of Gaussian processes can express any function (satisfying some weak regularity conditions), yet are able to generalize. Similarly,  \citet{valle2018deep} also argued that  strong inductive bias encoded in the parameter-function map of overparametrized DNNs is a key property needed for them to generalize.  

Another corollary of the  strong inductive bias in DNNs is that generalization performance should be poor on data that deviates from this inductive bias. Indeed, since the DNNs are biased towards simple functions, they generalize badly, for example, for  complex data, see e.g.~\cite{zhang2016understanding,valle2018deep}.  Therefore the interaction of the algorithm with the data must be taken into account in an explanatory generalization theory for DNNs.  



Many different approaches to bounds can be found in the literature.  While some show promising results,   they all (as far as we can tell due to a lack of data) fall short on some key desiderata.  Nevertheless, by analysing the bounds in terms of our general framework, we can present some suggestions for ways to improve these bounds.  

Inspired by this analysis, we introduce and evaluate a marginal likelihood PAC-Bayes bound in \cref{pac_bayes_bound}, which extends the bound in \citet{valle2018deep} by bounding the actual error with high probability. We show in \cref{optimality_nonuniform} that under a few reasonable assumptions (in particular that learning curves show power law scaling behaviour), it is optimal for large training sets. We extensive test  the bound on image data sets of varying complexity and a wide range of state-of-the-art architectures, finding an overall performance that, to our knowledge, is significantly better than other bounds currently available. 

Perhaps one of the most interesting findings is that  our PAC-Bayes bound is able to capture the power law scaling of the learning curve of DNNs. This scaling has been the subject of recent important work in deep learning theory \citep{hestness2017deep,kaplan2020scaling}. However, the work of \citet{kaplan2020scaling} also highlights the importance of predicting the error as a function of data size, model size (number of parameters) \textit{and} training iterations, for the purposes of optimizing the loss for a given compute budget. Our theory works in the limit of infinitely large models, trained to convergence ($0$ training error) so it can only capture the scaling when data is the bottleneck. Extending our theory to finite-sized models and non-zero training error is an important direction for future work.

On the other hand, in a recent follow-up to their earlier study, \citet{henighan2020scaling}, argue that the learning curve for the case when very large models are trained to convergence (so that performance is bottlenecked by amount of data) poses a lower bound to the optimal-compute learning curve. They therefore suggest that beyond a certain compute budget, the data-limited learning curve determines the long term evolution of the system. If this hypothesis is true, it would mean that the kind of learning curves which we predict in this paper are the ones that asymptotically determine the fundamental limits of learning with DNN, even when optimal use of the compute budget is taken into account.

\subsection{Why the marginal-likelihood PAC-Bayes bound works so well}

The excellent performance of the  marginal-likelihood PAC-Bayes bound  when compared to other bounds in the literature begs the question of why it works so well.    We now discuss several reasons that could explain this success, and which also suggest directions where further progress in developing new generalization error bounds could be made.

Firstly, as discussed above, successful theories of generalization for DNNs must take both the algorithm and the data into account. 
Algorithm-dependent bounds
(\cref{algorithm-dependent-bounds}), including our PAC-Bayes bound, are designed to take the inductive bias of the learning algorithm into account, and 
this is a requisite feature for bounds that satisfy the desiderata.

The marginal likelihood or Bayesian evidence, which controls the behaviour of the the bound in \cref{pac_bayes_bound}, captures the interplay between the inductive bias (encoded in the prior) and the data distribution. The connection between marginal likelihood and generalization has a long history in the Bayesian literature, and can be traced back to the work of MacKay and Neal \citep{mackay1992practical,neal1994priors,mackay2003information}, who empirically found that  they were correlated and provided heuristic explanations based on  Occam's razor. There are also links to the early work on PAC-Bayes \citep{shawe1997pac,mcallester1998some} who proved generalization error bounds based on marginal likelihood or similar quantities.  Recently, a formal equivalence between marginal likelihood and cross-validation has been shown \citep{fong2020marginal}, strengthening the connection between marginal likelihood and generalization. \citet{wilson2020bayesian} also argued that the Bayesian perspective, afforded by quantities like the Bayesian evidence, may be a promising way to understand generalization in deep learning. Finally, \citet{smith2018bayesian,germain2016pac} have also drawn connections between Bayesian evidence and PAC-Bayes-related concepts, including the flatness of the minimum found by the optimizer.


A key difference with most bounds in the literature is that our marginal-likelihood PAC-Bayes bound is formulated in terms of a prior and posterior in function space.   
Several recent works on Bayesian deep learning, NNGPs, and the neural tangent kernel (NTK) \citep{lee2017deep,jacot2018neural,yang2019fine,wilson2020bayesian} have also suggested that a function space perspective on neural networks may be fruitful to understand the behaviour of DNNs, and in particular their generalization, as it is the function encoded by the network which uniquely determines the behaviour of the trained DNN. 

Most PAC-Bayes bounds in the deep learning theory literature use distributions in parameter space, mainly as a way to measure flatness (see for example section \ref{sensitivity_bounds}). As we argue in \cref{pac_bayes_bound}, using results from \cref{proof:kl_div_inequality}, the function space KL divergence should always provide a bound that is at least as tight as the KL divergence of the corresponding distributions in parameter space, but we may expect that the function space bound may sometimes be much tighter. In particular, if we interpreted our function-space bound in parameter space, we would need to find the KL divergence between the prior and a posterior with support on \emph{the whole region of parameter space with zero training error}. This region is much larger than the individual minima around which the posterior in PAC-Bayes bounds is usually concentrated -- for example, exact symmetries when swapping hidden neurons imply combinatorially many ``copies'' of almost all solutions, and recent results on ``mode-connectivity'' show that there are many neighbouring solutions which are not accessible by a Gaussian centered around a minimum, because of the topology of the level sets of the loss surface \citep{garipov2018loss}. We expect a distribution supported on a much larger region of parameter space should typically have a much smaller KL divergence with a diffuse prior such as the isotropic Gaussian which we consider, and this could be one of the main reasons why our bound gives much tighter predictions than other PAC-Bayes bounds.

In light of these arguments, we suggest studying bounds (not just PAC-Bayes bounds) which rely on the function-space picture. These could rely on the recent developments for NNGPs and the NTK. Combining function-space with parameter-space analyses (for example to take flatness into account) could also be an interesting direction for future work.


\citet{mingard2020sgd} have also shown the utility of the function space perspective to study the behaviour of stochastic gradient descent (SGD). They demonstrate that SGD, when considered as a sampling process involving initialization followed by stochastic optimization, samples functions with probabilities very close to Bayesian inference. This empirical finding explains why our PAC-Bayes bound is able to predict the learning curves of SGD-trained DNNs, even though the proof only rigorously applies to Bayesian DNNs.  Conversely, the good overall performance of our bounds also strengthens the hypothesis put forward in \citet{mingard2020sgd} that, to first order at least, SGD samples functions with probabilities close to Bayesian inference.  More broadly, these findings suggests that studying Bayesian DNNs is a promising direction to understand DNNs as trained in practice.


Our PAC-Bayes bound 
bounds the actual error, with high probability over the Bayesian posterior. This differs from other deterministic PAC-Bayes approaches which bound the average error uniformly over a large class of posteriors \citep{nagarajan2019uniform}. This new result suggests that the quantifier reversal lemma of \citet{mcallester1998some} can give qualitatively different kinds of high-probability bounds than those obtained from the more commonly used PAC-Bayesian model-averaging bounds developed later \citep{mcallester1999pac}, and we believe that it could offer interesting insights into obtaining high-probability bounds not based on uniform (or non-uniform) convergence.

\citet{nagarajan2019uniform} also showed a large class of  doubled-sided bounds is fundamentally unable to predict the generalization for SGD-trained DNNs, at least for certain synthetic datasets. On the other hand, our bound assumes realizability which automatically allows for an analysis using one-sided bounds, rather than double-sided bounds -- i.e.\ bounds in the generalization error rather than bounds on the absolute value of the generalization gap (see e.g.\ the realizable PAC bound for a simple example \citep{shalev2014understanding}). This fact, and the good results of our bound, suggest that realizability may be an important property of successful bounds for deep learning.

\subsection{Future work}

In discussing the desiderata for the marginal-likelihood PAC-Bayes bound, we pointed out several aspects on which the bound could be improved. In terms of the theory, we are still lacking a more rigorous understanding of why our bound for Bayesian DNNs applies so well to SGD-trained DNNs. There is increasing evidence that Bayesian DNNs approximate SGD-trained DNNs, even at the level of individual function probabilities \citep{mingard2020sgd}, but a theoretical understanding of this beyond simple arrival of the frequent arguments such as those found in~\citet{schaper2014arrival} is still lacking. Another important area where more work is needed is in finite-width corrections. There has been recent work on finite-width corrections to the NTK theory \citep{huang2019dynamics,bai2019beyond,dyer2019asymptotics}. Finite-width analysis have also been done for NNGPs \citep{antognini2019finite,yaida2019non}, and understanding how these analyses could be used to calculate the marginal likelihood in our theory is an interesting direction for future work.

On the applied side,  methods to estimate the marginal likelihood that are more accurate and more computationally efficient would broaden the potential applications of our theory, for example to data agumentation or neural architecture search (NAS).

In recent work, \citet{lee2020finite} show that NNGPs are able to capture some of the improvements in generalization due to data augmentation. In light of that work and our theory, a promising direction of future work could be to study whether  marginal likelihood approximations could be used to automatically search for promising data augmentation techniques.

\citet{park2020towards} show that computing NNGP predictions is actually more computationally efficient than training the DNN for small datasets, which they use to improve on certain NAS techniques. \citet{buratti2019ordalia} showed that NAS can also be improved by estimating learning curve exponents, which they do by training a DNN on a few small datasets of different sizes and fitting a power law to the learning curve. As we have shown in \cref{learning_curves}, our PAC-Bayes bound allows for accurate prediction of the relative sizes of learning curve exponents, estimated from a single training set. This suggests that our methods may be applicable to the development of more efficient and accurate NAS techniques.

Finally, in this paper we focused on binary classification problems, in part because these are the easiest to understand.   It will be important in the future to also consider generalization theories for multi-class classification (see also ~\cref{binary_vs_multiclass} for a brief discussion of this question) as well as regression problems and  other types of datasets.

\medskip

\small

\bibliography{bib}

\begin{thebibliography}{137}
\providecommand{\natexlab}[1]{#1}
\providecommand{\url}[1]{\texttt{#1}}
\expandafter\ifx\csname urlstyle\endcsname\relax
  \providecommand{\doi}[1]{doi: #1}\else
  \providecommand{\doi}{doi: \begingroup \urlstyle{rm}\Url}\fi

\bibitem[Antognini(2019)]{antognini2019finite}
Joseph~M Antognini.
\newblock Finite size corrections for neural network gaussian processes.
\newblock \emph{arXiv preprint arXiv:1908.10030}, 2019.

\bibitem[Arora et~al.(2018)Arora, Ge, Neyshabur, and Zhang]{arora2018stronger}
Sanjeev Arora, Rong Ge, Behnam Neyshabur, and Yi~Zhang.
\newblock Stronger generalization bounds for deep nets via a compression
  approach.
\newblock In \emph{Proceedings of the 35th International Conference on Machine
  Learning}, volume~80 of \emph{Proceedings of Machine Learning Research},
  pages 254--263. PMLR, 10--15 Jul 2018.
\newblock URL \url{http://proceedings.mlr.press/v80/arora18b.html}.

\bibitem[Arora et~al.(2019)Arora, Du, Hu, Li, and Wang]{arora2019fine}
Sanjeev Arora, Simon~S Du, Wei Hu, Zhiyuan Li, and Ruosong Wang.
\newblock Fine-grained analysis of optimization and generalization for
  overparameterized two-layer neural networks.
\newblock \emph{arXiv preprint arXiv:1901.08584}, 2019.

\bibitem[Bai and Lee(2019)]{bai2019beyond}
Yu~Bai and Jason~D Lee.
\newblock Beyond linearization: On quadratic and higher-order approximation of
  wide neural networks.
\newblock In \emph{International Conference on Learning Representations}, 2019.

\bibitem[Banerjee et~al.(2020)Banerjee, Chen, and Zhou]{banerjee2020randomized}
Arindam Banerjee, Tiancong Chen, and Yingxue Zhou.
\newblock De-randomized pac-bayes margin bounds: Applications to non-convex and
  non-smooth predictors.
\newblock \emph{arXiv preprint arXiv:2002.09956}, 2020.

\bibitem[Barron and Klusowski(2019)]{barron2019complexity}
Andrew~R Barron and Jason~M Klusowski.
\newblock Complexity, statistical risk, and metric entropy of deep nets using
  total path variation.
\newblock \emph{arXiv preprint arXiv:1902.00800}, 2019.

\bibitem[Bartlett(1997)]{bartlett1997valid}
Peter~L Bartlett.
\newblock For valid generalization the size of the weights is more important
  than the size of the network.
\newblock In \emph{Advances in neural information processing systems}, pages
  134--140, 1997.

\bibitem[Bartlett(1998)]{bartlett1998sample}
Peter~L Bartlett.
\newblock The sample complexity of pattern classification with neural networks:
  the size of the weights is more important than the size of the network.
\newblock \emph{IEEE transactions on Information Theory}, 44\penalty0
  (2):\penalty0 525--536, 1998.

\bibitem[Bartlett and Mendelson(2002)]{bartlett2002rademacher}
Peter~L Bartlett and Shahar Mendelson.
\newblock Rademacher and gaussian complexities: Risk bounds and structural
  results.
\newblock \emph{Journal of Machine Learning Research}, 3\penalty0
  (Nov):\penalty0 463--482, 2002.

\bibitem[Bartlett et~al.(2017)Bartlett, Foster, and
  Telgarsky]{bartlett2017spectrally}
Peter~L Bartlett, Dylan~J Foster, and Matus~J Telgarsky.
\newblock Spectrally-normalized margin bounds for neural networks.
\newblock In \emph{Advances in Neural Information Processing Systems}, pages
  6240--6249, 2017.

\bibitem[Baum and Haussler(1989)]{baum1989size}
Eric~B Baum and David Haussler.
\newblock What size net gives valid generalization?
\newblock In \emph{Advances in neural information processing systems}, pages
  81--90, 1989.

\bibitem[Belkin et~al.(2019)Belkin, Hsu, Ma, and Mandal]{belkin2019reconciling}
Mikhail Belkin, Daniel Hsu, Siyuan Ma, and Soumik Mandal.
\newblock Reconciling modern machine-learning practice and the classical
  bias--variance trade-off.
\newblock \emph{Proceedings of the National Academy of Sciences}, 116\penalty0
  (32):\penalty0 15849--15854, 2019.

\bibitem[Blumer et~al.(1989)Blumer, Ehrenfeucht, Haussler, and
  Warmuth]{blumer1989learnability}
Anselm Blumer, Andrzej Ehrenfeucht, David Haussler, and Manfred~K Warmuth.
\newblock Learnability and the vapnik-chervonenkis dimension.
\newblock \emph{Journal of the ACM (JACM)}, 36\penalty0 (4):\penalty0 929--965,
  1989.

\bibitem[Bordelon et~al.(2020)Bordelon, Canatar, and
  Pehlevan]{bordelon2020spectrum}
Blake Bordelon, Abdulkadir Canatar, and Cengiz Pehlevan.
\newblock Spectrum dependent learning curves in kernel regression and wide
  neural networks.
\newblock \emph{arXiv preprint arXiv:2002.02561}, 2020.

\bibitem[Bousquet and Elisseeff(2002)]{bousquet2002stability}
Olivier Bousquet and Andr{\'e} Elisseeff.
\newblock Stability and generalization.
\newblock \emph{Journal of machine learning research}, 2\penalty0
  (Mar):\penalty0 499--526, 2002.

\bibitem[Bousquet et~al.(2020)Bousquet, Hanneke, Moran, van Handel, and
  Yehudayoff]{bousquet2020theory}
Olivier Bousquet, Steve Hanneke, Shay Moran, Ramon van Handel, and Amir
  Yehudayoff.
\newblock A theory of universal learning, 2020.

\bibitem[Brutzkus et~al.(2018)Brutzkus, Globerson, Malach, and
  Shalev-Shwartz]{brutzkus2018sgd}
Alon Brutzkus, Amir Globerson, Eran Malach, and Shai Shalev-Shwartz.
\newblock {SGD} learns over-parameterized networks that provably generalize on
  linearly separable data.
\newblock In \emph{International Conference on Learning Representations}, 2018.
\newblock URL \url{https://openreview.net/forum?id=rJ33wwxRb}.

\bibitem[Buratti and Upfal(2019)]{buratti2019ordalia}
Benedetto~J Buratti and Eli Upfal.
\newblock Ordalia: Deep learning hyperparameter search via generalization error
  bounds extrapolation.
\newblock In \emph{2019 IEEE International Conference on Big Data (Big Data)},
  pages 180--187. IEEE, 2019.

\bibitem[Cao and Gu(2019)]{cao2019generalization}
Yuan Cao and Quanquan Gu.
\newblock Generalization bounds of stochastic gradient descent for wide and
  deep neural networks.
\newblock In \emph{Advances in Neural Information Processing Systems}, pages
  10836--10846, 2019.

\bibitem[Charles and Papailiopoulos(2018)]{charles2018stability}
Zachary Charles and Dimitris Papailiopoulos.
\newblock Stability and generalization of learning algorithms that converge to
  global optima.
\newblock In \emph{International Conference on Machine Learning}, pages
  744--753, 2018.

\bibitem[Clanuwat et~al.(2018)Clanuwat, Bober-Irizar, Kitamoto, Lamb, Yamamoto,
  and Ha]{clanuwat2018deep}
Tarin Clanuwat, Mikel Bober-Irizar, Asanobu Kitamoto, Alex Lamb, Kazuaki
  Yamamoto, and David Ha.
\newblock Deep learning for classical japanese literature, 2018.

\bibitem[Clauset et~al.(2009)Clauset, Shalizi, and Newman]{clauset2009power}
Aaron Clauset, Cosma~Rohilla Shalizi, and Mark~EJ Newman.
\newblock Power-law distributions in empirical data.
\newblock \emph{SIAM review}, 51\penalty0 (4):\penalty0 661--703, 2009.

\bibitem[Cohen et~al.(2017)Cohen, Afshar, Tapson, and
  Van~Schaik]{cohen2017emnist}
Gregory Cohen, Saeed Afshar, Jonathan Tapson, and Andre Van~Schaik.
\newblock Emnist: Extending mnist to handwritten letters.
\newblock In \emph{2017 International Joint Conference on Neural Networks
  (IJCNN)}, pages 2921--2926. IEEE, 2017.

\bibitem[Cohen et~al.(2019)Cohen, Malka, and
  Ringel]{DBLP:journals/corr/abs-1906-05301}
Omry Cohen, Or~Malka, and Zohar Ringel.
\newblock Learning curves for deep neural networks: {A} gaussian field theory
  perspective.
\newblock \emph{CoRR}, abs/1906.05301, 2019.
\newblock URL \url{http://arxiv.org/abs/1906.05301}.

\bibitem[Cortes and Vapnik(1995)]{cortes1995support}
Corinna Cortes and Vladimir Vapnik.
\newblock Support-vector networks.
\newblock \emph{Machine learning}, 20\penalty0 (3):\penalty0 273--297, 1995.

\bibitem[Daniely et~al.(2011)Daniely, Sabato, Ben-David, and
  Shalev-Shwartz]{daniely2011multiclass}
Amit Daniely, Sivan Sabato, Shai Ben-David, and Shai Shalev-Shwartz.
\newblock Multiclass learnability and the erm principle.
\newblock In \emph{Proceedings of the 24th Annual Conference on Learning
  Theory}, pages 207--232. JMLR Workshop and Conference Proceedings, 2011.

\bibitem[Dyer and Gur-Ari(2019)]{dyer2019asymptotics}
Ethan Dyer and Guy Gur-Ari.
\newblock Asymptotics of wide networks from feynman diagrams.
\newblock In \emph{International Conference on Learning Representations}, 2019.

\bibitem[Dziugaite and Roy(2017)]{dziugaite2017computing}
Gintare~Karolina Dziugaite and Daniel~M. Roy.
\newblock Computing nonvacuous generalization bounds for deep (stochastic)
  neural networks with many more parameters than training data.
\newblock In \emph{Proceedings of the Thirty-Third Conference on Uncertainty in
  Artificial Intelligence, {UAI} 2017, Sydney, Australia, August 11-15, 2017},
  2017.
\newblock URL \url{http://auai.org/uai2017/proceedings/papers/173.pdf}.

\bibitem[Dziugaite and Roy(2018)]{dziugaite2018data}
Gintare~Karolina Dziugaite and Daniel~M Roy.
\newblock Data-dependent pac-bayes priors via differential privacy.
\newblock In \emph{Advances in Neural Information Processing Systems}, pages
  8430--8441, 2018.

\bibitem[Dziugaite et~al.(2020)Dziugaite, Drouin, Neal, Rajkumar, Caballero,
  Wang, Mitliagkas, and Roy]{dziugaite2020search}
Gintare~Karolina Dziugaite, Alexandre Drouin, Brady Neal, Nitarshan Rajkumar,
  Ethan Caballero, Linbo Wang, Ioannis Mitliagkas, and Daniel~M Roy.
\newblock In search of robust measures of generalization.
\newblock \emph{arXiv preprint arXiv:2010.11924}, 2020.

\bibitem[Feldman and Vondr{\'a}k(2019)]{feldman2019high}
Vitaly Feldman and Jan Vondr{\'a}k.
\newblock High probability generalization bounds for uniformly stable
  algorithms with nearly optimal rate.
\newblock \emph{arXiv preprint arXiv:1902.10710}, 2019.

\bibitem[Fong and Holmes(2020)]{fong2020marginal}
Edwin Fong and CC~Holmes.
\newblock On the marginal likelihood and cross-validation.
\newblock \emph{Biometrika}, 107\penalty0 (2):\penalty0 489--496, 2020.

\bibitem[Gao et~al.(2011)Gao, Buldyrev, Havlin, and Stanley]{gao2011robustness}
Jianxi Gao, Sergey~V Buldyrev, Shlomo Havlin, and H~Eugene Stanley.
\newblock Robustness of a network of networks.
\newblock \emph{Physical Review Letters}, 107\penalty0 (19):\penalty0 195701,
  2011.

\bibitem[Garipov et~al.(2018)Garipov, Izmailov, Podoprikhin, Vetrov, and
  Wilson]{garipov2018loss}
Timur Garipov, Pavel Izmailov, Dmitrii Podoprikhin, Dmitry~P Vetrov, and
  Andrew~G Wilson.
\newblock Loss surfaces, mode connectivity, and fast ensembling of dnns.
\newblock In \emph{Advances in Neural Information Processing Systems}, pages
  8789--8798, 2018.

\bibitem[{Garriga-Alonso} et~al.(2018){Garriga-Alonso}, Aitchison, and
  Rasmussen]{garriga2018convnets}
Adri{\`a} {Garriga-Alonso}, Laurence Aitchison, and Carl~Edward Rasmussen.
\newblock Deep convolutional networks as shallow {G}aussian processes.
\newblock \emph{arXiv preprint arXiv:1808.05587}, aug 2018.
\newblock URL \url{https://arxiv.org/abs/1808.05587}.

\bibitem[Germain et~al.(2016)Germain, Bach, Lacoste, and
  Lacoste-Julien]{germain2016pac}
Pascal Germain, Francis Bach, Alexandre Lacoste, and Simon Lacoste-Julien.
\newblock Pac-bayesian theory meets bayesian inference.
\newblock In \emph{Advances in Neural Information Processing Systems}, pages
  1884--1892, 2016.

\bibitem[Golowich et~al.(2018)Golowich, Rakhlin, and Shamir]{golowich2017size}
Noah Golowich, Alexander Rakhlin, and Ohad Shamir.
\newblock Size-independent sample complexity of neural networks.
\newblock In \emph{Proceedings of the 31st Conference On Learning Theory},
  volume~75 of \emph{Proceedings of Machine Learning Research}, pages 297--299.
  PMLR, 06--09 Jul 2018.
\newblock URL \url{http://proceedings.mlr.press/v75/golowich18a.html}.

\bibitem[{GPy}(since 2012)]{gpy2014}
{GPy}.
\newblock {GPy}: A gaussian process framework in python.
\newblock \url{http://github.com/SheffieldML/GPy}, since 2012.

\bibitem[Guedj(2019)]{guedj2019primer}
Benjamin Guedj.
\newblock A primer on pac-bayesian learning.
\newblock \emph{arXiv preprint arXiv:1901.05353}, 2019.

\bibitem[Hardt et~al.(2016)Hardt, Recht, and Singer]{pmlr-v48-hardt16}
Moritz Hardt, Ben Recht, and Yoram Singer.
\newblock Train faster, generalize better: Stability of stochastic gradient
  descent.
\newblock In \emph{Proceedings of The 33rd International Conference on Machine
  Learning}, volume~48 of \emph{Proceedings of Machine Learning Research},
  pages 1225--1234. PMLR, 20--22 Jun 2016.
\newblock URL \url{http://proceedings.mlr.press/v48/hardt16.html}.

\bibitem[Harvey et~al.(2017)Harvey, Liaw, and Mehrabian]{bartlett2017nearly}
Nick Harvey, Christopher Liaw, and Abbas Mehrabian.
\newblock Nearly-tight {VC}-dimension bounds for piecewise linear neural
  networks.
\newblock In \emph{Proceedings of the 2017 Conference on Learning Theory},
  volume~65 of \emph{Proceedings of Machine Learning Research}, pages
  1064--1068. PMLR, 07--10 Jul 2017.
\newblock URL \url{http://proceedings.mlr.press/v65/harvey17a.html}.

\bibitem[He et~al.(2016{\natexlab{a}})He, Zhang, Ren, and Sun]{he2016deep}
Kaiming He, Xiangyu Zhang, Shaoqing Ren, and Jian Sun.
\newblock Deep residual learning for image recognition.
\newblock In \emph{Proceedings of the IEEE conference on computer vision and
  pattern recognition}, pages 770--778, 2016{\natexlab{a}}.

\bibitem[He et~al.(2016{\natexlab{b}})He, Zhang, Ren, and Sun]{he2016identity}
Kaiming He, Xiangyu Zhang, Shaoqing Ren, and Jian Sun.
\newblock Identity mappings in deep residual networks.
\newblock In \emph{European conference on computer vision}, pages 630--645.
  Springer, 2016{\natexlab{b}}.

\bibitem[Henighan et~al.(2020)Henighan, Kaplan, Katz, Chen, Hesse, Jackson,
  Jun, Brown, Dhariwal, Gray, et~al.]{henighan2020scaling}
Tom Henighan, Jared Kaplan, Mor Katz, Mark Chen, Christopher Hesse, Jacob
  Jackson, Heewoo Jun, Tom~B Brown, Prafulla Dhariwal, Scott Gray, et~al.
\newblock Scaling laws for autoregressive generative modeling.
\newblock \emph{arXiv preprint arXiv:2010.14701}, 2020.

\bibitem[Hestness et~al.(2017)Hestness, Narang, Ardalani, Diamos, Jun,
  Kianinejad, Patwary, Ali, Yang, and Zhou]{hestness2017deep}
Joel Hestness, Sharan Narang, Newsha Ardalani, Gregory Diamos, Heewoo Jun,
  Hassan Kianinejad, Md~Patwary, Mostofa Ali, Yang Yang, and Yanqi Zhou.
\newblock Deep learning scaling is predictable, empirically.
\newblock \emph{arXiv preprint arXiv:1712.00409}, 2017.

\bibitem[Hinton and van Camp(1993)]{Hinton:1993:KNN:168304.168306}
Geoffrey~E. Hinton and Drew van Camp.
\newblock Keeping the neural networks simple by minimizing the description
  length of the weights.
\newblock In \emph{Proceedings of the Sixth Annual Conference on Computational
  Learning Theory}, COLT '93, pages 5--13, New York, NY, USA, 1993. ACM.
\newblock ISBN 0-89791-611-5.
\newblock \doi{10.1145/168304.168306}.
\newblock URL \url{http://doi.acm.org/10.1145/168304.168306}.

\bibitem[Hochreiter and Schmidhuber(1997)]{hochreiter1997flat}
Sepp Hochreiter and J{\"u}rgen Schmidhuber.
\newblock Flat minima.
\newblock \emph{Neural Computation}, 9\penalty0 (1):\penalty0 1--42, 1997.

\bibitem[Hoffer et~al.(2017)Hoffer, Hubara, and Soudry]{hoffer2017train}
Elad Hoffer, Itay Hubara, and Daniel Soudry.
\newblock Train longer, generalize better: closing the generalization gap in
  large batch training of neural networks.
\newblock In \emph{Advances in Neural Information Processing Systems}, pages
  1731--1741, 2017.

\bibitem[Howard et~al.(2017)Howard, Zhu, Chen, Kalenichenko, Wang, Weyand,
  Andreetto, and Adam]{howard2017mobilenets}
Andrew~G Howard, Menglong Zhu, Bo~Chen, Dmitry Kalenichenko, Weijun Wang,
  Tobias Weyand, Marco Andreetto, and Hartwig Adam.
\newblock Mobilenets: Efficient convolutional neural networks for mobile vision
  applications.
\newblock \emph{arXiv preprint arXiv:1704.04861}, 2017.

\bibitem[Huang et~al.(2017)Huang, Liu, Van Der~Maaten, and
  Weinberger]{huang2017densely}
Gao Huang, Zhuang Liu, Laurens Van Der~Maaten, and Kilian~Q Weinberger.
\newblock Densely connected convolutional networks.
\newblock In \emph{Proceedings of the IEEE conference on computer vision and
  pattern recognition}, pages 4700--4708, 2017.

\bibitem[Huang and Yau(2019)]{huang2019dynamics}
Jiaoyang Huang and Horng-Tzer Yau.
\newblock Dynamics of deep neural networks and neural tangent hierarchy.
\newblock \emph{arXiv preprint arXiv:1909.08156}, 2019.

\bibitem[Jacot et~al.(2018)Jacot, Gabriel, and Hongler]{jacot2018neural}
Arthur Jacot, Franck Gabriel, and Cl{\'e}ment Hongler.
\newblock Neural tangent kernel: Convergence and generalization in neural
  networks.
\newblock In \emph{Advances in neural information processing systems}, pages
  8571--8580, 2018.

\bibitem[Jastrzebski et~al.(2018)Jastrzebski, Kenton, Arpit, Ballas, Fischer,
  Bengio, and Storkey]{jastrzebski2018finding}
Stanislaw Jastrzebski, Zachary Kenton, Devansh Arpit, Nicolas Ballas, Asja
  Fischer, Yoshua Bengio, and Amos~J Storkey.
\newblock Finding flatter minima with sgd.
\newblock In \emph{ICLR (Workshop)}, 2018.

\bibitem[Jiang et~al.(2018)Jiang, Krishnan, Mobahi, and
  Bengio]{jiang2018predicting}
Yiding Jiang, Dilip Krishnan, Hossein Mobahi, and Samy Bengio.
\newblock Predicting the generalization gap in deep networks with margin
  distributions.
\newblock \emph{arXiv preprint arXiv:1810.00113}, 2018.

\bibitem[Jiang et~al.(2019)Jiang, Neyshabur, Mobahi, Krishnan, and
  Bengio]{jiang2019fantastic}
Yiding Jiang, Behnam Neyshabur, Hossein Mobahi, Dilip Krishnan, and Samy
  Bengio.
\newblock Fantastic generalization measures and where to find them.
\newblock \emph{arXiv preprint arXiv:1912.02178}, 2019.

\bibitem[Kaplan et~al.(2020)Kaplan, McCandlish, Henighan, Brown, Chess, Child,
  Gray, Radford, Wu, and Amodei]{kaplan2020scaling}
Jared Kaplan, Sam McCandlish, Tom Henighan, Tom~B Brown, Benjamin Chess, Rewon
  Child, Scott Gray, Alec Radford, Jeffrey Wu, and Dario Amodei.
\newblock Scaling laws for neural language models.
\newblock \emph{arXiv preprint arXiv:2001.08361}, 2020.

\bibitem[Keskar et~al.(2016)Keskar, Mudigere, Nocedal, Smelyanskiy, and
  Tang]{keskar2016large}
Nitish~Shirish Keskar, Dheevatsa Mudigere, Jorge Nocedal, Mikhail Smelyanskiy,
  and Ping Tak~Peter Tang.
\newblock On large-batch training for deep learning: Generalization gap and
  sharp minima.
\newblock \emph{CoRR}, abs/1609.04836, 2016.
\newblock URL \url{http://arxiv.org/abs/1609.04836}.

\bibitem[Koltchinskii(2011)]{koltchinskii2011oracle}
Vladimir Koltchinskii.
\newblock \emph{Oracle Inequalities in Empirical Risk Minimization and Sparse
  Recovery Problems: Ecole d’Et{\'e} de Probabilit{\'e}s de Saint-Flour
  XXXVIII-2008}, volume 2033.
\newblock Springer Science \& Business Media, 2011.

\bibitem[Krizhevsky et~al.()Krizhevsky, Nair, and Hinton]{cifar10dataset}
Alex Krizhevsky, Vinod Nair, and Geoffrey Hinton.
\newblock Cifar-10 (canadian institute for advanced research).
\newblock URL \url{http://www.cs.toronto.edu/~kriz/cifar.html}.

\bibitem[Krueger et~al.(2017)Krueger, Ballas, Jastrzebski, Arpit, Kanwal,
  Maharaj, Bengio, Fischer, Courville, Lacoste-Julien, and
  Bengio]{arpit2017closer}
David Krueger, Nicolas Ballas, Stanislaw Jastrzebski, Devansh Arpit,
  Maxinder~S. Kanwal, Tegan Maharaj, Emmanuel Bengio, Asja Fischer, Aaron
  Courville, Simon Lacoste-Julien, and Yoshua Bengio.
\newblock A closer look at memorization in deep networks.
\newblock Proceedings of the 34th International Conference on Machine Learning
  (ICML'17), 2017.
\newblock URL \url{https://arxiv.org/abs/1706.05394}.

\bibitem[Kuzborskij and Lampert(2017)]{kuzborskij2017data}
Ilja Kuzborskij and Christoph~H Lampert.
\newblock Data-dependent stability of stochastic gradient descent.
\newblock \emph{arXiv preprint arXiv:1703.01678}, 2017.

\bibitem[Langford(2005)]{langford2005tutorial}
John Langford.
\newblock Tutorial on practical prediction theory for classification.
\newblock \emph{Journal of machine learning research}, 6\penalty0
  (Mar):\penalty0 273--306, 2005.

\bibitem[Langford and Seeger(2001)]{langford2001bounds}
John Langford and Matthias Seeger.
\newblock Bounds for averaging classifiers.
\newblock 2001.

\bibitem[Langford and Shawe-Taylor(2003)]{langford2003pac}
John Langford and John Shawe-Taylor.
\newblock Pac-bayes \& margins.
\newblock In \emph{Advances in neural information processing systems}, pages
  439--446, 2003.

\bibitem[Lawrence et~al.(1998)Lawrence, Giles, and Tsoi]{lawrence1998size}
Steve Lawrence, C~Lee Giles, and Ah~Chung Tsoi.
\newblock What size neural network gives optimal generalization? convergence
  properties of backpropagation.
\newblock Technical report, 1998.

\bibitem[LeCun and Cortes(2010)]{lecun-mnisthandwrittendigit-2010}
Yann LeCun and Corinna Cortes.
\newblock {MNIST} handwritten digit database.
\newblock 2010.
\newblock URL \url{http://yann.lecun.com/exdb/mnist/}.

\bibitem[Lee et~al.(2017)Lee, Bahri, Novak, Schoenholz, Pennington, and
  Sohl-Dickstein]{lee2017deep}
Jaehoon Lee, Yasaman Bahri, Roman Novak, Samuel~S Schoenholz, Jeffrey
  Pennington, and Jascha Sohl-Dickstein.
\newblock Deep neural networks as gaussian processes.
\newblock \emph{arXiv preprint arXiv:1711.00165}, 2017.

\bibitem[Lee et~al.(2019)Lee, Xiao, Schoenholz, Bahri, Novak, Sohl-Dickstein,
  and Pennington]{lee2019wide}
Jaehoon Lee, Lechao Xiao, Samuel Schoenholz, Yasaman Bahri, Roman Novak, Jascha
  Sohl-Dickstein, and Jeffrey Pennington.
\newblock Wide neural networks of any depth evolve as linear models under
  gradient descent.
\newblock In \emph{Advances in neural information processing systems}, pages
  8572--8583, 2019.

\bibitem[Lee et~al.(2020)Lee, Schoenholz, Pennington, Adlam, Xiao, Novak, and
  Sohl-Dickstein]{lee2020finite}
Jaehoon Lee, Samuel Schoenholz, Jeffrey Pennington, Ben Adlam, Lechao Xiao,
  Roman Novak, and Jascha Sohl-Dickstein.
\newblock Finite versus infinite neural networks: an empirical study.
\newblock \emph{Advances in Neural Information Processing Systems}, 33, 2020.

\bibitem[Li et~al.(2018)Li, Liu, Yin, Zhang, Ding, and Wang]{li2018multi}
Jian Li, Yong Liu, Rong Yin, Hua Zhang, Lizhong Ding, and Weiping Wang.
\newblock Multi-class learning: from theory to algorithm.
\newblock In \emph{Advances in Neural Information Processing Systems}, pages
  1586--1595, 2018.

\bibitem[Li et~al.(2019)Li, Luo, and Qiao]{li2019generalization}
Jian Li, Xuanyuan Luo, and Mingda Qiao.
\newblock On generalization error bounds of noisy gradient methods for
  non-convex learning.
\newblock \emph{arXiv preprint arXiv:1902.00621}, 2019.

\bibitem[Littlestone and Warmuth(1986)]{littlestone1986relating}
Nick Littlestone and Manfred Warmuth.
\newblock Relating data compression and learnability.
\newblock 1986.

\bibitem[London(2017)]{london2017pac}
Ben London.
\newblock A pac-bayesian analysis of randomized learning with application to
  stochastic gradient descent.
\newblock In \emph{Advances in Neural Information Processing Systems}, pages
  2931--2940, 2017.

\bibitem[MacKay(1992)]{mackay1992practical}
David~JC MacKay.
\newblock A practical bayesian framework for backpropagation networks.
\newblock \emph{Neural computation}, 4\penalty0 (3):\penalty0 448--472, 1992.

\bibitem[MacKay and Mac~Kay(2003)]{mackay2003information}
David~JC MacKay and David~JC Mac~Kay.
\newblock \emph{Information theory, inference and learning algorithms}.
\newblock Cambridge university press, 2003.

\bibitem[Maddox et~al.(2020)Maddox, Benton, and Wilson]{maddox2020rethinking}
Wesley~J Maddox, Gregory Benton, and Andrew~Gordon Wilson.
\newblock Rethinking parameter counting in deep models: Effective
  dimensionality revisited.
\newblock \emph{arXiv preprint arXiv:2003.02139}, 2020.

\bibitem[Matthews et~al.(2018)Matthews, Rowland, Hron, Turner, and
  Ghahramani]{matthews2018gaussian}
Alexander G de~G Matthews, Mark Rowland, Jiri Hron, Richard~E Turner, and
  Zoubin Ghahramani.
\newblock Gaussian process behaviour in wide deep neural networks.
\newblock \emph{arXiv preprint arXiv:1804.11271}, 2018.

\bibitem[Maurer(2004)]{maurer2004note}
Andreas Maurer.
\newblock A note on the pac bayesian theorem.
\newblock \emph{arXiv preprint cs/0411099}, 2004.

\bibitem[McAllester(2013)]{mcallester2013pac}
David McAllester.
\newblock A pac-bayesian tutorial with a dropout bound.
\newblock \emph{arXiv preprint arXiv:1307.2118}, 2013.

\bibitem[McAllester(1998)]{mcallester1998some}
David~A McAllester.
\newblock Some pac-bayesian theorems.
\newblock In \emph{Proceedings of the eleventh annual conference on
  Computational learning theory}, pages 230--234. ACM, 1998.

\bibitem[McAllester(1999)]{mcallester1999pac}
David~A McAllester.
\newblock Pac-bayesian model averaging.
\newblock In \emph{COLT}, volume~99, pages 164--170. Citeseer, 1999.

\bibitem[Mingard et~al.(2020)Mingard, Valle-P{\'e}rez, Skalse, and
  Louis]{mingard2020sgd}
Chris Mingard, Guillermo Valle-P{\'e}rez, Joar Skalse, and Ard~A Louis.
\newblock Is sgd a bayesian sampler? well, almost.
\newblock \emph{arXiv preprint arXiv:2006.15191}, 2020.

\bibitem[Mou et~al.(2018)Mou, Wang, Zhai, and Zheng]{mou2018generalization}
Wenlong Mou, Liwei Wang, Xiyu Zhai, and Kai Zheng.
\newblock Generalization bounds of sgld for non-convex learning: Two
  theoretical viewpoints.
\newblock In \emph{Conference On Learning Theory}, pages 605--638, 2018.

\bibitem[Nagarajan and Kolter(2019)]{nagarajan2019uniform}
Vaishnavh Nagarajan and J~Zico Kolter.
\newblock Uniform convergence may be unable to explain generalization in deep
  learning.
\newblock In \emph{Advances in Neural Information Processing Systems}, pages
  11611--11622, 2019.

\bibitem[Nagarajan and Kolter(2018)]{nagarajan2018deterministic}
Vaishnavh Nagarajan and Zico Kolter.
\newblock Deterministic pac-bayesian generalization bounds for deep networks
  via generalizing noise-resilience.
\newblock 2018.

\bibitem[Nakkiran et~al.(2019)Nakkiran, Kaplun, Bansal, Yang, Barak, and
  Sutskever]{nakkiran2019deep}
Preetum Nakkiran, Gal Kaplun, Yamini Bansal, Tristan Yang, Boaz Barak, and Ilya
  Sutskever.
\newblock Deep double descent: Where bigger models and more data hurt.
\newblock \emph{arXiv preprint arXiv:1912.02292}, 2019.

\bibitem[Natarajan(1989)]{natarajan1989learning}
Balas~K Natarajan.
\newblock On learning sets and functions.
\newblock \emph{Machine Learning}, 4\penalty0 (1):\penalty0 67--97, 1989.

\bibitem[Neal(2019)]{neal2019bias}
Brady Neal.
\newblock On the bias-variance tradeoff: Textbooks need an update.
\newblock \emph{arXiv preprint arXiv:1912.08286}, 2019.

\bibitem[Neal et~al.(2018)Neal, Mittal, Baratin, Tantia, Scicluna,
  Lacoste-Julien, and Mitliagkas]{neal2018modern}
Brady Neal, Sarthak Mittal, Aristide Baratin, Vinayak Tantia, Matthew Scicluna,
  Simon Lacoste-Julien, and Ioannis Mitliagkas.
\newblock A modern take on the bias-variance tradeoff in neural networks.
\newblock \emph{arXiv preprint arXiv:1810.08591}, 2018.

\bibitem[Neal(1994)]{neal1994priors}
Radford~M Neal.
\newblock Priors for infinite networks (tech. rep. no. crg-tr-94-1).
\newblock \emph{University of Toronto}, 1994.

\bibitem[Negrea et~al.(2019)Negrea, Dziugaite, and Roy]{negrea2019defense}
Jeffrey Negrea, Gintare~Karolina Dziugaite, and Daniel~M Roy.
\newblock In defense of uniform convergence: Generalization via derandomization
  with an application to interpolating predictors.
\newblock \emph{arXiv preprint arXiv:1912.04265}, 2019.

\bibitem[Neyshabur et~al.(2015)Neyshabur, Tomioka, and
  Srebro]{neyshabur2015norm}
Behnam Neyshabur, Ryota Tomioka, and Nathan Srebro.
\newblock Norm-based capacity control in neural networks.
\newblock In \emph{Conference on Learning Theory}, pages 1376--1401, 2015.

\bibitem[Neyshabur et~al.(2017)Neyshabur, Bhojanapalli, McAllester, and
  Srebro]{neyshabur2017exploring}
Behnam Neyshabur, Srinadh Bhojanapalli, David McAllester, and Nati Srebro.
\newblock Exploring generalization in deep learning.
\newblock In \emph{Advances in Neural Information Processing Systems}, pages
  5949--5958, 2017.

\bibitem[Neyshabur et~al.(2018{\natexlab{a}})Neyshabur, Bhojanapalli, and
  Srebro]{neyshabur2018a}
Behnam Neyshabur, Srinadh Bhojanapalli, and Nathan Srebro.
\newblock A {PAC}-bayesian approach to spectrally-normalized margin bounds for
  neural networks.
\newblock In \emph{International Conference on Learning Representations},
  2018{\natexlab{a}}.
\newblock URL \url{https://openreview.net/forum?id=Skz_WfbCZ}.

\bibitem[Neyshabur et~al.(2018{\natexlab{b}})Neyshabur, Li, Bhojanapalli,
  LeCun, and Srebro]{neyshabur2018towards}
Behnam Neyshabur, Zhiyuan Li, Srinadh Bhojanapalli, Yann LeCun, and Nathan
  Srebro.
\newblock Towards understanding the role of over-parametrization in
  generalization of neural networks.
\newblock \emph{arXiv preprint arXiv:1805.12076}, 2018{\natexlab{b}}.

\bibitem[Neyshabur et~al.(2019)Neyshabur, Li, Bhojanapalli, LeCun, and
  Srebro]{neyshabur2018the}
Behnam Neyshabur, Zhiyuan Li, Srinadh Bhojanapalli, Yann LeCun, and Nathan
  Srebro.
\newblock The role of over-parametrization in generalization of neural
  networks.
\newblock In \emph{International Conference on Learning Representations}, 2019.
\newblock URL \url{https://openreview.net/forum?id=BygfghAcYX}.

\bibitem[Novak et~al.(2018)Novak, Xiao, Lee, Bahri, Abolafia, Pennington, and
  Sohl-Dickstein]{novak2018bayesian}
Roman Novak, Lechao Xiao, Jaehoon Lee, Yasaman Bahri, Daniel~A Abolafia,
  Jeffrey Pennington, and Jascha Sohl-Dickstein.
\newblock Bayesian deep convolutional neural networks with many channels are
  gaussian processes.
\newblock \emph{arXiv preprint arXiv:1810.05148}, 2018.

\bibitem[Novak et~al.(2019)Novak, Xiao, Hron, Lee, Alemi, Sohl-Dickstein, and
  Schoenholz]{novak2019neural}
Roman Novak, Lechao Xiao, Jiri Hron, Jaehoon Lee, Alexander~A. Alemi, Jascha
  Sohl-Dickstein, and Samuel~S. Schoenholz.
\newblock Neural tangents: Fast and easy infinite neural networks in python,
  2019.

\bibitem[Park et~al.(2020)Park, Lee, Peng, Cao, and
  Sohl-Dickstein]{park2020towards}
Daniel~S Park, Jaehoon Lee, Daiyi Peng, Yuan Cao, and Jascha Sohl-Dickstein.
\newblock Towards nngp-guided neural architecture search.
\newblock \emph{arXiv preprint arXiv:2011.06006}, 2020.

\bibitem[Rasmussen(2004)]{rasmussen2004gaussian}
Carl~Edward Rasmussen.
\newblock Gaussian processes in machine learning.
\newblock In \emph{Advanced lectures on machine learning}, pages 63--71.
  Springer, 2004.

\bibitem[Rissanen(1978)]{rissanen1978modeling}
Jorma Rissanen.
\newblock Modeling by shortest data description.
\newblock \emph{Automatica}, 14\penalty0 (5):\penalty0 465--471, 1978.

\bibitem[Rivasplata et~al.(2020)Rivasplata, Kuzborskij, Szepesv{\'a}ri, and
  Shawe-Taylor]{rivasplata2020pac}
Omar Rivasplata, Ilja Kuzborskij, Csaba Szepesv{\'a}ri, and John Shawe-Taylor.
\newblock Pac-bayes analysis beyond the usual bounds.
\newblock \emph{arXiv preprint arXiv:2006.13057}, 2020.

\bibitem[Rosenfeld et~al.(2019)Rosenfeld, Rosenfeld, Belinkov, and
  Shavit]{rosenfeld2019constructive}
Jonathan~S Rosenfeld, Amir Rosenfeld, Yonatan Belinkov, and Nir Shavit.
\newblock A constructive prediction of the generalization error across scales.
\newblock \emph{arXiv preprint arXiv:1909.12673}, 2019.

\bibitem[Schaper and Louis(2014)]{schaper2014arrival}
Steffen Schaper and Ard~A Louis.
\newblock The arrival of the frequent: how bias in genotype-phenotype maps can
  steer populations to local optima.
\newblock \emph{PloS one}, 9\penalty0 (2):\penalty0 e86635, 2014.

\bibitem[Shalev-Shwartz and Ben-David(2014)]{shalev2014understanding}
Shai Shalev-Shwartz and Shai Ben-David.
\newblock \emph{Understanding machine learning: From theory to algorithms}.
\newblock Cambridge university press, 2014.

\bibitem[Shalev-Shwartz et~al.(2010)Shalev-Shwartz, Shamir, Srebro, and
  Sridharan]{shalev2010learnability}
Shai Shalev-Shwartz, Ohad Shamir, Nathan Srebro, and Karthik Sridharan.
\newblock Learnability, stability and uniform convergence.
\newblock \emph{Journal of Machine Learning Research}, 11\penalty0
  (Oct):\penalty0 2635--2670, 2010.

\bibitem[Shawe-Taylor(2019)]{shawe2019primer}
Benjamin Guedj~John Shawe-Taylor.
\newblock A primer on pac-bayesian learning.
\newblock 2019.

\bibitem[Shawe-Taylor and Williamson(1997)]{shawe1997pac}
John Shawe-Taylor and Robert~C Williamson.
\newblock A pac analysis of a bayesian estimator.
\newblock In \emph{Annual Workshop on Computational Learning Theory:
  Proceedings of the tenth annual conference on Computational learning theory},
  volume~6, pages 2--9, 1997.

\bibitem[Shawe-Taylor et~al.(1998)Shawe-Taylor, Bartlett, Williamson, and
  Anthony]{shawe1998structural}
John Shawe-Taylor, Peter~L Bartlett, Robert~C Williamson, and Martin Anthony.
\newblock Structural risk minimization over data-dependent hierarchies.
\newblock \emph{IEEE transactions on Information Theory}, 44\penalty0
  (5):\penalty0 1926--1940, 1998.

\bibitem[Simonyan and Zisserman(2015)]{simonyan2014very}
Karen Simonyan and Andrew Zisserman.
\newblock Very deep convolutional networks for large-scale image recognition.
\newblock In \emph{Proceedings of the International Conference on Learning
  Representations (ICLR)}, 2015.
\newblock URL \url{https://arxiv.org/abs/1409.1556}.

\bibitem[Smith and Le(2017)]{smith2018bayesian}
Samuel~L. Smith and Quoc~V. Le.
\newblock A bayesian perspective on generalization and stochastic gradient
  descent.
\newblock \emph{CoRR}, abs/1710.06451, 2017.
\newblock URL \url{http://arxiv.org/abs/1710.06451}.

\bibitem[Sollich(2002)]{sollich2002gaussian}
Peter Sollich.
\newblock Gaussian process regression with mismatched models.
\newblock In \emph{Advances in Neural Information Processing Systems}, pages
  519--526, 2002.

\bibitem[Spigler et~al.(2019)Spigler, Geiger, and Wyart]{spigler2019asymptotic}
Stefano Spigler, Mario Geiger, and Matthieu Wyart.
\newblock Asymptotic learning curves of kernel methods: empirical data vs
  teacher-student paradigm.
\newblock \emph{arXiv preprint arXiv:1905.10843}, 2019.

\bibitem[Stumpf and Porter(2012)]{stumpf2012critical}
Michael~PH Stumpf and Mason~A Porter.
\newblock Critical truths about power laws.
\newblock \emph{Science}, 335\penalty0 (6069):\penalty0 665--666, 2012.

\bibitem[Valiant(1984)]{valiant1984theory}
Leslie~G Valiant.
\newblock A theory of the learnable.
\newblock \emph{Communications of the ACM}, 27\penalty0 (11):\penalty0
  1134--1142, 1984.

\bibitem[Valle-P{\'e}rez et~al.(2018)Valle-P{\'e}rez, Camargo, and
  Louis]{valle2018deep}
Guillermo Valle-P{\'e}rez, Chico~Q Camargo, and Ard~A Louis.
\newblock Deep learning generalizes because the parameter-function map is
  biased towards simple functions.
\newblock \emph{arXiv preprint arXiv:1805.08522}, 2018.

\bibitem[Vapnik(1968)]{vapnik1968uniform}
Vladimir Vapnik.
\newblock On the uniform convergence of relative frequencies of events to their
  probabilities.
\newblock In \emph{Doklady Akademii Nauk USSR}, volume 181, pages 781--787,
  1968.

\bibitem[Vapnik and Chervonenkis(1974)]{vapnik1974theory}
Vladimir Vapnik and Alexey Chervonenkis.
\newblock Theory of pattern recognition, 1974.

\bibitem[Vapnik(1995)]{vapnik1995nature}
Vladimir~N Vapnik.
\newblock The nature of statistical learning theory.
\newblock 1995.

\bibitem[Wei and Ma(2019{\natexlab{a}})]{wei2019data}
Colin Wei and Tengyu Ma.
\newblock Data-dependent sample complexity of deep neural networks via
  lipschitz augmentation.
\newblock \emph{arXiv preprint arXiv:1905.03684}, 2019{\natexlab{a}}.

\bibitem[Wei and Ma(2019{\natexlab{b}})]{wei2019improved}
Colin Wei and Tengyu Ma.
\newblock Improved sample complexities for deep networks and robust
  classification via an all-layer margin.
\newblock \emph{arXiv preprint arXiv:1910.04284}, 2019{\natexlab{b}}.

\bibitem[Wei and Schwab(2019)]{wei2019noise}
Mingwei Wei and David~J Schwab.
\newblock How noise affects the hessian spectrum in overparameterized neural
  networks.
\newblock \emph{arXiv preprint arXiv:1910.00195}, 2019.

\bibitem[Welling and Teh(2011)]{welling2011bayesian}
Max Welling and Yee~W Teh.
\newblock Bayesian learning via stochastic gradient langevin dynamics.
\newblock In \emph{Proceedings of the 28th international conference on machine
  learning (ICML-11)}, pages 681--688, 2011.

\bibitem[Wilson and Izmailov(2020)]{wilson2020bayesian}
Andrew~Gordon Wilson and Pavel Izmailov.
\newblock Bayesian deep learning and a probabilistic perspective of
  generalization.
\newblock \emph{arXiv preprint arXiv:2002.08791}, 2020.

\bibitem[Wolpert and Waters(1994)]{wolpert1994relationship}
David~H Wolpert and R~Waters.
\newblock The relationship between pac, the statistical physics framework, the
  bayesian framework, and the vc framework.
\newblock In \emph{In}. Citeseer, 1994.

\bibitem[Wu et~al.(2017)Wu, Zhu, et~al.]{wu2017towards}
Lei Wu, Zhanxing Zhu, et~al.
\newblock Towards understanding generalization of deep learning: Perspective of
  loss landscapes.
\newblock \emph{arXiv preprint arXiv:1706.10239}, 2017.

\bibitem[Xiao et~al.(2017)Xiao, Rasul, and Vollgraf]{xiao2017fashion}
Han Xiao, Kashif Rasul, and Roland Vollgraf.
\newblock Fashion-mnist: a novel image dataset for benchmarking machine
  learning algorithms.
\newblock \emph{arXiv preprint arXiv:1708.07747}, 2017.

\bibitem[Xie et~al.(2017)Xie, Girshick, Doll{\'a}r, Tu, and
  He]{xie2017aggregated}
Saining Xie, Ross Girshick, Piotr Doll{\'a}r, Zhuowen Tu, and Kaiming He.
\newblock Aggregated residual transformations for deep neural networks.
\newblock In \emph{Proceedings of the IEEE conference on computer vision and
  pattern recognition}, pages 1492--1500, 2017.

\bibitem[Yaida(2019)]{yaida2019non}
Sho Yaida.
\newblock Non-gaussian processes and neural networks at finite widths.
\newblock \emph{arXiv preprint arXiv:1910.00019}, 2019.

\bibitem[Yang(2019{\natexlab{a}})]{yang2019scaling}
Greg Yang.
\newblock Scaling limits of wide neural networks with weight sharing: Gaussian
  process behavior, gradient independence, and neural tangent kernel
  derivation.
\newblock \emph{arXiv preprint arXiv:1902.04760}, 2019{\natexlab{a}}.

\bibitem[Yang(2019{\natexlab{b}})]{yang2019tensor}
Greg Yang.
\newblock Tensor programs i: Wide feedforward or recurrent neural networks of
  any architecture are gaussian processes.
\newblock \emph{arXiv preprint arXiv:1910.12478}, 2019{\natexlab{b}}.

\bibitem[Yang and Salman(2019)]{yang2019fine}
Greg Yang and Hadi Salman.
\newblock A fine-grained spectral perspective on neural networks.
\newblock \emph{arXiv preprint arXiv:1907.10599}, 2019.

\bibitem[Zhang et~al.(2017)Zhang, Bengio, Hardt, Recht, and
  Vinyals]{zhang2016understanding}
Chiyuan Zhang, Samy Bengio, Moritz Hardt, Benjamin Recht, and Oriol Vinyals.
\newblock Understanding deep learning requires rethinking generalization.
\newblock In \emph{Proceedings of the International Conference on Learning
  Representations (ICLR)}, 2017.
\newblock URL \url{https://arxiv.org/abs/1611.03530}.

\bibitem[Zhang et~al.(2018)Zhang, Saxe, Advani, and Lee]{zhang2018energy}
Yao Zhang, Andrew~M Saxe, Madhu~S Advani, and Alpha~A Lee.
\newblock Energy--entropy competition and the effectiveness of stochastic
  gradient descent in machine learning.
\newblock \emph{Molecular Physics}, pages 1--10, 2018.

\bibitem[Zhou et~al.(2018)Zhou, Veitch, Austern, Adams, and
  Orbanz]{zhou2018non}
Wenda Zhou, Victor Veitch, Morgane Austern, Ryan~P Adams, and Peter Orbanz.
\newblock Non-vacuous generalization bounds at the imagenet scale: a
  pac-bayesian compression approach.
\newblock 2018.

\bibitem[Zhou et~al.(2019)Zhou, Liang, and Zhang]{zhou2019understanding}
Yi~Zhou, Yingbin Liang, and Huishuai Zhang.
\newblock Understanding generalization error of sgd in nonconvex optimization.
\newblock \emph{stat}, 1050:\penalty0 7, 2019.

\bibitem[Zoph et~al.(2018)Zoph, Vasudevan, Shlens, and Le]{zoph2018learning}
Barret Zoph, Vijay Vasudevan, Jonathon Shlens, and Quoc~V Le.
\newblock Learning transferable architectures for scalable image recognition.
\newblock In \emph{Proceedings of the IEEE conference on computer vision and
  pattern recognition}, pages 8697--8710, 2018.

\end{thebibliography}

\appendix
\section{Proofs}

\subsection{Proof of \cref{pac_bayes_theorem}}
\label{proof:pac_bayes_theorem}

\begin{proof}

Consider a concept $c$ with generalization error $\epsilon(c)$. The probability that it has zero training error for a sample of $m$ instances is $P\left[c \in C(S)\right] = (1-\epsilon(c))^{-m}$. This is smaller or equal to some $\delta>0$ if
\begin{equation*}
    -\ln{(1-\epsilon(c))} \geq \frac{\ln{\frac{1}{\delta}}}{m}
\end{equation*}
This can be written as follows.
\begin{equation*}
    \forall c \forall \delta > 0 \forall^\delta S \left [c \in C(S)\text{ implies }-\ln{(1-\epsilon(c))} < \frac{\ln{\frac{1}{\delta}}}{m} \right]
\end{equation*}
By the quantifier reversal lemma \cite{mcallester1998some},

\begin{equation*}
    \forall^\delta S \forall \alpha>0 \forall^\alpha c \left [c \in C(S)\text{ implies }-\ln{(1-\epsilon(c))} < \frac{\ln{\frac{1}{\alpha\beta\delta}}}{(1-\beta)m} \right]
\end{equation*}

Let $B(S) \subseteq \mathcal{H}$ be the set of concepts violating the formula, then $P(B(S)) \leq \alpha$. Now, the conditional probability of a concept in $C(S)$ violating the formula is $\frac{P(B(S) \cap C(S))}{P(C(S))} \leq \frac{P(B(S))}{P(C(S))} \leq \frac{\alpha}{P(C(S))}$. This probability is therefore smaller than $\gamma$ if $\alpha=\gamma P(C(S))$. We thus get 

\begin{equation*}
    \forall^\delta S \forall \alpha>0 \forall^\gamma c \in C(S) \left [-\ln{(1-\epsilon(c))} < \frac{\ln{\frac{1}{P(C(S))}} + \ln{\frac{1}{\gamma\beta\delta}}}{(1-\beta)m} \right]
\end{equation*}
We get the final result, \cref{pac_bayes_theorem}, by choosing $\beta=\frac{1}{m}$.

\end{proof}

\subsection{Proof of \cref{lemma:logP_epsilon}}
\label{proof:logP_epsilon}

\begin{proof}
We begin by decomposing $P(S_{m+1})$ as follows

$$P(S_{m+1}) = P(S_m)(1-P_e(m))$$

Denote $P_{m}=P(S_{m})$. Take the natural logarithm of this and average over $S_m$ to obtain

$$ - \langle\log{\left(1-P_e(m)\right)}\rangle = \langle \log{P_m}\rangle - \langle \log{P_{m+1}} \rangle$$

Using $-\log{\left(1-P_e(m)\right)} \geq P_e(m)$, we obtain $ \langle \epsilon(m) \rangle \leq \langle \log{P_m}\rangle - \langle \log{P_{m+1}} \rangle$.

Now, using Markov's inequality, we know that $\mathbf{P}[P_e(m) > 0.6] < \frac{\langle\epsilon(m)\rangle}{0.6}$. By our assumption, $P_e(m) < E$. By letting $E'=-\log{(1-E)}$, and using the fact that $-\log{(1-x)}\leq x+x^2$ for $x<0.6$, we can write

$$-\langle \log{(1-P_e(m))} \rangle \leq \langle \epsilon(m)\rangle + \langle P_e(m)^2\rangle + \frac{\langle \epsilon(m) \rangle}{0.6}E'$$
As $P_e(m)\leq 1$, $\langle P_e(m)^2\rangle \leq \langle P_e(m)\rangle = \langle \epsilon(m)\rangle$, we obtain the first statement of the theorem.

To obtain the second, we use Chebysev's inequality, which says $\mathbf{P}[P_e(m) > 0.6] < \frac{\text{Var}(P_e(m))}{(0.6-\langle\epsilon(m)\rangle)^2}$. Following the same decomposition as above (into cases where $P_e(m)\leq 0.6$ and $P_e(m)>0.6$, we have

$$-\langle \log{(1-P_e(m))} \rangle \leq \langle \epsilon(m)\rangle + \langle P_e(m)^2\rangle + \frac{\text{Var}(P_e(m))}{(0.6-\langle\epsilon(m)\rangle)^2}E'$$

Now, $\langle P_e(m)^2\rangle = \text{Var}(P_e(m)) + \langle P_e(m) \rangle^2$ and $\langle P_e(m) \rangle^2 = o(\langle P_e(m) \rangle)$. Also, $\langle\epsilon(m)\rangle=o(1)$. Therefore, as long as $\text{Var}(P_e(m))=o(\langle \epsilon(m)\rangle)$, we have

$$-\langle \log{(1-P_e(m))} \rangle \leq \langle \epsilon(m)\rangle + o(\langle \epsilon(m)\rangle),$$
and the second part of the theorem follows.

\end{proof}

\subsection{Derivation of KL divergence inequality}
\label{proof:kl_div_inequality}

Consider a parametrized family of functions $\mathcal{F}=\{f_\theta\}_{\theta\in\Theta}$ parametrized by parameters $\theta \in \Theta$ (for example the set of functions expressible by a DNN). We can define the parameter-function map as in \citet{valle2018deep} $\mathcal{M}$ as:
\begin{align*}
    \mathcal{M} : \Theta &\to \mathcal{F}\\
    \theta &\mapsto f_\theta.
\end{align*}
We can then define the pre-image $\mathcal{M}^{-1}(f)$ which is the set of all $\theta$ which produce $f$ under $\mathcal{M}$. Furthermore, for any distribution $P$ on $\Theta$, we define $\bar{P}(f):=P(\mathcal{M}^{-1}(f))=\sum_{\theta\in\mathcal{M}^{-1}(f)}P(\theta)$

Let $Q$, and $P$ be two distributions on $\Theta$, then the KL divergence is
\begin{align*}
    KL(Q||P) &= -\sum_{\theta\in\Theta}Q(\theta)\log{\left(\frac{P(\theta)}{Q(\theta)}\right)}\\
    &=-\sum_{f\in\mathcal{F}}\bar{Q}(f)\sum_{\theta\in\mathcal{M}^{-1}(f)}\frac{Q(\theta)}{\bar{Q}(f)}\log{\left(\frac{P(\theta)}{Q(\theta)}\right)}\\
    &\geq - \sum_{f\in\mathcal{F}}\bar{Q}(f) \log{\left(\sum_{\theta\in\mathcal{M}^{-1}(f)}\frac{Q(\theta)}{\bar{Q}(f)}\frac{P(\theta)}{Q(\theta)}\right)}\\
    &= - \sum_{f\in\mathcal{F}}\bar{Q}(f) \log{\left(\frac{\bar{P}(f)}{\bar{Q}(f)}\right)}\\
    &=KL(\bar{Q}||\bar{P})
\end{align*}
where we the third line follows from Jensen's inequality. We thus see that the KL divergence between distributions in parameter space is no less than the KL divergence between the induced distributions in parameter space.

\section{Experimental details}
\label{experimental_details}

Here we describe in more detail the experiments carried out in \cref{experimental_results}. The results we plot for both test errors and PAC-Bayes bounds are averaged over 8 random initializations and samples of training set, except for some of the large $m$ points for which we we only used one random initialization due to limited computation time. We find that the variance in both the error and PAC-Bayes bound is very small and decreases for large $m$, as is expected from concentration inequalities (at least for the test error).

\subsection{Marginal likelihood calculation}
\label{app:marginal_likelihood}

The PAC-Bayes bound \cref{pac_bayes_bound} requires computing $P(C(S))$, the marginal likelihood of the data $S$ for prior of the Bayesian DNN, which we approximate  by the NNGP Gaussian process. More precisely, we follow the approach in \citet{valle2018deep,mingard2020sgd}, which we review in the following. Let $\tilde{P}$ be the distribution of the parameters of the Bayesian DNN, then
\begin{equation}
    P(S) = P_{\mathbf{\theta}\sim \tilde{P}} \left(\sigma(f_\mathbf{\theta}(x_1))=y_1,...,\sigma(f_\mathbf{\theta}(x_m))=y_m\right)
\end{equation}
where $f_\theta$ is the function the DNN expresses for those parameters (according to the \emph{parameter-function map} \citep{valle2018deep}). Because $y\in \{0,1\}$, we use a Bernoulli likelihood, with $P(y(x)=1)=p(x)$. The probability $p(x)$ is determined by $f_\theta(x)$ via a \emph{linking function}. We use a Heaviside linking function defined as
\begin{align}
    p(x)=\begin{cases}
        1\text{ if }f_\theta(x)>0 \\
        0\text{ otherwise}
    \end{cases}
\end{align}
$f_\theta$ itself is distributed according to a Gaussian process (the NNGP). The marginal likelihood $P(S)$ defined above for Bernoulli likelihood has no analytical expression, and we use the expectation-propagation (EP) approximation \citep{rasmussen2004gaussian}, as implemented in the GPy library \citep{gpy2014}. We use for $\tilde{P}$ the same intialization distribution used for training (see \cref{app:training_details}).

\subsection{Estimation of the kernel}
\label{app:kernel_estimation}

The NNGP itself is determined by the kernel of the Gaussian process. We use the method proposed in \citet{novak2018bayesian} to empirically estimate the kernel $K(x,x')$. We sample $M$ sets of parameters $\{\theta_m\}_{m=1}^M$ i.i.d., and compute the empirical estimate of the kernel by
\begin{equation}
    \tilde{K}(x,x') := \frac{\sigma_w^2}{Mn}\sum_{m=1}^M\sum_{c=1}^n (h^{L-1}_{\theta_m}(x))_c (h^{L-1}_{\theta_m}(x'))_c + \sigma_b^2
\end{equation}
where $(h^{L-1}_{\theta_m}(x))_c$ is the activations of the $c$th neuron in the last hidden layer ($L$ is the number of layers) for the network with parameters $\theta_m$ at input $x$, and $n$ is the number of neurons in the last hidden layer. $\sigma_w^2$ and $\sigma_b^2$ are the weight and bias variances, respectively. This can be derived from the expression, in equation 19\footnote{we don't have the spatial parameters $\alpha,\alpha'$ because we assume the last hidden layer is a ``flat layer'', and the last layer of weights if fully connected} in \citet{novak2018bayesian} by applying one step of the NNGP recurrence relation \citep{lee2017deep} to find the estimate $\tilde{K}$ of the real-valued network outputs $f_\theta(x)$ from the covariance of the previous layer's activations $K^{L-1}$, which simply corresponds to multiplying by $\sigma_w^2$ and adding $\sigma_b^2$.

In \citet{novak2018bayesian}, it is argued that this sampling incurs an error of order $1/\sqrt{Mn}$. In our experiments we use $M=0.1m$ and $n$ is defined by the architecture which we define in the next subsection

\subsection{Training, dataset, and architecture details}
\label{app:training_details}

\textbf{Training} For all the experiments, we train the networks with Adam, cross-entropy loss, using either batch size 32 or 256, with a learning rate of 0.01, and we train until the first time we reach exactly $0$ training error. The exception is VGG and CNN networks for which we used a learning rate between 1e-5 and 1e-5, which was necessary in order to converge to $0$ training error.

\textbf{Datasets}. We use binarized versions of five datasets: MNIST \citep{lecun-mnisthandwrittendigit-2010}, Fashion-MNIST \citep{xiao2017fashion}, EMNIST (ByClass split) \citep{cohen2017emnist}, KMNIST \citep{clanuwat2018deep}, and CIFAR10 (which we refer to as CIFAR) \citep{cifar10dataset}. The datasets where binarized by assigning label 0 to the first half of classes, and label 1 to the second half. The input images are normalized to lie in the range $[0,1]$.

\textbf{Architectures} We studied the following architectures
\begin{itemize}
    \item Fully connected networks (FCNs), with varying number of layers. The size of the hidden layers was 1024, and the nonlinearity was ReLU. The last layer was a single Softmax neuron. If the number of layers is not specified, we use $2$ hidden layers.
    \item Convolutional neural networks (CNNs), with varying number of layers. The number of filters was 1024, and the nonlinearity was ReLU. The output layer was a single Softmax neuron fully connected to a flattening layer. The filter sizes alternated between $(2,2)$ and $(5,5)$, and the padding between SAME and VALID, the strides were $1$ (same default settings as in the code for \cite{garriga2018convnets}). We used CNNs without pooling, and with two kinds of pooling: avg and max. Pooling was applied at every layer with a pooling size of $2$ and padding SAME, and globally at the last layer, before flattening. If the number of hidden layers is not specified, we use $4$ hidden layers.
    \item vgg16. Keras implementation (\url{https://keras.io/api/applications/vgg/#vgg16-function}). \cite{simonyan2014very}
    \item vgg19. Keras implementation (\url{https://keras.io/api/applications/vgg/#vgg19-function}). \cite{simonyan2014very}
    \item resnet50. Keras implementation (\url{https://keras.io/api/applications/resnet/#resnet50-function}). \cite{he2016deep}
    \item resnet101. Keras implementation (\url{https://keras.io/api/applications/resnet/#resnet101-function}). \cite{he2016deep}
    \item resnet152. Keras implementation (\url{https://keras.io/api/applications/resnet/#resnet152-function}). \cite{he2016deep}
    \item resnetv2\_50. Keras implementation (\url{https://keras.io/api/applications/resnet/#resnet50v2-function}). \cite{he2016identity}
    \item resnetv2\_101. Keras implementation (\url{https://keras.io/api/applications/resnet/#resnet101v2-function}). \cite{he2016identity}
    \item resnetv2\_152. Keras implementation (\url{https://keras.io/api/applications/resnet/#resnet152v2-function}). \cite{he2016identity}
    \item resnext50. Keras applications implementation (\url{https://github.com/keras-team/keras-applications/blob/master/keras_applications/resnext.py}). \cite{xie2017aggregated}
    \item resnetxt101. Keras applications implementation (\url{https://github.com/keras-team/keras-applications/blob/master/keras_applications/resnext.py}). \cite{xie2017aggregated}
    \item densenet121 Keras implementation (\url{https://keras.io/api/applications/densenet/#densenet121-function}). \cite{huang2017densely}
    \item densenet169 Keras implementation (\url{https://keras.io/api/applications/densenet/#densenet169-function}). \cite{huang2017densely}
    \item densenet201 Keras implementation (\url{https://keras.io/api/applications/densenet/#densenet201-function}). \cite{huang2017densely}
    \item mobilenetv2. Keras implementation (\url{https://keras.io/api/applications/mobilenet/#mobilenetv2-function}). \cite{howard2017mobilenets}
    \item nasnet. Keras implementation (\url{https://keras.io/api/applications/nasnet/#nasnetlarge-function}). \cite{zoph2018learning}
\end{itemize}

For vgg16,vgg19,resnets ,densenets, mobilenetv2, and nasnet, we used average global pooling in the last layer, unless otherwise specified. All networks weights, except normalization layer parameters, were initialized from a Gaussian with variance $\sigma_w^2 = 1.41$, and the biases were initalized with variance $\sigma_b^=0$. Normalization layers were initialized using their Keras default initialization.

\section{A note on binary versus multi-class classification}\label{binary_vs_multiclass}

In \cref{notation}, we narrowed the focus of our paper to the problem of binary classification. However, in practice many tasks involve multi-class classification, and losses other than the misclassification probability. Although it is beyond the scope of this paper to analyze how the different approaches extend to different losses and output spaces, here we briefly comment on some notable ones.

The VC dimension framework (section \ref{vc_dimension}) has an extension to multi-class classification in which the capacity grows with the Natarajan dimension rather than the VC dimension \citep{natarajan1989learning,daniely2011multiclass,shalev2014understanding}. There are also extensions to the other approaches we study in \cref{comparing_bounds}, including margin and stability bounds, which typically scale like $O(K^2/\sqrt{m})$, where $K$ is the number of classes \citep{li2018multi}. In particular, PAC-Bayesian analyses can be extended to the multi-class setting \citep{mcallester2013pac}. The marginal likelihood PAC-Bayes bound in \cref{pac_bayes_bound} could probably be extended to the multi-class setting by using the natural extensions of the missclassification loss and marginal likelihood to the multi-class setting. However, the computation of the marginal likelihood using the NNGP approximation (\cref{app:marginal_likelihood}) would be more involved. We leave such an extension for future work.

\section{Learning curves versus dataset for all architectures}
\label{extra_learning_cuves}




Here we show all the learning curves for the rest of architectures.  In \cref{fig:lc_resnets} and \cref{fig:lc_densenets} we show the learning curves for all the ResNets and DenseNets respectively, plotted separately for each dataset.  In \cref{fig:lc_all1} and \cref{fig:lc_all2} we show similar learning curve data, but plotted separately for each architecture.

\begin{figure}[H]
    \centering
    \includegraphics[width=1.0\textwidth]{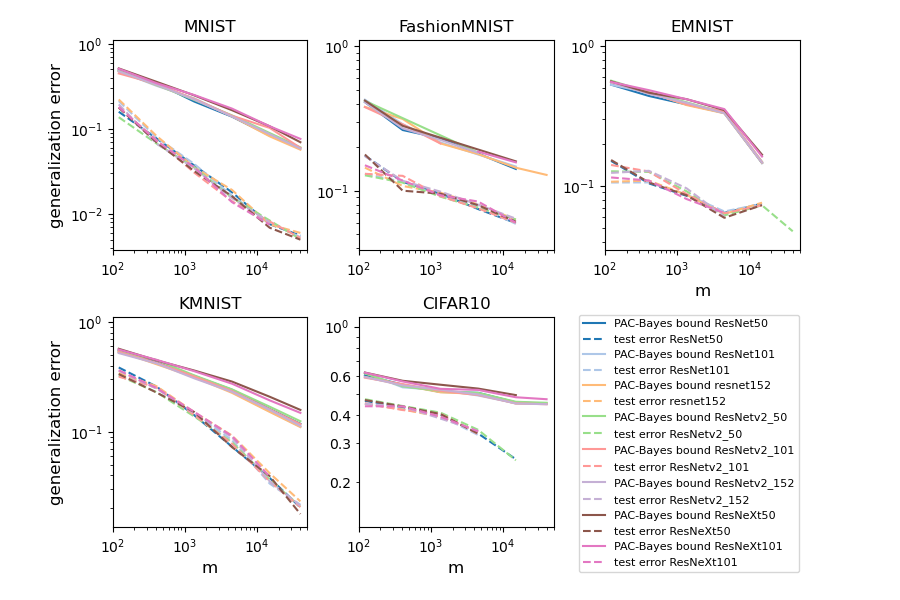}
    \caption{\textbf{Comparing different resnet architectures}. Learning curves for the test error and the PAC-Bayes bounds for different resnet architectures for different datasets. The DNNs were trained using Adam and batch size 32 to 0 training error.}
    \label{fig:lc_resnets}
\end{figure}

\begin{figure}[H]
    \centering
    \includegraphics[width=1.0\textwidth]{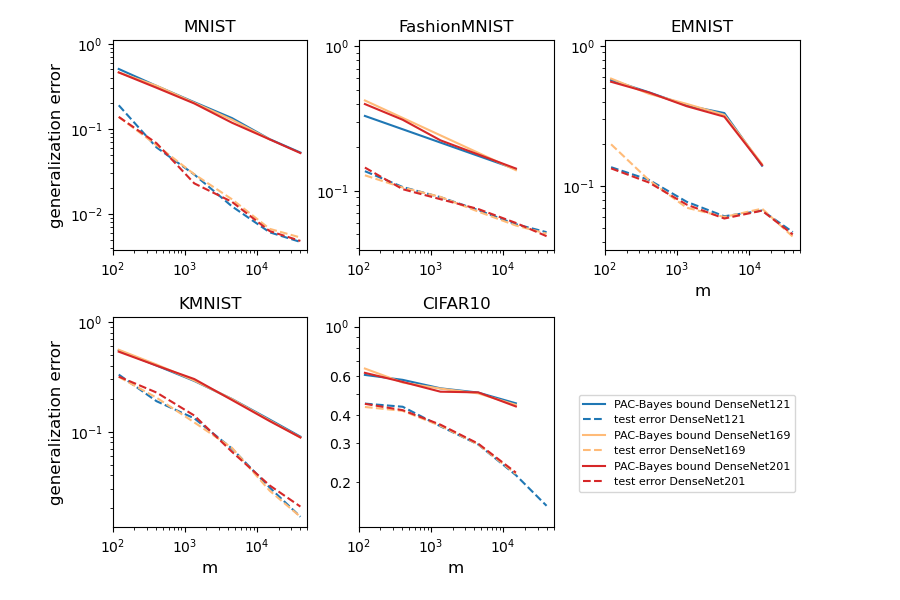}
    \caption{\textbf{Comparing different densenet architectures}. Learning curves for the empirical test error and the PAC-Bayes bounds for different densenet architectures for different datasets. The DNNs were trained using Adam and batch size 32 to 0 training error.}
    \label{fig:lc_densenets}
\end{figure}

\begin{figure}[H]
    \centering
    \includegraphics[width=1.0\textwidth]{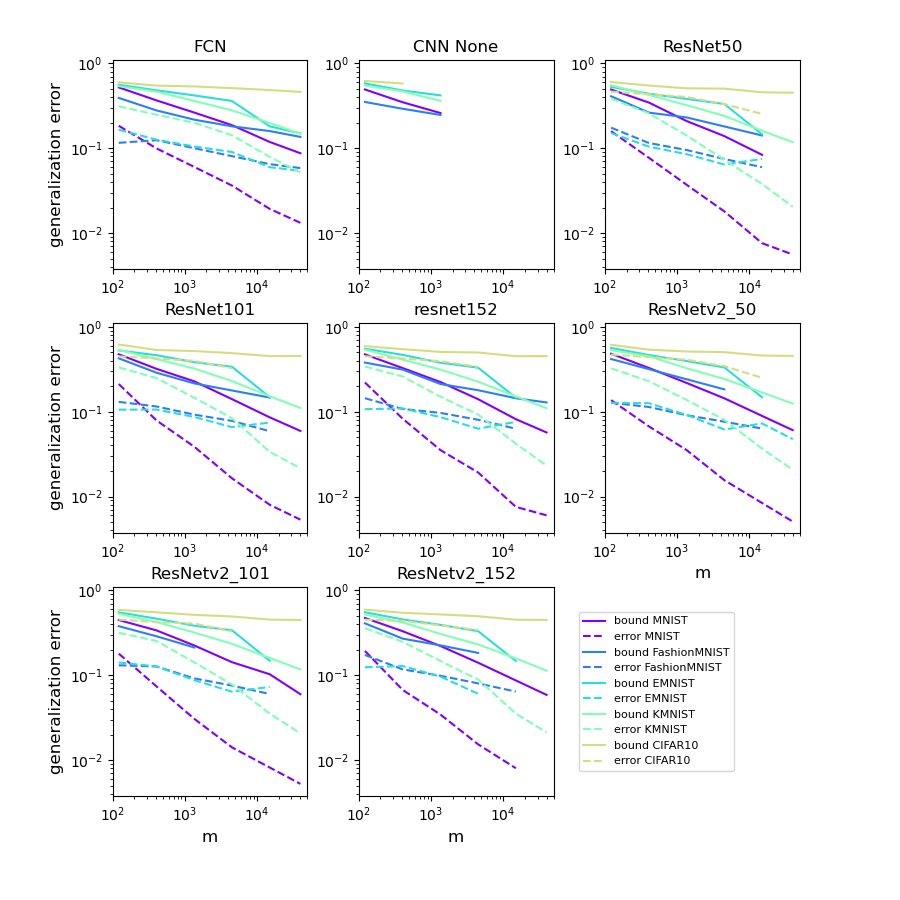}
    \caption{\textbf{Comparing datasets for different architectures (1/2)} Learning curves for the test error and the PAC-Bayes bounds for different architectures and for different datasets. The DNNs were trained using Adam and batch size 32 to 0 training error.}
    \label{fig:lc_all1}
\end{figure}

\begin{figure}[H]
    \centering
    \includegraphics[width=1.0\textwidth]{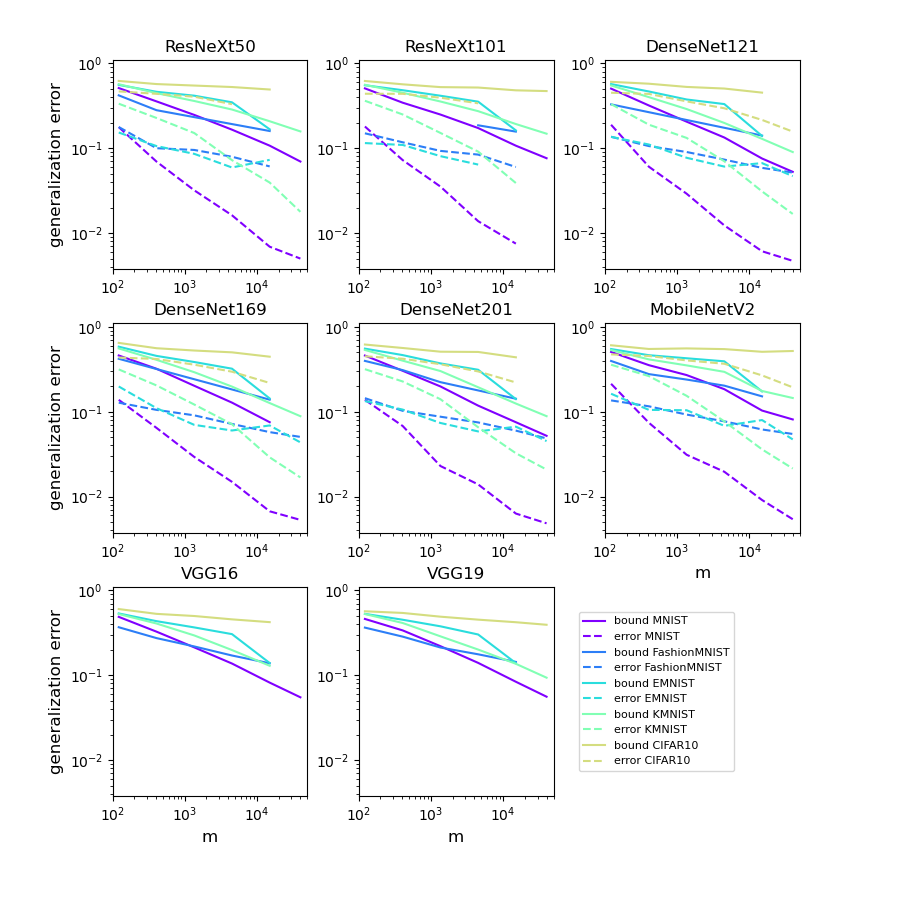}
    \caption{\textbf{Comparing datasets for different architectures (2/2)}Learning curves for the test error and the PAC-Bayes bounds for different architectures and for different datasets. The DNNs were trained using Adam and batch size 32 to 0 training error.}
    \label{fig:lc_all2}
\end{figure}

\section{Learning curves for batch size 256}
\label{app:lc_batch256}

In this section, we show the results of the same set experiments as in \cref{learning_curves}, but performed with batch size 256. Note that in some cases we have more or less data relative for batch 32, depending on which experiments could finish within our compute budget. We observe qualitatively similar results as for batch 32.

\begin{figure}[H]
    \centering
    \includegraphics[width=1.0\textwidth]{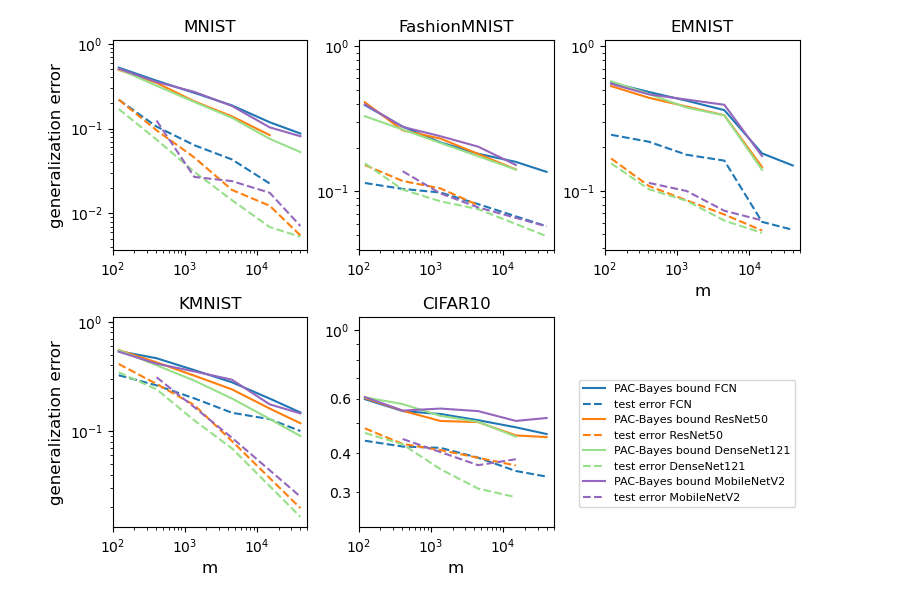}
    \caption{\textbf{Comparing different architecture families (batch size 256)} Learning curves for the test error and the PAC-Bayes bounds for representative architectures for different datasets. The DNNs were trained using Adam and batch size 256 to 0 training error. We see the different architectures show similar learning curve power law exponent, which is matched closely by the PAC-Bayes bound.}
    \label{fig:lc_somenets_256}
\end{figure}

\begin{figure}[H]
    \centering
    \includegraphics[width=1.0\textwidth]{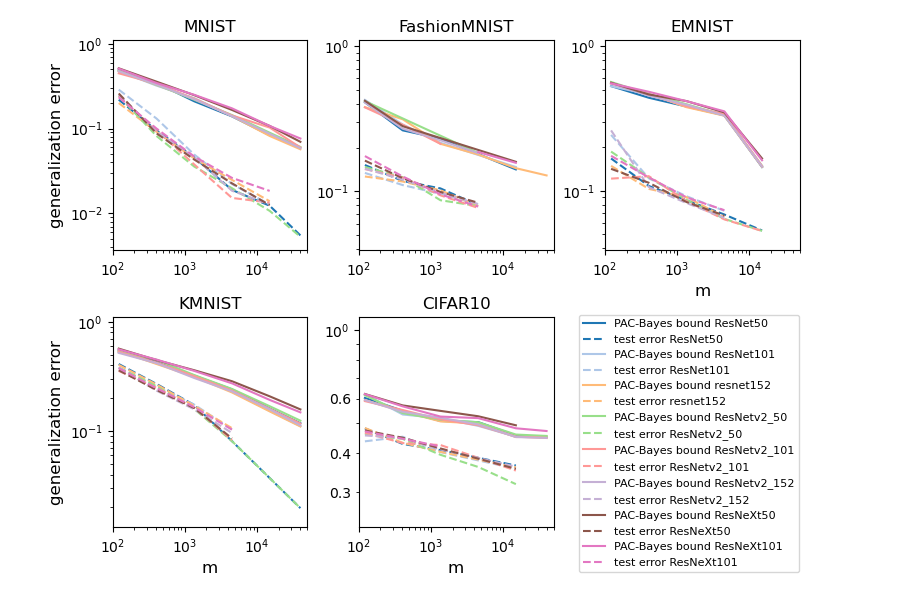}
    \caption{\textbf{Comparing different resnet architectures (batch size 256)} Learning curves for the test error and the PAC-Bayes bounds for different resnet architectures for different datasets. The DNNs were trained using Adam and batch size 256 to 0 training error.}
    \label{fig:lc_resnets_256}
\end{figure}

\begin{figure}[H]
    \centering
    \includegraphics[width=1.0\textwidth]{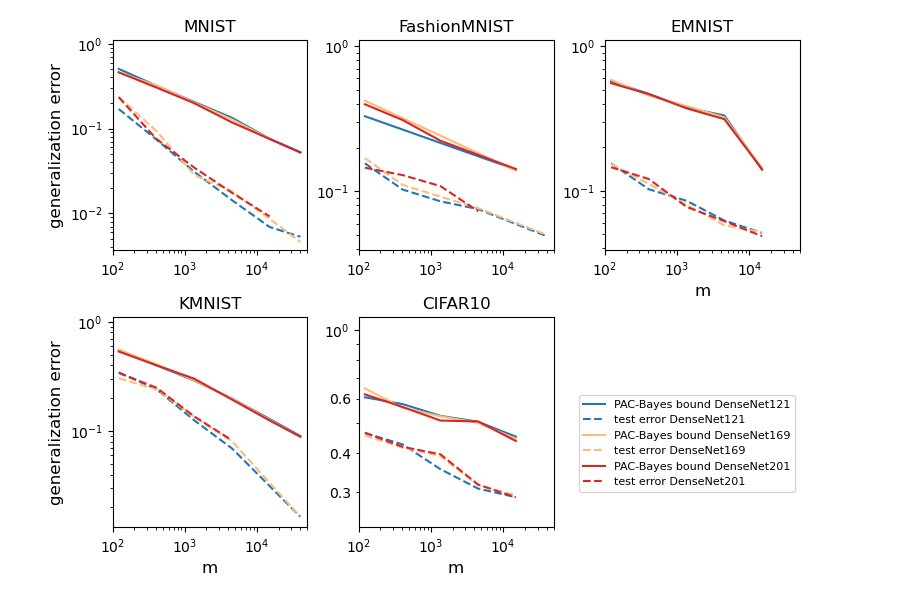}
    \caption{\textbf{Comparing different densenet architectures (batch size 256)} Learning curves for the empirical test error and the PAC-Bayes bounds for different densenet architectures for different datasets. The DNNs were trained using Adam and batch size 256 to 0 training error.}
    \label{fig:lc_densenets_256}
\end{figure}


\begin{figure}[H]
    \centering
    \includegraphics[width=1.0\textwidth]{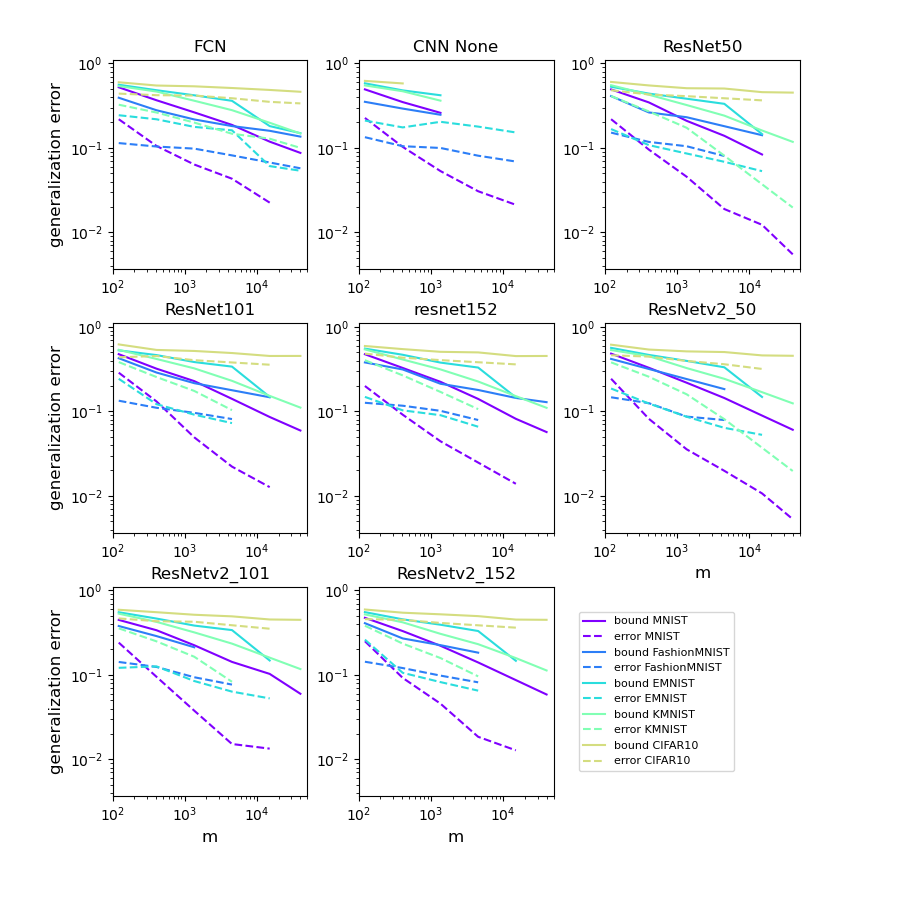}
    \caption{\textbf{Comparing datasets for different architectures (batch 256) (1/2)} Learning curves for the test error and the PAC-Bayes bounds for different architectures and for different datasets. The DNNs were trained using Adam and batch size 256 to 0 training error.}
    \label{fig:lc_all1_256}
\end{figure}

\begin{figure}[H]
    \centering
    \includegraphics[width=1.0\textwidth]{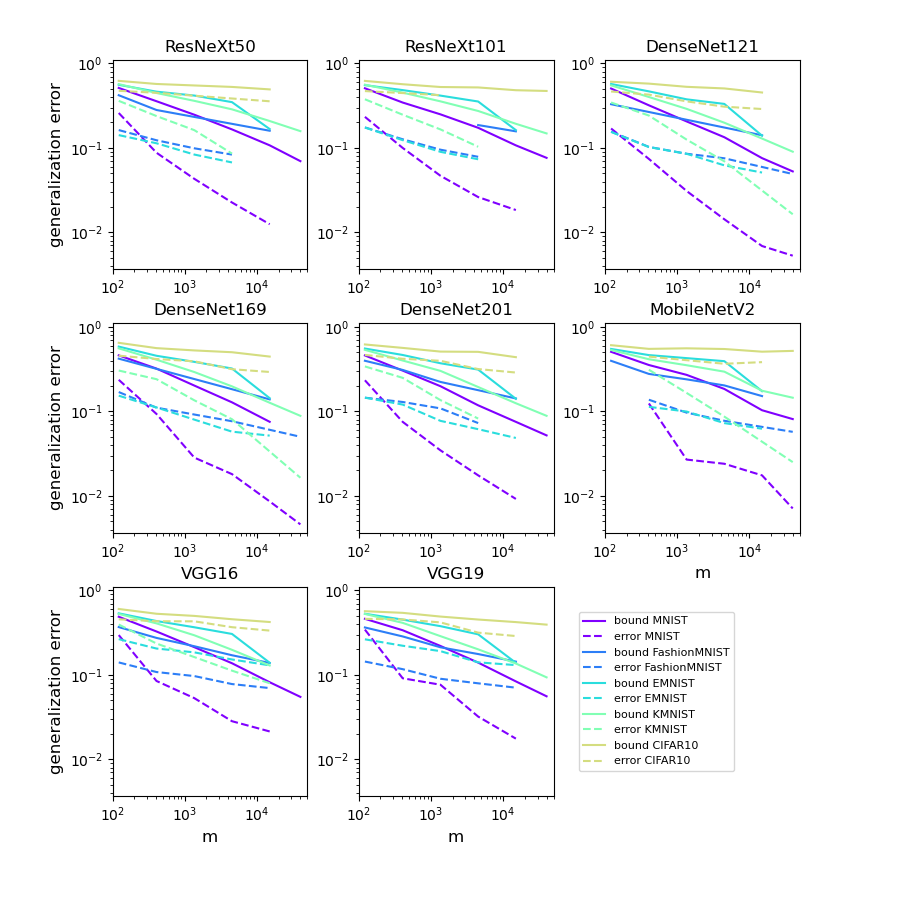}
    \caption{\textbf{Comparing datasets for different architectures (batch 256) (2/2)} Learning curves for the test error and the PAC-Bayes bounds for different architectures and for different datasets. The DNNs were trained using Adam and batch size 256 to 0 training error.}
    \label{fig:lc_all2_256}
\end{figure}


\begin{figure}[H]
    \centering
    \begin{subfigure}[t]{0.49\textwidth}
    \includegraphics[width=1.0\textwidth]{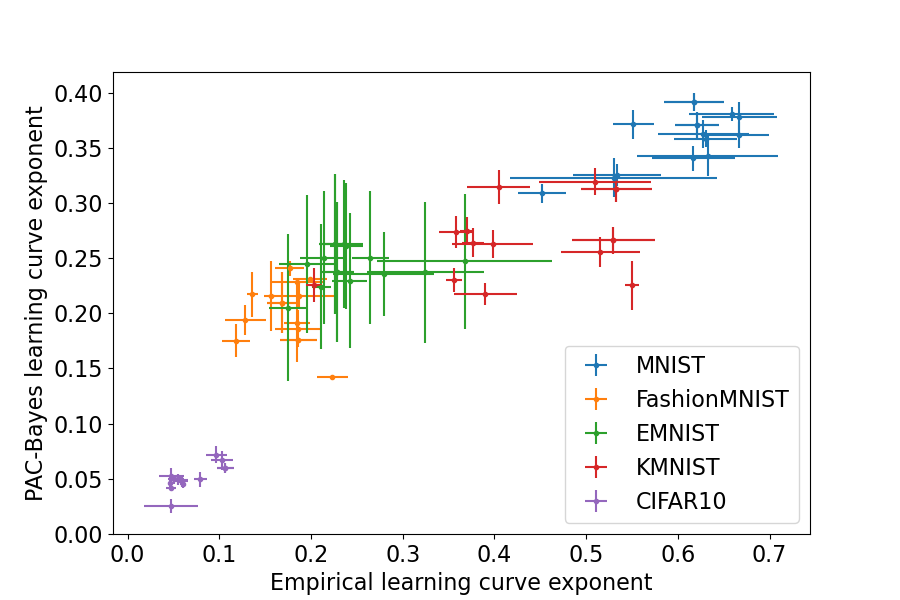}
    \caption{\label{fig:lc_exponents_estimation_exp_256}}
    \end{subfigure}
    \begin{subfigure}[t]{0.49\textwidth}
    \includegraphics[width=1.0\textwidth]{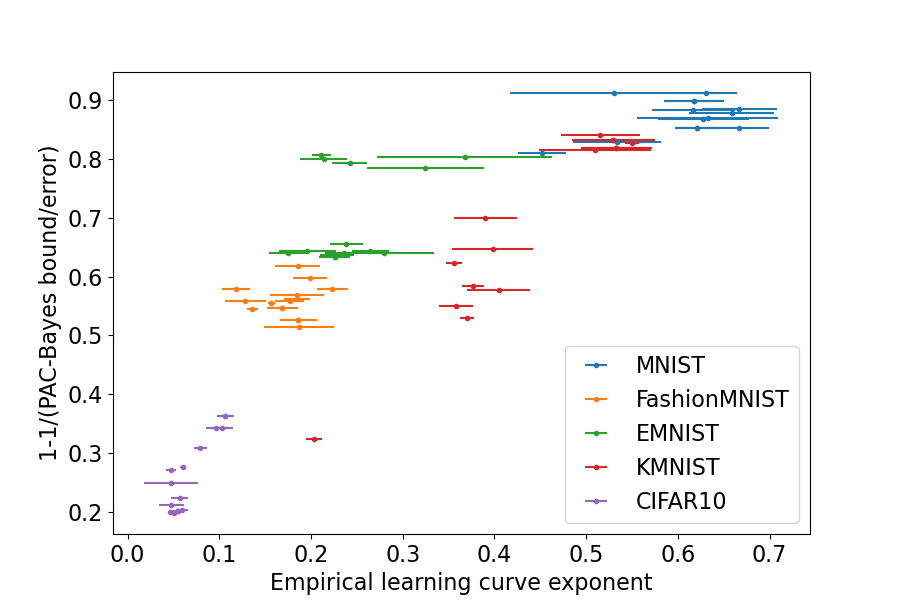}
    \caption{\label{fig:lc_exponents_estimation_ratio_256}}
    \end{subfigure}
    \caption{\textbf{Learning curve exponents of test error vs learning curve exponent estimated from PAC-Bayes bound for batch 256}. In both plots, each marker shows the learning curve exponent, $\alpha$ obtained from a linear fit to either the learning curve corresponding to $\log{\epsilon}$ vs $\log{m}$ or the estimated exponent from the PAC-Bayes bound, for all the architectures and datasets in \cref{experimental_details}. \textbf{(a)} The exponent is estimated from a linear fit to the log of the PAC-Bayes bound vs $\log{m}$. \textbf{(b)} The exponent is estimated from $C'$, which is the ratio of the PAC-Bayes bound and the error (see \cref{main_optimality_theorem}).
    The error bars are estimated standard errors from the linear fits. For the ratio estimate the errors due to fluctuations in dataset are negligible. Note that the errors do not take into account errors due to the EP approximation. Note that the exponents cluster according to dataset. The outliers for MNIST and KMNIST are both the FCN. The DNNs were trained using Adam and batch size 256 to 0 training error.}
    \label{fig:lc_exponents_estimation_256}
\end{figure}

\section{Further results comparing architectures}
\label{app:error_bound_256}

In this section, we show further experiments relevant to the effect of changing architectures that was described in  \cref{error_vs_arch}.  \cref{full_error_bound_256_all} shows the same experiments plotting the bound versus test error for different architectures, as in \cref{error_vs_arch}, but for batch size 256. The results also show correlation but we have less data available for those experiments. For completeness, we also show in \cref{app:error_bound_with_cnn_fc} and in \cref{app:error_bound_with_cnn} experiments plotting the bound versus test error, as in \cref{error_vs_arch}, but including the CNN and FCN (which are often outliers) for batch 32 and batch 256, repsectively. 

 
We also plotted the error and bounds for other values of $m$, and the results are similar, except that the correlation disappears for too small values of $m$, depending on the dataset. Furthermore, CNNs and FCNs tend to have relatively large values of test error which are sometimes not well captured by the bound, so we chose not to show them on these plots, as to be able to clearly see the variation in test error among the majority of architectures.

\begin{figure}[H]
    \centering
\begin{subfigure}[t]{0.49\textwidth}
    \includegraphics[width=1.0\textwidth]{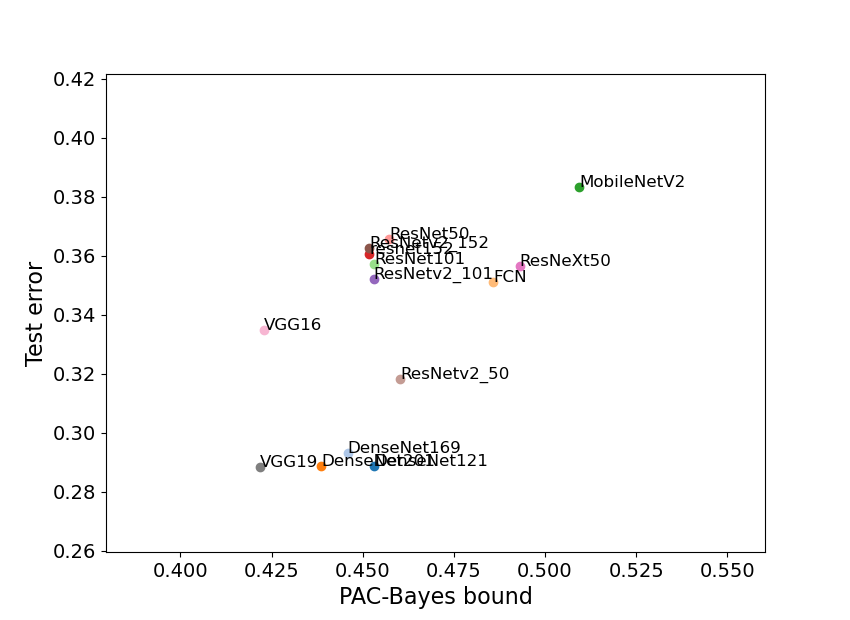}
    \caption{CIFAR10 \label{error_bound_256_cifar}}
\end{subfigure}
\foreach \dataset in {EMNIST,KMNIST,MNIST} {
\begin{subfigure}[t]{0.49\textwidth}
    \includegraphics[width=1.0\textwidth]{figures/bound_vs_error/batch256/without_cnn/bound_vs_error_\dataset_15026_256.png}
    \caption{\dataset \label{error_bound_256_\dataset}}
\end{subfigure}
}
\begin{subfigure}[t]{0.49\textwidth}
    \includegraphics[width=1.0\textwidth]{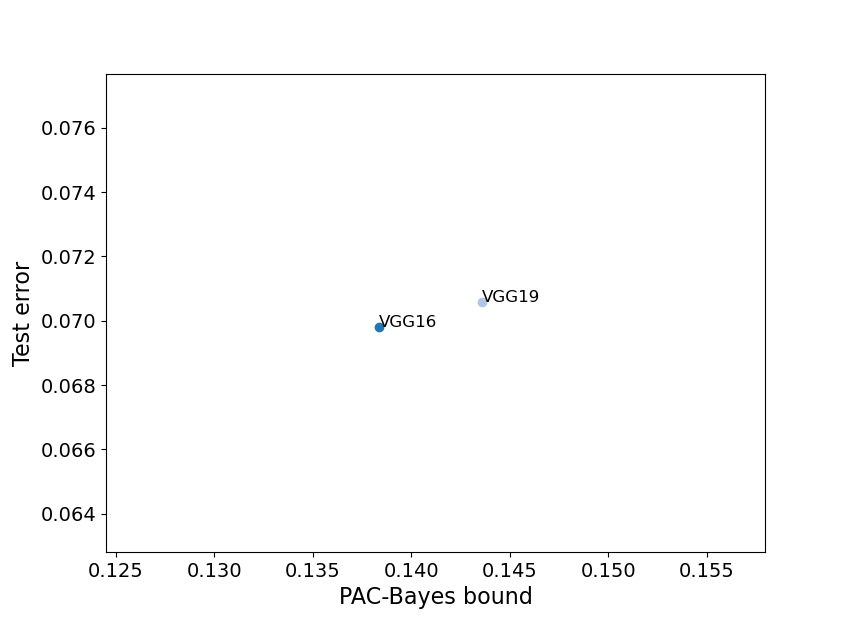}
    \caption{Fashion-MNIST \label{error_bound_256_fashion-mnist}}
\end{subfigure}
    \caption{\small{\textbf{PAC-Bayes bound versus test error for different models trained on a sample~of size 15k for the 5 datasets we study, with batch size 256}. CNNs are removed for clarity as they often have relatively extreme values of test error and/or bound. See \cref{app:error_bound_with_cnn} for plots with the CNNs included.}}
    \label{full_error_bound_256_all}
\end{figure}


\begin{figure}[H]
    \centering
\begin{subfigure}[t]{0.49\textwidth}
    \includegraphics[width=1.0\textwidth]{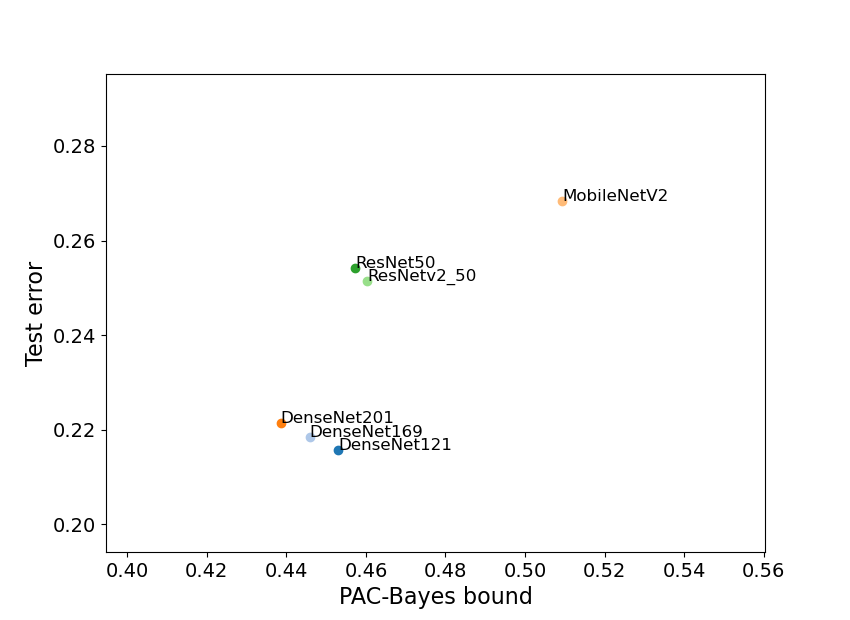}
    \caption{CIFAR10 \label{full_error_bound_32_with_cnn_fc_cifar}}
\end{subfigure}
\foreach \dataset in {EMNIST,KMNIST,MNIST} {
\begin{subfigure}[t]{0.49\textwidth}
    \includegraphics[width=1.0\textwidth]{figures/bound_vs_error/batch32/with_cnn_fc/bound_vs_error_\dataset_15026_32.png}
    \caption{\dataset \label{full_error_bound_32_with_cnn_fc_\dataset}}
\end{subfigure}
}
\begin{subfigure}[t]{0.49\textwidth}
    \includegraphics[width=1.0\textwidth]{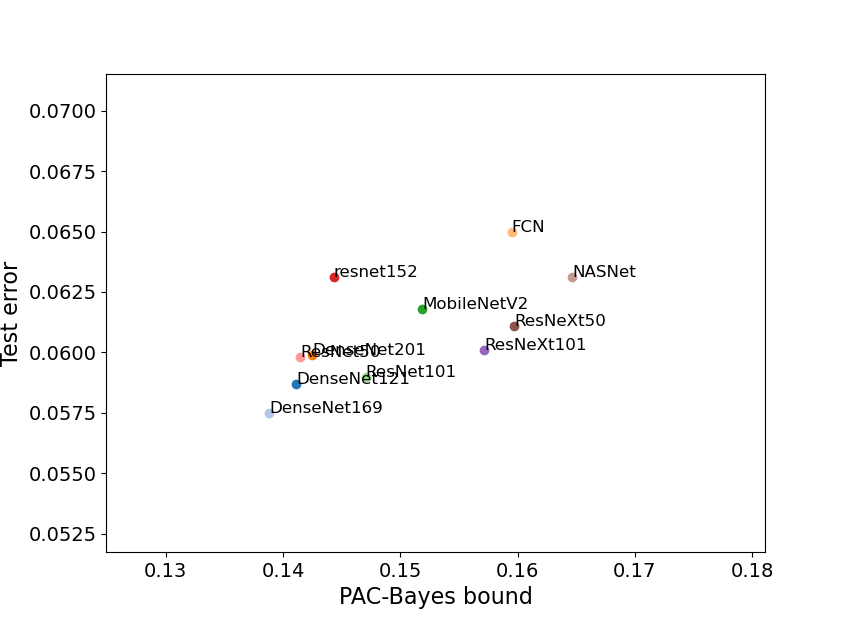}
    \caption{Fashion-MNIST \label{full_error_bound_32_with_cnn_fc_fashion-mnist}}
\end{subfigure}
    \caption{\small{\textbf{PAC-Bayes bound versus test error for different models trained on a sample~of size 15k for the 5 datasets we study, with batch size 32}. CNN and FC are included..}}
    \label{app:error_bound_with_cnn_fc}
\end{figure}


\begin{figure}[H]
    \centering
\begin{subfigure}[t]{0.49\textwidth}
    \includegraphics[width=1.0\textwidth]{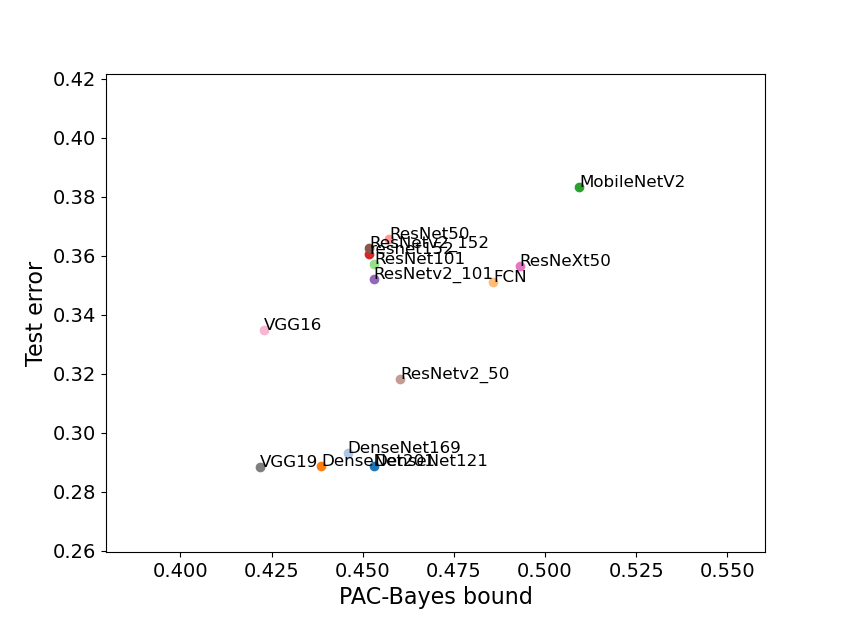}
    \caption{CIFAR10 \label{full_error_bound_256_with_cnn_cifar}}
\end{subfigure}
\foreach \dataset in {EMNIST,KMNIST,MNIST} {
\begin{subfigure}[t]{0.49\textwidth}
    \includegraphics[width=1.0\textwidth]{figures/bound_vs_error/batch256/with_cnn/bound_vs_error_\dataset_15026_256.png}
    \caption{\dataset \label{full_error_bound_256_with_cnn_\dataset}}
\end{subfigure}
}
\begin{subfigure}[t]{0.49\textwidth}
    \includegraphics[width=1.0\textwidth]{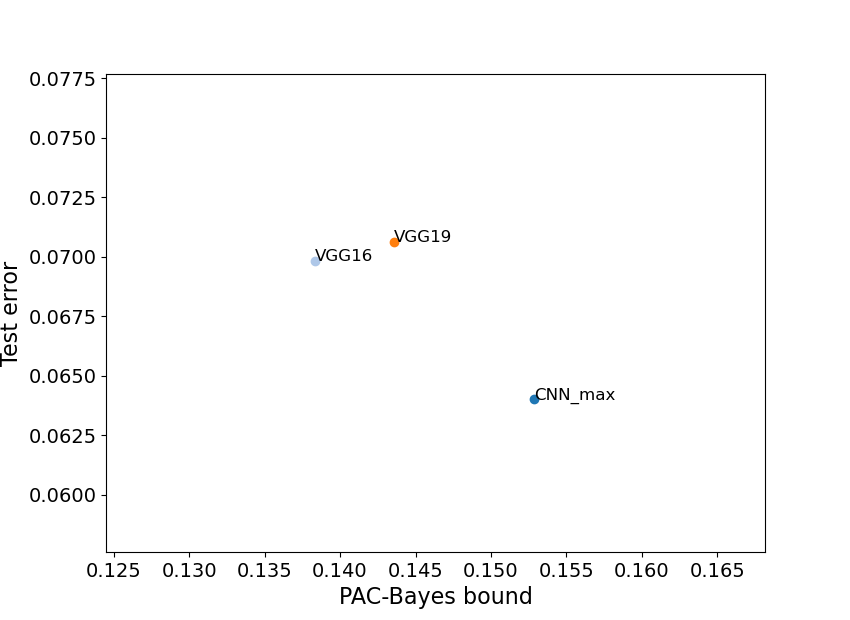}
    \caption{Fashion-MNIST \label{full_error_bound_256_with_cnn_fashion-mnist}}
\end{subfigure}
    \caption{\textbf{PAC-Bayes bound versus test error for different models trained on a sample~of size 15k for the 5 datasets we study, with batch size 256.} CNN are included}
   \label{app:error_bound_with_cnn}
\end{figure}

\end{document}